\theoremstyle{plain}
\newtheorem{theorem}{Theorem}[section]
\newtheorem{proposition}[theorem]{Proposition}
\newtheorem{lemma}[theorem]{Lemma}
\newtheorem{corollary}[theorem]{Corollary}
\theoremstyle{definition}
\newtheorem{definition}{Definition}
\newtheorem{assumption}[theorem]{Assumption}
\theoremstyle{remark}
\newtheorem{remark}{Remark}
\title{Bandit and Delayed Feedback \\in Online Structured Prediction}
\author{%
   Yuki Shibukawa\\
  The University of Tokyo\\
  Tokyo, Japan\\
  \texttt{shibu-yu762@g.ecc.u-tokyo.ac.jp}\\
  \And
  Taira Tsuchiya\\
  The University of Tokyo and RIKEN AIP\\
  Tokyo, Japan\\
  \texttt{tsuchiya@mist.i.u-tokyo.ac.jp}\\
  \AND
  Shinsaku Sakaue\thanks{This work was primarily conducted during the period when SS was affiliated with the University of Tokyo and RIKEN AIP.}\\
  CyberAgent\\
  Tokyo, Japan\\
  \texttt{shinsaku.sakaue@gmail.com}\\
  \And
  Kenji Yamanishi\\
  The University of Tokyo\\
  Tokyo, Japan\\
  \texttt{yamanishi@g.ecc.u-tokyo.ac.jp}\\
}
\begin{document}

\maketitle

\begin{abstract}
Online structured prediction is a task of sequentially predicting outputs with complex structures based on inputs and past observations, encompassing online classification. Recent studies showed that in the full-information setting, we can achieve finite bounds on the \textit{surrogate regret}, \textit{i.e.,}~the extra target loss relative to the best possible surrogate loss. In practice, however, full-information feedback is often unrealistic as it requires immediate access to the whole structure of complex outputs. Motivated by this, we propose algorithms that work with less demanding feedback, \textit{bandit} and \textit{delayed} feedback. For bandit feedback, by using a standard inverse-weighted gradient estimator, we achieve a surrogate regret bound of $O(\sqrt{KT})$ for the time horizon $T$ and the size of the output set $K$. However, $K$ can be extremely large when outputs are highly complex, resulting in an undesirable bound. To address this issue, we propose another algorithm that achieves a surrogate regret bound of $O(T^{2/3})$, which is independent of $K$. This is achieved with a carefully designed pseudo-inverse matrix estimator. Furthermore, we numerically compare the performance of these algorithms, as well as existing ones. Regarding delayed feedback, we provide algorithms and regret analyses that cover various scenarios, including full-information and bandit feedback, as well as fixed and variable delays.
\end{abstract}

\section{Introduction}
\vspace{-3pt}
\label{sec:introduction}
In many machine learning problems, given an input vector from a set $\xx$ of input vectors, we aim to predict a vector in a finite output space $\yy$.  
Multiclass classification is one of the simplest examples, while in other cases, output spaces may have more complex structures. 
\emph{Structured prediction} refers to such a class of problems with structured output spaces, including multiclass classification, multilabel classification, ranking, and ordinal regression, and it has applications in various fields, ranging from natural language processing to bioinformatics \citep{JMLR_Tsochantaridis_2005, bakir_2007_article}.
In structured prediction, training models that directly predict outputs in complex discrete output spaces is typically challenging. 
Therefore, we often adopt the \emph{surrogate loss framework} \citep{Bartlett_2006}---define an intermediate space of score vectors and train models that estimate score vectors from inputs based on surrogate loss functions.
Examples of surrogate losses include squared, logistic, and hinge losses, and a general framework encompassing them is the \emph{Fenchel--Young loss} \citep{JMLR_2020_blondel}, which we rely on in this study.

Structured prediction can be naturally extended to the online setting, called \emph{Online Structured Prediction} (OSP) \citep{pmlr-v247-sakaue24a}.  
In OSP, at each round $t=1,\dots,T$, an environment selects an input--output pair $(\xt,\yt)\in\xx\times\yy$.  
A learner then predicts $\yht \in \yy$ based on the input $\xt$ and incurs a loss $L(\yht;\yt)$, where $L\colon\yy\times\yy\to\R_{\geq0}$ is the target loss function. 
Following prior work \citep{NEURIPS2020_Hoeven,NEURIPS2021_Hoeven,pmlr-v247-sakaue24a},
we focus on the simple yet fundamental case where the learner's model for estimating score vectors is linear.\looseness=-1

\begin{table*}[t]
    \caption{Upper and lower bounds on the surrogate regret in online multiclass classification and OSP. 
    Here, $T$ is the time horizon, $K = \abs{\yy}$ is the size of the output space, and $D$ is the fixed-delay time.
    Delayed feedback is considered only when ``Delayed'' appears in the feedback column. 
    In the target loss column, ``SELF*'' means SELF that satisfies \cref{asp:self}.
    Note that the $O(T^{2/3})$ bounds for SELF* in lines 6 and 9 do not explicitly depend on $\K$ but on $d$; 
    in the case of multiclass classification with the 0-1 loss, the dependence on $\K$ appears as $d = \K$.
    }
    \label{tab: regret order}
    \centering
    \resizebox{\textwidth}{!}{
    \begin{tabular}{lllll}
      \toprule
      Setting
      & Reference & Feedback & Target loss& Surrogate regret bound
      \\
      \midrule
      \multirow{1}{8.5em}{Binary classification} 
      & \Citet[Cor.~1]{NEURIPS2021_Hoeven} & Graph bandit   & 0-1 loss & $\Omega(\sqrt{T})$ ($d=2$) 
      \\
      \midrule
      \multirow{1}{6em}{Multiclass classification}
      & \Citet[Thm.~4]{NEURIPS2020_Hoeven} & Bandit & 0-1 loss & $O(\K \sqrt{T})$ 
      \\  
      & \Citet[Thm.~1]{NEURIPS2021_Hoeven} & Bandit & 0-1 loss & $O(\K \sqrt{T})$
      \\
    \midrule
        \multirow{1}{6em}{Structured prediction}
      &
      \citet[Thms.~7 and 8]{pmlr-v247-sakaue24a} & Full-info & SELF & $O(1)$   
      \\
      &
      \textbf{This work} (\cref{thm:bandit_regret_expectation_abstract,thm:bandit_high_prob_formal}) & Bandit & SELF & $O(\sqrt{KT})$   
      \\
      &
      \textbf{This work}  (\cref{thm:bandit_regret_pseudo_estimator})  & Bandit & SELF* & $O(T^{2/3})$
      \\
      &
      \textbf{This work}  (\cref{thm:delayed_regret_expectation_abstract,thm:delayed_regret_probability_abstract_detail}) & Full-info \& Delayed  & SELF & 
      $O(D^2+1)$
      \\
      &
     \textbf{This work}  (\cref{thm:delayed_regret_expectation_abstract_order_d,thm:order_d}) & Full-info \& Delayed  & SELF & $O(D + 1)$  \\
     &
     \textbf{This work}  (\cref{thm:lower_bound_delay}) & Full-info \& Delayed  & SELF & $\Omega(D + 1)$  \\
    &
    \textbf{This work}  (\cref{thm:delay_bandit_bound_general_abstract}) & Bandit \& Delayed & SELF & $O(\sqrt{(K+D)T})$  \\
    &
    \textbf{This work} (\cref{thm:delay_bandit_bound_self_abstract})  & Bandit \& Delayed & SELF* & $O(D^{1/3}T^{2/3})$  \\
      \bottomrule
    \end{tabular}
    }
    \vspace{-12pt}
  \end{table*}

The goal of the learner is to minimize the cumulative loss $\sumt{L(\yht;\yt)}$. 
On the other hand, the best the learner can do in the surrogate loss framework is to minimize the cumulative surrogate loss, namely $\sumt{S(\U\xt;\yt)}$, where $\U\colon\xx\to\R^d$ is the best offline linear estimator and $S \colon \R^d\times\yy\to\R_{\geq0}$ is a surrogate loss, which measures the discrepancy between the score vector $\U\xt \in \R^d$ and $\yt\in\yy$. 
Given this, a natural performance measure of the learner's predictions is the surrogate regret, $\reg$, defined by
$
\sumt{L(\yht;\yt)}=\sumt{S(\U\xt;\yt)}+\reg
$.
It has recently attracted increasing attention following the seminal work by \Citet{NEURIPS2020_Hoeven} on online classification.
The surrogate regret is an appealing, data-dependent performance measure, as it can provide better bounds on the target loss when the surrogate loss of the best offline estimator, $\sum_{t=1}^T S(\bm{U}\bm{x}_t; \bm{y}_t)$, is smaller.
Further background and a comparison with the standard regret are provided in \Cref{app:surrogate_loss}.
Of particular relevance to our work, \citet{pmlr-v247-sakaue24a} recently obtained a finite surrogate regret bound for online structured prediction (OSP) under full-information feedback, \ie~when the learner observes~$\bm{y}_t$ at the end of each round $t$.\looseness=-1

However, the assumption that full-information feedback is available is often demanding, especially when outputs have complex structures.
For example, in sequential ad assortment on an advertising platform, we may be able to observe only the click-through rate but not which ads were clicked, which boils down to the \emph{bandit feedback} setting~\citep{Kakade2008EfficientBA,gentile14multilabel}. 
Also, we may only have access to feedback from a while ago when designing an ad assortment for a new user---namely, \emph{delayed feedback}~\citep{Weinberger_2002_delay,manwani2022delaytronefficientlearningmulticlass}.
Similar situations have led to a plethora of studies in various online learning settings.
In combinatorial bandits, algorithms under bandit feedback (referred to as full-bandit feedback in their context), instead of full-information feedback, have been widely studied~\citep{comband,combes15combinatorial,rejwan20topk,du21combinatorial}. 
Delayed feedback is also explored in various other settings, including full-information and bandit feedback~\citep{joulani13online,cesabianchi16delay}.
Due to space limitations, we defer a further discussion of the background to \cref{app:additional_related_work}.\looseness=-1

\paragraph{Our contributions}
To extend the applicability of OSP, this study develops OSP algorithms that can handle weaker feedback---bandit feedback and delayed feedback---instead of full-information feedback.  
Following the work of \citet{pmlr-v247-sakaue24a}, we consider the case where target loss functions belong to a class called the Structured Encoding Loss Function (SELF) \citep{ciliberto16consistent,NEURIPS2019_Blondel}, a general class that includes the 0-1 loss in multiclass classification and the Hamming loss in multilabel classification and ranking (see \cref{subsec:self} for the definition). 
\Cref{tab: regret order} summarizes the surrogate regret bounds provided in this study and comparisons with the existing results.

One of the major challenges in the bandit feedback setting is that the true output $\yt$ is not observed, making it impossible to compute the true gradient of the surrogate loss. 
To address this, we use an inverse-weighted gradient estimator, a common approach that assigns weights to gradients by the inverse probability of choosing each output, establishing an $O(\sqrt{\K T})$ surrogate regret upper bound (\cref{thm:bandit_regret_expectation_abstract,thm:bandit_high_prob_formal}), 
where $K = \abs{\yy}$ denotes the cardinality of $\yy$.  
This $O(\sqrt{\K T})$ bound has a desirable dependence on $T$, matching an $\Omega(\sqrt{T})$ lower bound known in a problem closely related to online multiclass classification with bandit feedback \Citep[Corollary 1]{NEURIPS2021_Hoeven}. 
Furthermore, our bound is better than the existing $O(\K\sqrt{T})$ bounds \citep{NEURIPS2020_Hoeven,NEURIPS2021_Hoeven} by a factor of $\sqrt{\K}$, although the bound of \Citet{NEURIPS2021_Hoeven} applies to a broader class of surrogate losses and thus it is not directly comparable to ours (see \cref{app:Discussio_on_the_Difference_in_Surrogate_Losses} for a more detailed discussion).
We also conduct numerical experiments on online multiclass classification and find that our methods, which apply to general OSP, are comparable to the existing ones specialized for multiclass classification (see \cref{subsec:experiment_multiclass}).

While the $O(\sqrt{\K T})$ bound is satisfactory when $K = \abs{\yy}$ is small, $K$ can be extremely large in some structured prediction problems. 
In multilabel classification with $m$ correct labels, we have $\K=\binom{d}{m}$, and in ranking problems with $m$ items, we have $\K=m!$.
To address this issue, we consider a special case of SELF (denoted by SELF* in \cref{tab: regret order}), which still includes the aforementioned examples: the 0-1 loss in multiclass classification and the Hamming loss in multilabel classification and ranking. 
A technical challenge to resolve the issue lies in designing an appropriate gradient estimator used in online learning methods.
To this end, we draw inspiration from pseudo-inverse estimators used in the adversarial linear/combinatorial bandit literature \citep{dani07price,abernethy08competing,comband}. 
Indeed, we cannot naively use the existing estimators, and hence we design a new gradient estimator that applies to various structured prediction problems with target losses belonging to the special SELF class.
Armed with this gradient estimator, we achieve a surrogate regret upper bound of $O(T^{2/3})$ (\cref{thm:bandit_regret_pseudo_estimator}). 
This successfully eliminates the explicit dependence on $K$, although the dependence on~$T$ increases compared to the~$O(\sqrt{KT})$ bound.
{We also numerically observe the benefit of this approach when $K$ is large, aligning with the implication of the theoretical bounds (see \cref{subsec:experiment_multilabel}).}

For the delayed feedback setting, the surrogate regret bounds depend on whether the delay time is fixed or variable. 
We here describe our results for the known fixed-delay time, denoted by~$D$, under the full-information setting.
It is relatively straightforward to obtain a surrogate regret bound of $O(\sqrt{(D + 1) T})$ with standard Online Convex Optimization (OCO) algorithms for the delayed feedback. 
We improve this to surrogate regret bounds of $O(\min\crl{D^2 + 1, (D + 1)^{2/3} T^{1/3}})$ (\cref{thm:delayed_regret_expectation_abstract,thm:delayed_regret_probability_abstract_detail}) and $O(D+1)$ (\cref{thm:delayed_regret_expectation_abstract_order_d,thm:order_d}), which are notably independent of $T$.
The proofs require carefully bridging different lines of research: OCO algorithms for delayed feedback~\citep{pmlr-v139-flaspohler21a,joulani13online} and surrogate regret analysis in OSP.
In addition, we provide the lower bound of  $\Omega(D+1)$ (\cref{thm:lower_bound_delay}), which matches the upper bound.

Given the contributions so far, it is natural to explore OSP in environments where both delay and bandit feedback are present. 
We develop algorithms for this setting by combining the theoretical developments for bandit feedback without delay and delayed full-information feedback, achieving surrogate regret bounds of $O(\sqrt{(D+K)T})$ (\cref{thm:delay_bandit_bound_general_abstract}) and $O(D^{1/3} T^{2/3})$ (\cref{thm:delay_bandit_bound_self_abstract}).
{It is worth noting that similar surrogate regret bounds can also be obtained in the variable-delay setting---where the delay may differ for each round of feedback---for both full-information and bandit feedback. 
}
Due to space constraints, most of the details are provided in \Cref{app:variable_delay}.

\section{Preliminaries}
\label{sec:preliminaries}
\vspace{-3pt}
This section describes the detailed setting of OSP and key tools used in this work: the Fenchel--Young loss, SELF, and randomized decoding.
\vspace{-3pt}
\paragraph{Notation}
For any integer $n > 0$, let $\brk{n} = \set{1,2,\hdots,n}$.
Let $\nrm{\cdot}$ denote a norm with $\kappa\nrm{\bmy}\geq\nrm{\bmy}_2$ for some $\kappa>0$ for any $\bmy\in\mathbb{R}^d$. 
For a matrix $\W$, let $\nrm{\W}_{\mathrm{F}}=\sqrt{\tr\prn*{\W^\top \W}}$ be the Frobenius norm. 
Let $\bm1$ denote the all-ones vector and $\bm{\ee}_i$ the $i$th standard basis vector.
For $\mathcal{\mathcal{K}}\subset \mathbb{R}^d$, let $\conv(\mathcal{K})$ be its convex hull and $I_{\mathcal{K}}\colon\mathbb{R}^d\to\set{0, +\infty}$ be its indicator function, which takes zero if the argument is contained in $\mathcal{K}$ and $+\infty$ otherwise.
For any function $\Omega\colon\mathbb{R}^d\to\mathbb{R}\cup\set{+\infty}$, let $\dom(\Omega) \coloneqq \set*{\bmy \in \mathbb{R}^d:\Omega(\bmy) < +\infty}$ be its effective domain and $\Omega^*(\thb) \coloneqq \sup\set*{\inpr{\thb, \bmy} - \Omega(\bmy):\bmy \in \mathbb{R}^d}$ be its convex conjugate.
\Cref{tab: notation} in \Cref{app: notation} summarizes the notation used in this paper.

\vspace{-2pt}
\subsection{Online structured prediction (OSP)}
\vspace{-2pt}
We describe the problem setting of OSP.
Let $\xx$ be the space of input vectors and $\yy$ be the finite set of outputs.
Define $\K \coloneqq |\yy|$.
Following the literature \cite{JMLR_2020_blondel,pmlr-v247-sakaue24a}, we assume that $\yy$ is embedded into $\mathbb{R}^d$ in a standard manner,
\eg~$\yy = \set{\bm{\mathrm{e}}_1,\hdots,\bm{\mathrm{e}}_d}$ in multiclass classification with $d$ classes.

Let $\ww$ be a closed convex domain.
A linear estimator $\W\in\ww$ transforms the input vector $\bm{x}$ into the score vector $\W\bm{x}$.  
In OSP, at each round $t=1,\dots,T$,
the environment selects an input $\xt\in\xx$ and the true output $\yt\in\yy$.
The learner receives $\xt$ and computes the score vector~$\tht=\wt\xt$ using the linear estimator~$\wt$.
Then, the learner selects a prediction $\yht$ based on $\tht$, which is called \emph{decoding}, and incurs a loss of $L(\yht;\yt)$.
Finally, the learner receives feedback, which depends on the problem setting, and updates $\wt$ to $\W_{t+1}$ using some online learning algorithm, denoted by $\alg$, which is applied to the surrogate loss function $\W \mapsto S(\W\xt; \yt)$.

The goal of the learner is to minimize the cumulative target loss $\sumt{L(\yht;\yt)}$, which is equivalent to minimizing the surrogate regret $\reg = \sumt{L(\yht;\yt)} - \sumt{S(\U\xt;\yt)}.$
We assume that the input and output are generated in an oblivious manner.  
Note that when $\yy = \set{\bm{\ee}_1, \dots, \bm{\ee}_d}$ and $L(\yht; \yt) = \ind[\yht \neq \yt]$, the above setting reduces to online multiclass classification, which was studied in prior work \cite{NEURIPS2020_Hoeven,NEURIPS2021_Hoeven}.
Let $B\coloneqq\diam(\ww)$ denote the diameter of $\ww$ measured by $\nrm{\cdot}_\F$ and $\dix\coloneqq\max_{\bm{x} \in \xx} \|\bm{x}\|_2$ the maximum Euclidean norm of input vectors in $\xx$.

The feedback observed by the learner depends on the problem setting.  
The most basic setting is the full-information setting, where the true output $\yt$ is observed as feedback at the end of each round $t$, extensively investigated by \citet{pmlr-v247-sakaue24a}.
By contrast, we study the following weaker feedback:
    In the \emph{bandit feedback} setting, only the value of the loss function is observed. 
    Specifically, at the end of each round $t$, the learner observes the target loss value $L(\yht;\yt)$ as feedback.  
    In the \emph{delayed feedback} setting, the feedback is observed with a certain delay.
    In this paper, we consider especially a fixed $D$-round delay setting, \ie~no feedback for round $t\leq D$, and for $t>D$, the learner observes either full-information feedback $\bmy_{t-D}$ or bandit feedback $L(\hat{\bm{y}}_{t-D}; \bm{y}_{t-D})$. 
    We also discuss the variable-delay setting in \cref{app:variable_delay}.
    \looseness=-1

In this paper, we make the following assumptions:
\begin{assumption}
    \label{asp:online_structured_prediction}
    (1)~There exists $\nu>0$ such that for any distinct $\bm{y},\bm{y}^\prime\in \mathcal{Y}$, it holds that $\|\bm{y}-\bm{y}^\prime\|\geq\nu$.  
    (2) For each $\bm{y}\in\mathcal{Y}$, the target loss function $L(\cdot;\bm{y})$ is defined on $\conv(\mathcal{Y})$, non-negative, and affine w.r.t.~its first argument.  
    (3) There exists $\gamma$ such that for any $\bm{y}^\prime\in\conv(\mathcal{Y})$ and $\bm{y}\in\mathcal{Y}$, it holds that $L(\bm{y}^\prime;\bm{y})\leq\gamma\|\bm{y}^\prime-\bm{y}\|$ and $L(\bmy^{\prime};\bmy)\leq 1$. 
    (4) It holds that $L(\bm{y}'; \bm{y})=0$ if and only if $\bm{y}'=\bm{y}$.\looseness=-1
\end{assumption}

As discussed in \cite[Section 2.3]{pmlr-v247-sakaue24a}, these assumptions are natural and hold for various structured prediction problems and target loss functions, including SELF (see \cref{subsec:self} for the formal definition).\looseness=-1

\vspace{-3pt}
\subsection{Fenchel--Young loss}\label{subsec:fenchel-young}
\vspace{-3pt}

We use the Fenchel--Young loss \citep{JMLR_2020_blondel} as the surrogate loss, which subsumes many representative surrogate losses, such as the logistic loss, Conditional Random Field (CRF) loss~\citep{lafferty01conditional}, and SparseMAP~\citep{Niculae18sparse}.
See \cite[Table 1]{JMLR_2020_blondel} for more examples. 
\begin{definition}[{\cite[Fenchel--Young loss]{JMLR_2020_blondel}}]
    \label{def: Fenchel--Young Loss}
    Let $\Omega\colon \mathbb{R}^d\rightarrow\mathbb{R}\cup\set{+\infty}$ be a regularization function with $\mathcal{Y}\subset\operatorname{dom}(\Omega)$.  
    The Fenchel--Young loss generated by $\Omega$, denoted by $S_\Omega\colon \operatorname{dom}(\Omega^\ast)\times\operatorname{dom}(\Omega)\rightarrow\mathbb{R}_{\geq0}$, is defined as
    $
    S_{\Omega}(\thb;\bmy)\coloneqq\Omega^\ast(\thb)+\Omega(\bmy)-\inpr{\thb,\bmy}.
    $
\end{definition}
The Fenchel--Young loss has the following useful properties:
\begin{proposition}[{\cite[Propositions 2 and~3]{JMLR_2020_blondel} and \cite[Proposition 3]{pmlr-v247-sakaue24a}}]
    \label{prop:fenchel}
    Let $\Psi\colon \mathbb{R}^d\!\to\!\mathbb{R}\cup\set{+\infty}$ be a differentiable, Legendre-type function\footnote{
    A function $\Psi$ is called Legendre-type if, for any sequence $x_1,x_2,\hdots$ in $\operatorname{int}(\dom(\Psi))$ that converges to a boundary point of $\operatorname{int}(\dom(\Psi))$, it holds that $\lim_{i\to\infty}\|\nabla\Psi(x_i)\|_2=+\infty$.}  
    that is $\lambda$-strongly convex w.r.t.~$\|\cdot\|$, and suppose that $\conv(\mathcal{Y})\!\subset\!\dom(\Psi)$ and $\dom(\Psi^\ast)\!=\!\mathbb{R}^d$.  
    Define $\Omega\!=\!\Psi\!+\!I_{\conv(\yy)}$ and let $S_\Omega$ be the Fenchel--Young loss generated by $\Omega$.  
    For any $\thb\in\mathbb{R}^d$, we define the regularized prediction function as 
    $
        \yho(\thb)
        \coloneqq\argmax\{\inpr{\thb,\bmy}-\Omega(\bmy)\mid\bmy\in\mathbb{R}^d\}
        =\argmax\set{\inpr{\thb,\bmy}-\Psi(\bmy)\mid\bmy\in\conv(\mathcal{Y})}.
    $
    Then, for any $\bmy\in\mathcal{Y}$, $S_\Omega(\thb,\bmy)$ is differentiable w.r.t.~$\thb$, and it satisfies  
    $
    \nabla S_\Omega(\thb;\bmy)=\yho(\thb)-\bmy.
    $
    Furthermore, it holds that  
    $
    S_\Omega(\thb;\bmy)\geq\frac{\lambda}{2}\|\bmy-\yho(\thb)\|^2.
    $ 
\end{proposition}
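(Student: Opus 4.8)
The statement collects three facts about the Fenchel--Young loss $S_\Omega$ generated by $\Omega=\Psi+I_{\conv(\yy)}$: the equivalence of the two forms of $\yho$, the differentiability with $\nabla S_\Omega(\thb;\bmy)=\yho(\thb)-\bmy$, and the quadratic lower bound. Since these assemble standard properties of Fenchel--Young losses (as in the cited propositions), the plan is to verify each in turn, being careful about the role of the added indicator $I_{\conv(\yy)}$.

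First I would establish that $\yho(\thb)$ is well defined and that the two $\argmax$ expressions coincide. Because $\Omega=\Psi+I_{\conv(\yy)}$, the indicator forces the unconstrained maximization $\argmax\set{\inpr{\thb,\bmy}-\Omega(\bmy)\mid\bmy\in\R^d}$ to agree with the constrained one $\argmax\set{\inpr{\thb,\bmy}-\Psi(\bmy)\mid\bmy\in\conv(\yy)}$, giving the claimed equality. The objective $\bmy\mapsto\inpr{\thb,\bmy}-\Psi(\bmy)$ is strongly concave by $\lambda$-strong convexity of $\Psi$, and $\conv(\yy)$ is compact since $\yy$ is finite; hence the maximizer exists and is unique for every $\thb\in\R^d$. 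This simultaneously shows $\Omega^\ast(\thb)$ is finite for all $\thb$, i.e.\ $\dom(\Omega^\ast)=\R^d$.

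Next I would obtain the gradient. Writing $S_\Omega(\thb;\bmy)=\Omega^\ast(\thb)+\Omega(\bmy)-\inpr{\thb,\bmy}$, only $\Omega^\ast(\thb)$ depends on $\thb$, and by conjugate duality (Danskin's theorem, applicable because the supremum defining $\Omega^\ast$ is attained at the unique point $\yho(\thb)$) the conjugate is differentiable with $\nabla\Omega^\ast(\thb)=\yho(\thb)$. Differentiating then yields $\nabla S_\Omega(\thb;\bmy)=\yho(\thb)-\bmy$. The main obstacle here is the rigorous justification of differentiability of $\Omega^\ast$: I would invoke the standard result that the conjugate of a closed convex function is differentiable at $\thb$ precisely when the maximizer is unique, which holds by the strong-concavity argument above; the Legendre-type hypothesis and $\dom(\Psi^\ast)=\R^d$ ensure the framework is nondegenerate.

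Finally, for the quadratic lower bound I would let $\yhat\coloneqq\yho(\thb)$ and use optimality to write $\Omega^\ast(\thb)=\inpr{\thb,\yhat}-\Psi(\yhat)$. Since $\bmy\in\yy\subset\conv(\yy)$ gives $\Omega(\bmy)=\Psi(\bmy)$ and $\Omega(\yhat)=\Psi(\yhat)$, this simplifies $S_\Omega$ to $\Psi(\bmy)-\Psi(\yhat)-\inpr{\thb,\bmy-\yhat}$. Applying the $\lambda$-strong-convexity inequality $\Psi(\bmy)\geq\Psi(\yhat)+\inpr{\nabla\Psi(\yhat),\bmy-\yhat}+\frac{\lambda}{2}\nrm{\bmy-\yhat}^2$ reduces the bound to $S_\Omega(\thb;\bmy)\geq\frac{\lambda}{2}\nrm{\bmy-\yhat}^2-\inpr{\thb-\nabla\Psi(\yhat),\bmy-\yhat}$. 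The first-order optimality condition for $\yhat$ over $\conv(\yy)$ states $\inpr{\thb-\nabla\Psi(\yhat),\bmy-\yhat}\leq0$ for all $\bmy\in\conv(\yy)$, so the cross term is nonpositive and drops out, leaving $S_\Omega(\thb;\bmy)\geq\frac{\lambda}{2}\nrm{\bmy-\yho(\thb)}^2$. The only subtlety is handling the first-order condition as a variational inequality on the constraint set $\conv(\yy)$ rather than an interior stationarity condition, which is exactly why the optimality inequality, rather than $\nabla\Psi(\yhat)=\thb$, is used.
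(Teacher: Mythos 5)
Your proof is correct and takes essentially the same route as the proofs this paper relies on: the paper does not prove this proposition itself but imports it from the cited references, and your three steps (uniqueness of the maximizer from strong convexity and compactness of $\conv(\yy)$, differentiability of $\Omega^*$ with $\nabla\Omega^*(\thb)=\yho(\thb)$, and strong convexity plus first-order optimality for the quadratic bound) are exactly the standard arguments given there. The only point to make fully explicit is that writing $\nabla\Psi(\yho(\thb))$ presupposes $\yho(\thb)\in\operatorname{int}(\dom(\Psi))$, which is precisely what the Legendre-type hypothesis guarantees; alternatively, the last step can be phrased without $\nabla\Psi$ at all by noting that optimality gives $\thb\in\partial\Omega(\yho(\thb))$ and applying the $\lambda$-strong convexity of $\Omega$ directly.
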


\vspace{-3pt}
In what follows, let $S_t(\W)\coloneqq S_\Omega(\W\xt;\yt)$ for simplicity.
Importantly, from the properties of the Fenchel--Young loss, there exists some $b>0$ such that for any $\W\in\ww$,  
\begin{equation}\label{eq:St_smooth}
    \nrm{\nabla S_t (\W)}_{\mathrm{F}}^2\leq b S_t(\W).
\end{equation}
Indeed, from \cref{prop:fenchel},
we have 
$
\nrm{\nabla S_t(\W_t)}_\F^2
=
\nrm{\yho(\tht)-\yt}_2^2 \nrm{\xt}_2^2
\leq
\dix^2\kappa^2\nrm{\yho(\tht)-\yt}^2
\leq
\frac{2\dix^2\kappa^2}{\lambda}\sw
$, 
where we used $\nabla S_t(\wt) =  \prn{\hat{\bm{y}}_\Omega(\bm{\theta}_t) - \yt}\xt^\top$ and $\nrm{\cdot}_2 \leq \kappa \nrm{\cdot}$. 
Thus, \eqref{eq:St_smooth} holds with 
$b=2\dix^2\kappa^2/\lambda$.
Below, let $L_t(\bmy)\coloneqq L(\bmy;\yt)$ and 
$\G_t\coloneqq\nabla S_t(\wt)
=  \prn{\hat{\bm{y}}_\Omega(\bm{\theta}_t) - \yt} \bm{x}_t^\top
$.

\vspace{-3pt}
\subsection{Examples of structured prediction}\label{subsec:pre_examples} 
\vspace{-3pt}
We present several instances of structured prediction  
along with specific parameter values introduced so far; see \cite[Section 2.3]{pmlr-v247-sakaue24a} for further details.
\vspace{-3pt}
\paragraph{Multiclass classification}
Let $\yy=\set{\mathbf{e}_1,\dots,\mathbf{e}_d}$ and $\|\cdot\| = \|\cdot\|_1$.
As the 0-1 loss satisfies $L(\bmy;\e_i)=\ind\brk{\bmy\neq\e_i}=\sum_{j\neq i}y_j=\frac{1}{2}\prn{1-y_i+\sum_{j\neq i} y_j
}=\frac{1}{2}\nrm{\e_i-\bmy}_1$, we have $\gamma=\frac{1}{2}$ in \cref{asp:online_structured_prediction}. 
Also, $\nu=2$ holds since $\|\mathbf{e}_i - \mathbf{e}_j\|_1 = 2$ if $i \neq j$.
The logistic surrogate loss is a Fenchel--Young loss~$S_\Omega$ generated by the negative Shannon entropy $\Omega=\mathsf{H}^s+I_{\Delta_d}$ (up to a constant factor originating from the base of $\log$), where $\mathsf{H}^s(\bmy)\coloneqq-\sum_{i=1}^d y_i\log y_i$ and $\Delta_d$ is the $(d-1)$-dimensional probability simplex.
Since $\Omega$ is a $1$-strongly convex function w.r.t.~$\|\cdot\|_1$ on $\Delta_d$, we have $\lambda=1$.

\vspace{-3pt}
\paragraph{Multilabel classification}
Let $\yy=\set{0,1}^d$ and $\|\cdot\| = \|\cdot\|_2$.
When using the Hamming loss as the target loss function $L(\bmy^{\prime};\bmy)=\frac{1}{d}\sum_{i=1}^{d}\ind\brk{y_{i}^{\prime}\neq y_i}$, \cref{asp:online_structured_prediction} is satisfied with $\nu=1$ and~$\gamma=\frac{1}{d}$.
The SparseMAP surrogate loss $S_\Omega(\thb,\bmy)=\frac{1}{2}\nrm{\bmy-\thb}_2^2-\frac{1}{2}\nrm{\yho(\thb)-\thb}_2^2$ is a Fenchel--Young loss generated by $\Omega=\frac{1}{2}\nrm{\cdot}^2+I_{\conv(\yy)}$.
Since $\Omega$ is 1-strongly convex w.r.t.~$\|\cdot\|_2$, we have $\lambda=1$.\looseness=-1

\vspace{-3pt}
\paragraph{Ranking}
We consider predicting the ranking of $m$ items. 
Let $\nrm{\cdot} = \nrm{\cdot}_1$, $d=m^2$, and $\yy\subset\set{0,1}^d$ be the set of all vectors representing $m \! \times \! m$ permutation matrices.  
We use the target loss function that counts mismatches, $L(\bmy^{\prime};\bmy)=\frac{1}{m}\sum_{i=1}^{m}\ind\brk{y_{i,j_i}^{\prime}\!\neq \!y_{i,j_i}}$, where $j_i\in[m]$ is a unique index with $y_{ij_i}=1$ for each $i\in[m]$.  
In this case, \cref{asp:online_structured_prediction} is satisfied with $\nu=4$ and $\gamma=\frac{1}{2m}$.  
We use a surrogate loss given by $S_\Omega(\thb;\bmy) \! = \! \inpr{\thb,\yho(\thb)-\bmy}\!+\!\frac{1}{\zeta}\mathsf{H}^s(\yho(\thb))$, where $\Omega\!=\!-\frac{1}{\zeta}\mathsf{H}^s\!+\!I_{\conv(\yy)}$ and $\zeta$ controls the regularization strength. 
The first term in $S_\Omega$ measures the affinity between $\thb$ and $\bmy$, while the second term evaluates the uncertainty of $\yho(\thb)$.  
Since $\Omega$ is $\frac{1}{m\zeta}$-strongly convex, we have $\lambda=\frac{1}{m\zeta}$.
\looseness=-1

\vspace{-3pt}
\subsection{Structured encoding loss function (SELF)}\label{subsec:self}
\vspace{-3pt}
We introduce a general class of target loss functions, called the (generalized) Structured Encoding Loss Function (SELF) \citep{ciliberto16consistent,ciliberto20general,NEURIPS2019_Blondel}.
A target loss function is SELF if it can be expressed as  
\begin{equation}\label{eq:self}
    L(\yt; \yht)=\inpr{\yht,\V\yt+\bm{b}}+c(\yt),
\end{equation}
where $\bm{b} \in \R^d$ is a constant vector, $\V\in\R^{d\times d}$ is a constant matrix, and $c\colon \yy\to\R$ is a function.  
The following examples of target losses, taken from \cite[Appendix A]{NEURIPS2019_Blondel}, belong to the SELF class: 
\begin{itemize}[topsep=0pt,itemsep=0pt, partopsep=0pt, leftmargin=18pt]
    \setlength{\parskip}{0pt}
    \item Multiclass classification: the 0-1 loss is a SELF with $\V=\bm{1}\bm{1}^\top-\I$, $\bb=\bm{0}$, and $c(\bmy)=0$.  

    \item Multilabel classification: the Hamming loss, 
    $L(\bmy^{\prime};\bmy)\!=\!\frac{1}{d}\sum_{i=1}^{d}\ind\brk{y^{\prime}_{t,i}\!\neq\! y_{i}}$, is a SELF with $\V\!=\!-\frac{2}{d}\I$, $\bb\!=\!\frac{1}{d}\bm{1}$, and $c(\bmy)\!=\!\frac{1}{d}\inpr{\bmy,\bm{1}}$, where $c(\bmy)$ is constant if the number of correct labels is fixed.\looseness=-1

    \item Ranking: the Hamming loss  
    $L(\bmy^{\prime};\bmy)=\frac{1}{m}\sum_{i=1}^{m}\ind\brk{y_{i,j_i}^{\prime}\neq y_{i,j_{i}}}$, where $j_i\in[m]$ is a unique index with $y_{i,j_i}=1$ for each $i\in[m]$, is a SELF with  
    $\V=-\I/m$, $\bb=\bm{0}$, and $c(\bmy)=1$.
\end{itemize}
Following the work by \citet{pmlr-v247-sakaue24a}, we assume that the target loss function $L$ is a SELF.

\vspace{-3pt}
\subsection{Randomized decoding}\label{subsec:randomized_decoding}
\vspace{-3pt}
We employ the randomized decoding \citep{pmlr-v247-sakaue24a},  
which plays an essential role, particularly in deriving an upper bound independent of the output set size $K = \abs{\mathcal{Y}}$ in \cref{subsec:Bandit_Structured_Prediction_with_SELF}.
The randomized decoding (\cref{ALG: randomized decoding}) returns either the closest $\bm{y}^* \in \yy$ to $\hat{\bm{y}}_\Omega(\thb) \in \conv(\yy)$ (see \cref{prop:fenchel}) or a random $\widetilde{\bm{y}} \in \yy$ satisfying $\E[\widetilde{\bm{y}} \mid Z=1] = \hat{\bm{y}}_\Omega(\thb)$, where $Z$ follows the Bernoulli distribution with a parameter $p$.  
Intuitively, the parameter $p$ is chosen so that if $\hat{\bm{y}}_\Omega(\thb)$ is close to $\bm{y}^*$, the decoding more likely returns $\bm{y}^*$; otherwise, it more likely returns~$\widetilde{\bm{y}}$, reflecting uncertainty.  
\begin{wrapfigure}[13]{r}{0.50\textwidth}
    \vspace{-10pt}
    \begin{minipage}{0.50\textwidth}
        \begin{algorithm}[H]
        \caption{Randomized decoding $\phi_\Omega$}
        \label{ALG: randomized decoding}
            {\begin{algorithmic}[1]
                \Require {$\btheta\in\mathbb{R}^d$}
                    \State {Compute $\yho(\btheta)$ {defined in \cref{prop:fenchel}}}.
                    \State {$\bm{y}^\ast\leftarrow\argmin\{\|\bm{y}-\yho(\btheta)\|\::\:\bm{y}\in\mathcal{Y}\}$}.
                    \State {$\Delta^\ast\!\leftarrow\!\|\bm{y}^\ast-\yho(\btheta)\|,\:p\!\leftarrow\!\min\{1,2\Delta^\ast/\nu\}$}.
                    \State {Sample $Z \sim \mathrm{Ber}(p)$.}
                    \LineIf{$Z=0$}{$\yh\leftarrow\bm{y}^\ast$.}
                    \LineIf{$Z=1$}{$\yh\leftarrow\bm{\tilde{y}}$ where $\bm{\tilde{y}}$ is randomly drawn from $\yy$ so that $\E\brk*{\bm{\tilde{y}}|Z=1}=\yho(\btheta)$.}
                \Ensure{$\phi_\Omega(\thb)=\yh$}
            \end{algorithmic}}
        \end{algorithm}
    \end{minipage}
\end{wrapfigure}
A crucial property of the randomized decoding is the following lemma, which we use in the subsequent analysis:
\vspace{-2pt}
\begin{lemma}[{\cite[Lemma 4]{pmlr-v247-sakaue24a}}]
    \label{lem:expected_target_bound}
  For any $(\thb, \bmy) \in \mathbb{R}^d\times\yy$, the randomized decoding $\phi_\Omega$ satisfies
    $
    \E[L(\phi_\Omega(\thb);\bmy)] \leq \frac{4\gamma}{\lambda\nu} S_\Omega(\thb;\bmy)
    $,
  where the expectation is taken w.r.t.~the randomness of $\phi_{\Omega}$.
\end{lemma}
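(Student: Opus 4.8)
The plan is to combine the affinity of the target loss with the quadratic lower bound $S_\Omega(\thb;\bmy)\ge\frac{\lambda}{2}\nrm{\yho(\thb)-\bmy}^2$ from \cref{prop:fenchel}. Adopting the notation of \cref{ALG: randomized decoding}, write $\bm{y}^\ast$ for the nearest output, $\Delta^\ast=\nrm{\bm{y}^\ast-\yho(\thb)}$, $p=\min\set{1,2\Delta^\ast/\nu}$, and set $r\coloneqq\nrm{\yho(\thb)-\bmy}$. Because $\phi_\Omega(\thb)$ returns $\bm{y}^\ast$ with probability $1-p$ and a random $\widetilde{\bm{y}}$ with $\E[\widetilde{\bm{y}}\mid Z=1]=\yho(\thb)$ with probability $p$, and because $L(\cdot;\bmy)$ is affine by \cref{asp:online_structured_prediction}, taking expectations collapses to a single loss evaluation:
\[
\E[L(\phi_\Omega(\thb);\bmy)]=(1-p)L(\bm{y}^\ast;\bmy)+p\,L(\yho(\thb);\bmy)=L(\bm{z};\bmy),\qquad \bm{z}\coloneqq(1-p)\bm{y}^\ast+p\,\yho(\thb).
\]
Since $\bm{y}^\ast\in\yy$ and $\yho(\thb)\in\conv(\yy)$, the convex combination satisfies $\bm{z}\in\conv(\yy)$, so \cref{asp:online_structured_prediction} yields $\E[L(\phi_\Omega(\thb);\bmy)]\le\gamma\nrm{\bm{z}-\bmy}$. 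Combined with $S_\Omega\ge\frac{\lambda}{2}r^2$, the entire statement reduces to the purely geometric inequality $\nrm{\bm{z}-\bmy}\le\frac{2}{\nu}r^2$; indeed this gives $\E[L]\le\frac{2\gamma}{\nu}r^2\le\frac{2\gamma}{\nu}\cdot\frac{2}{\lambda}S_\Omega=\frac{4\gamma}{\lambda\nu}S_\Omega$.

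To establish $\nrm{\bm{z}-\bmy}\le\frac{2}{\nu}r^2$ I would distinguish whether the true output $\bmy$ equals the nearest output $\bm{y}^\ast$. If $\bmy=\bm{y}^\ast$, then $r=\Delta^\ast$ and $\nrm{\bm{z}-\bmy}=p\,\Delta^\ast$; since $p\le 2\Delta^\ast/\nu$ by construction, $p\,\Delta^\ast\le\frac{2}{\nu}(\Delta^\ast)^2=\frac{2}{\nu}r^2$ follows at once.

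The main obstacle is the remaining case $\bmy\neq\bm{y}^\ast$, where the loss at $\bm{y}^\ast$ no longer vanishes. Here I would invoke the output separation $\nrm{\bm{y}^\ast-\bmy}\ge\nu$ from \cref{asp:online_structured_prediction}, which via the triangle inequality forces $r\ge\nu-\Delta^\ast$, and decompose $\bm{z}-\bmy=(\yho(\thb)-\bmy)+(1-p)(\bm{y}^\ast-\yho(\thb))$ to obtain $\nrm{\bm{z}-\bmy}\le r+(1-p)\Delta^\ast$. It then suffices to check $\frac{2}{\nu}r^2-r\ge(1-p)\Delta^\ast$. The map $r\mapsto\frac{2}{\nu}r^2-r$ is increasing for $r\ge\nu/4$, and $r\ge\nu-\Delta^\ast\ge\nu/2$, so I may replace $r$ by its lower bound $\nu-\Delta^\ast$; substituting $\Delta^\ast=p\nu/2$ (which holds whenever $p<1$, while $p=1$ is trivial since then $1-p=0$ and $r\ge\nu/2$) reduces the required inequality to $2-p\ge p$, that is, to $p\le 1$, which always holds. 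The delicate point is precisely this last step: using the quadratic, rather than the linear, lower bound on $S_\Omega$ is what absorbs the extra $(1-p)\Delta^\ast$ term and yields the tight constant $4\gamma/(\lambda\nu)$ instead of a lossy factor of two.
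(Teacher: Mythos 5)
Your proof is correct. Note that the paper itself never proves this lemma---it is imported verbatim from \citet[Lemma~4]{pmlr-v247-sakaue24a}---so there is no in-paper proof to compare against; your argument is essentially a faithful reconstruction of the original one: affinity of $L(\cdot;\bmy)$ collapses the expectation to $L(\bm{z};\bmy)$ with $\bm{z}=(1-p)\bm{y}^\ast+p\,\yho(\thb)$, \cref{asp:online_structured_prediction} gives $L(\bm{z};\bmy)\le\gamma\nrm{\bm{z}-\bmy}$, and the case analysis on $\bmy=\bm{y}^\ast$ versus $\bmy\neq\bm{y}^\ast$, combined with the definition of $p$ and the quadratic lower bound $S_\Omega(\thb;\bmy)\ge\frac{\lambda}{2}r^2$, yields the constant $\frac{4\gamma}{\lambda\nu}$ exactly as you describe. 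The one step you assert without justification is $r\ge\nu/2$ in the subcase $p=1$ of $\bmy\neq\bm{y}^\ast$: there $\nu-\Delta^\ast$ may be smaller than $\nu/2$, so you additionally need the (immediate) fact that $r\ge\Delta^\ast$, which holds because $\bm{y}^\ast$ minimizes the distance from $\yho(\thb)$ over $\yy\ni\bmy$, and then $p=1$ forces $\Delta^\ast\ge\nu/2$, giving $r\ge\nu/2$ as claimed.
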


\vspace{-2pt}
\begin{remark}
In randomized decoding, computation of $\hat{\bm{y}}_\Omega(\bm{\theta})$ is the dominant cost, but we can compute it efficiently using a Frank--Wolfe-type algorithm (see \eg~\citet{pmlr-v134-garber21a_frank_wolfe} and \citet[Section~3.1]{pmlr-v247-sakaue24a} for details).
\end{remark}

\vspace{-3pt}
\section{Bandit feedback}
\vspace{-3pt}
\label{sec:bandit}
In this section, we present two OSP algorithms for the bandit feedback setting and analyze their surrogate regret.  
Our results can be extended to handle bandit and delayed feedback; see \cref{sec:bandit_and_delayed}. 
Here, we focus on the simpler case without delay to provide a clearer exposition of our core ideas.

\vspace{-3pt}
\subsection{Randomized decoding with uniform exploration}
\vspace{-3pt}
\begin{wrapfigure}{r}{0.40\textwidth}
    \vspace{-30pt}
    \begin{minipage}{0.40\textwidth}
        \begin{algorithm}[H]
        \caption{Randomized decoding with uniform exploration (RDUE) $\psi_\Omega$}
            \label{ALG:randomized decoding with uniform exploration}
            {
            \begin{algorithmic}[1]
                \Require{$\thb\in\R^n$, $q \in [0,1]$}
                    \State {Sample $X \sim \mathrm{Ber}(q)$.}
                    \LineIf{$X=0$}{${\hat{\bmy}}\leftarrow\phi_\Omega(\thb)$.}
                    \LineIf{$X=1$}{Sample $\bmy^\ast$ from $\yy$ uniformly at random and $\hat{\bmy}\!\gets\!\bmy^\ast$.}
                \Ensure{$\psi_\Omega(\thb)=\yh$}
            \end{algorithmic}}
        \end{algorithm}
    \end{minipage}
    \vspace{-10pt}
\end{wrapfigure}
We discuss the properties of our decoding function, called \emph{Randomized Decoding with Uniform Exploration (RDUE)}, which will be used in subsequent sections.  
As discussed in \cref{subsec:randomized_decoding}, the randomized decoding (\cref{ALG: randomized decoding}) was introduced as a decoding function~\citep{pmlr-v247-sakaue24a} for OSP with full-information feedback. 
However, naively applying it does not lead to a desired bound under bandit feedback due to the lack of exploration.  
We extend the randomized-decoding framework to handle bandit feedback effectively.\looseness=-1

RDUE (\cref{ALG:randomized decoding with uniform exploration}) is a procedure that, with probability $q \in [0,1]$, selects $\hat{\bmy}$ uniformly at random from $\yy$,  
and with probability $1-q$, selects the output of the randomized decoding.  
Let $p_t(\bm{y})$ be the probability that the output of RDUE, $\hat{\bm{y}}_t$, coincides with $\bm{y}$ at round $t$.  
Note that for any $\bm{y} \in \mathcal{Y}$, it holds that  
$
p_t(\bm{y})\geq \frac{q}{\K}
$ 
thanks to the uniform exploration.
Furthermore, similar to the property of the randomized decoding in \cref{lem:expected_target_bound}, RDUE satisfies the following property:
\begin{lemma}
    \label{lem:bound of randomized decoding with uniform exploration}
    For any $(\thb,\bm{y})\in\R^d\times\mathcal{Y}$, RDUE $\psi_\Omega$ satisfies
    $
        \E\brk*{L(\psi_\Omega(\thb);\bm{y})}\leq\frac{4\gamma}{\lambda\nu}(1-q)S_\Omega(\thb;\bmy)+q\frac{\K-1}{\K  },
    $
  where the expectation is taken w.r.t.~the internal randomness of RDUE. 
\end{lemma}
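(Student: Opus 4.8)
The plan is to decompose the expectation according to the Bernoulli coin $X\sim\mathrm{Ber}(q)$ that determines whether RDUE explores or defers to the randomized decoding, and to handle the two branches separately via the tower property. Conditioning on $X$ gives
\begin{equation*}
    \E\brk*{L(\psi_\Omega(\thb);\bm{y})}
    = (1-q)\,\E\brk*{L(\phi_\Omega(\thb);\bm{y})}
    + q\,\E_{\bm{y}'\sim\mathrm{Unif}(\yy)}\brk*{L(\bm{y}';\bm{y})},
\end{equation*}
where the first conditional expectation is over the internal randomness of $\phi_\Omega$ and the second is over the uniform draw. This reduces the claim to bounding each of the two branch contributions, which then recombine to match the two terms of the stated inequality.

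For the non-exploration branch ($X=0$), the output of RDUE is exactly $\phi_\Omega(\thb)$, so I would invoke \cref{lem:expected_target_bound} directly to obtain $\E[L(\phi_\Omega(\thb);\bm{y})]\leq\frac{4\gamma}{\lambda\nu}S_\Omega(\thb;\bm{y})$; multiplying by the branch probability $1-q$ reproduces the first term of the bound verbatim. For the exploration branch ($X=1$), I would expand the uniform expectation as $\frac{1}{\K}\sum_{\bm{y}'\in\yy}L(\bm{y}';\bm{y})$ and control it using the two structural properties of the target loss in \cref{asp:online_structured_prediction}: part~(4) forces the single diagonal term $\bm{y}'=\bm{y}$ to contribute $L(\bm{y};\bm{y})=0$, while part~(3) bounds each of the remaining $\K-1$ summands by $1$. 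This yields $\E_{\bm{y}'\sim\mathrm{Unif}(\yy)}[L(\bm{y}';\bm{y})]\leq\frac{\K-1}{\K}$, and multiplying by $q$ gives the second term.

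The proof is essentially routine, so I do not expect a substantive obstacle. The only point requiring care is that the exploration term must be controlled through the boundedness and vanishing-at-the-diagonal properties of $L$ (parts~(3) and~(4) of \cref{asp:online_structured_prediction}) rather than through the surrogate loss. In particular, explicitly accounting for the zero contribution of the correct output $\bm{y}$ is what produces the tight factor $\frac{\K-1}{\K}$ instead of a looser constant $1$; overlooking it would weaken the bound, though not its asymptotic character in $\K$.
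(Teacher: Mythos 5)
Your proposal is correct and follows essentially the same route as the paper: condition on the exploration coin, apply \cref{lem:expected_target_bound} to the non-exploration branch, and bound the uniform-exploration term by $\frac{K-1}{K}$. The only difference is cosmetic---you make explicit the use of \cref{asp:online_structured_prediction}(4) (vanishing of $L$ on the diagonal) to obtain the $\frac{K-1}{K}$ factor, a step the paper's proof leaves implicit behind its citation of $L(\cdot;\cdot)\leq 1$.
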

\vspace{-3pt}
\begin{proof}
With probability $1 - q$, the randomized decoding is used; otherwise, a uniformly random output is chosen. 
Thus, 
$
\E\brk*{L(\psi_\Omega(\thb);\bmy)}\leq(1-q)\E\brk*{L(\phi_\Omega(\thb);\bmy)}+q\frac{\K-1}{\K}
$
holds, where $\phi_\Omega$ is the randomized decoding and we used $\L(\cdot;\cdot)\leq 1$.  
Combining this with \cref{lem:expected_target_bound} completes the proof.
\end{proof}
\vspace{-3pt}
Based on this lemma, we make the following assumption for convenience:
\begin{assumption}
    \label{asp:bandit_a}
    There exists $a\in\prn{0,1}$ such that  
    $
    \expect{L_t(\yht)}\leq(1-a)\sw+q.
    $
    Here, $\expect{\cdot}$ denotes the conditional expectation given the past outputs, $\hat{\bmy}_1,\dots,\hat{\bmy}_{t-1}$.  
\end{assumption}
\vspace{-2pt}
This assumption can be satisfied by using RDUE for $a \leq 1-\frac{4\gamma}{\lambda\nu}(1-q)$  
if $\lambda>\frac{4\gamma}{\nu}(1-q)$, due to \cref{lem:bound of randomized decoding with uniform exploration}.  
In what follows, we set $a = 1-\frac{4\gamma}{\lambda\nu}$.
Note that $\lambda > \frac{4\gamma}{\nu} \ge \frac{4\gamma}{\nu}(1-q)$ holds in the cases of multiclass classification, multilabel classification, and ranking (see \cref{subsec:pre_examples} and \citep{pmlr-v247-sakaue24a} for details).
The purpose of this assumption is to ensure that a reduction in the surrogate loss leads to a proportional reduction in the target loss.

\vspace{-3pt}
\subsection{Online gradient descent}\label{subsec:ogd}

We use the adaptive Online Gradient Descent (OGD) algorithm \cite{streeter2010regretonlineconditioning} as $\alg$, which we apply to surrogate loss~$S_t$.
OGD updates $\wt$ to $\W_{t+1}$ by using the gradient $\G_t = \nabla S_t(\wt)$ and learning rate $\eta_t$ as
$
    \W_{t+1} \leftarrow \Pi_{\ww} \prn*{\wt - \eta_{t} \G_t},
$
where $\Pi_{\ww}(\bm Z) = \argmin_{\bm{X} \in \ww} \nrm{\bm X - \bm Z}_{\F}$.
OGD
achieves the following bound:\looseness=-1
\begin{lemma}[{\eg~\cite[Theorem 4.14]{orabona2023modernintroductiononlinelearning}}]
    \label{lem:ogd}
    Let $\eta_t={B}/{\sqrt{2 \sum_{i=1}^t\nrm{\G_i}_{\mathrm{F}}^2}}$.
    Then, OGD achieves 
    $
        \sumt{\prn{\sw-\su}} \le
        \sumt{\inpr{\G_t, \W_t - \U}}
        \leq \sqrt{2}B\sqrt{\sumt{\nrm{\G_t}_{\mathrm{F}}^2}}
    $
    for any $\U\in\ww$.
\end{lemma}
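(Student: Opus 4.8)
The plan is to establish the two inequalities in the claim separately. The left inequality is a direct consequence of convexity, while the right inequality is the standard regret analysis of adaptive OGD with a time-varying step size. First I would treat the left inequality: since $\W \mapsto S_t(\W) = S_\Omega(\W\xt;\yt)$ is a Fenchel--Young loss composed with the linear map $\W \mapsto \W\xt$, it is convex in $\W$, so the (sub)gradient inequality with $\G_t = \nabla S_t(\W_t)$ gives $\su \ge \sw + \inpr{\G_t,\, \U - \W_t}$ for every $\U \in \ww$. Rearranging to $\sw - \su \le \inpr{\G_t,\, \W_t - \U}$ and summing over $t = 1,\dots,T$ yields $\sumt{\prn{\sw - \su}} \le \sumt{\inpr{\G_t,\, \W_t - \U}}$, the first inequality.

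For the right inequality I would start from the update rule and the nonexpansiveness of the Euclidean projection $\Pi_{\ww}$ onto the closed convex set $\ww$. Since $\U \in \ww$, nonexpansiveness gives $\nrm{\W_{t+1} - \U}_{\mathrm{F}}^2 \le \nrm{\W_t - \eta_t \G_t - \U}_{\mathrm{F}}^2$. Expanding the right-hand side and isolating the inner-product term produces the per-round bound $\inpr{\G_t,\, \W_t - \U} \le \frac{1}{2\eta_t}\prn{\nrm{\W_t - \U}_{\mathrm{F}}^2 - \nrm{\W_{t+1} - \U}_{\mathrm{F}}^2} + \frac{\eta_t}{2}\nrm{\G_t}_{\mathrm{F}}^2$, which holds for each $t$.

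Summing this over $t$ leaves two sums to control. For the quasi-telescoping first sum, I would use that $\eta_t$ is nonincreasing in $t$ and apply Abel summation, then bound each $\nrm{\W_t - \U}_{\mathrm{F}}^2 \le B^2 = \diam(\ww)^2$ (both matrices lie in $\ww$); the rearranged sum collapses to $B^2/(2\eta_T)$. For the second sum, I would invoke the standard inequality $\sum_{t=1}^T a_t / \sqrt{\sum_{i=1}^t a_i} \le 2\sqrt{\sum_{t=1}^T a_t}$ with $a_t = \nrm{\G_t}_{\mathrm{F}}^2$. Substituting the prescribed step size $\eta_t = B/\sqrt{2\sum_{i=1}^t \nrm{\G_i}_{\mathrm{F}}^2}$, each of the two sums evaluates to $\frac{B}{\sqrt{2}}\sqrt{\sumt{\nrm{\G_t}_{\mathrm{F}}^2}}$, and adding them gives exactly $\sqrt{2}\,B\sqrt{\sumt{\nrm{\G_t}_{\mathrm{F}}^2}}$, completing the bound.

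I do not expect any conceptual obstacle, since every ingredient is classical; the only delicate point is the bookkeeping for the time-varying step size. Specifically, the Abel-summation step for the telescoping term relies on $1/\eta_t$ being nondecreasing so that the diameter bound $\nrm{\W_t - \U}_{\mathrm{F}}^2 \le B^2$ can be pulled out termwise, and the auxiliary sum inequality must be applied with the correct normalization so that both contributions balance to yield the clean constant $\sqrt{2}$. Verifying these constants is the main place where care is required.
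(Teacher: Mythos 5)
Your proof is correct: the convexity step, the projection nonexpansiveness, the telescoping argument with the nonincreasing step size, and the lemma $\sum_{t} a_t/\sqrt{\sum_{i\le t} a_i} \le 2\sqrt{\sum_t a_t}$ all combine with the stated $\eta_t$ to give exactly the constant $\sqrt{2}B$. This is essentially the same argument as in the reference the paper cites for this lemma (the standard adaptive OGD analysis), so there is nothing to add.
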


\vspace{-2pt}
\subsection{Algorithm based on inverse-weighted gradient estimator with $O(\sqrt{K T})$ regret}
\label{subsec:Bandit_Structured_Prediction_with_General_Losses}
We present an algorithm that achieves a surrogate regret upper bound of $O(\sqrt{\K T})$.
\vspace{-2pt}
\paragraph{Algorithm based on inverse-weighted gradient estimator}
In the bandit setting, the true output $\yt$ is not observed,  
and thus we need to estimate the gradient of $S_t(\W_t)$ required for updating $\wt$.  
To do this, we use the 
inverse-weighted gradient estimator 
$
    \hat{\G}_t\coloneqq\frac{\ind[\yht=\yt]}{p_t(\yt)}\G_t,
$
where $\G_t=\nabla S_t(\wt) = \prn{\hat{\bm{y}}_\Omega(\bm{\theta}_t) - \yt} \bm{x}_t^\top$.
Note that $\hat{\G}_t$ is unbiased, \ie~$\E_t\brk[\big]{\hat{\G}_t}=\G_t$.
We use RDUE with $q=B\sqrt{\K/T}$ as the decoding function (assuming $T \geq B^2 \K$ for simplicity).  
For $\alg$, we employ the adaptive OGD in \cref{subsec:ogd} with the learning rate of
$
\eta_t={B}/{\sqrt{2 \sum_{i=1}^t\nrm{\hat{\G}_i}_{\mathrm{F}}^2}}.
$

\begin{remark}\label{rem:zero-loss}
This study defines the bandit feedback as the value of the target loss function $L_t(\yht)$.
Note, however, that the above algorithm operates using only the weaker feedback of $\ind\brk*{\yht\neq\yt}$. 
\end{remark}

\vspace{-2pt}
\paragraph{Regret bounds and analysis}
The above algorithm achieves the following surrogate regret bound:
\begin{theorem}\label{thm:bandit_regret_expectation_abstract}
    The above algorithm achieves the surrogate regret of
    $
        \E\brk{\reg}\leq \prn*{\frac{b}{2a}+1}B\sqrt{\K T}.
    $
\end{theorem}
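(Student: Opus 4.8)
The plan is to reduce everything to the two scalar quantities $\mathcal{S} \coloneqq \sum_{t=1}^T \E[S_t(\W_t)]$ and $\mathcal{S}^\star \coloneqq \sum_{t=1}^T S_t(\U)$, and then close a self-bounding inequality. First I would take the conditional expectation $\E_t$ of the per-round target loss and invoke \cref{asp:bandit_a}, sum over $t$, and take the total expectation. Since $(\xt,\yt)$ is oblivious, $\E[S_t(\U)] = S_t(\U)$, so the surrogate regret obeys
\[
\E[\reg] \le (1-a)\mathcal{S} + qT - \mathcal{S}^\star = (\mathcal{S} - \mathcal{S}^\star) - a\mathcal{S} + qT .
\]
It thus remains to control $\mathcal{S} - \mathcal{S}^\star$ and trade it off against the negative term $-a\mathcal{S}$ coming from the assumption.

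Next I would bound $\mathcal{S} - \mathcal{S}^\star$ through OGD. Convexity of $S_t$ gives $S_t(\W_t) - S_t(\U) \le \inpr{\G_t, \W_t - \U}$ pathwise. Since $\W_t$ and $\G_t$ are determined by the past and $\hat{\G}_t$ is unbiased, the tower rule gives $\E_t[\inpr{\hat{\G}_t, \W_t - \U}] = \inpr{\G_t, \W_t - \U}$, so after taking total expectation I may replace $\G_t$ by the estimator $\hat{\G}_t$ actually fed to the algorithm. Applying \cref{lem:ogd} to the realized sequence $\hat{\G}_t$ and then Jensen's inequality to pass the expectation inside the concave square root yields
\[
\mathcal{S} - \mathcal{S}^\star \le \sqrt{2}\,B\,\sqrt{\E\Bigl[\textstyle\sum_{t=1}^T \nrm{\hat{\G}_t}_\F^2\Bigr]} .
\]
For the second moment I would use $\E_t[\ind[\yht=\yt]] = p_t(\yt)$ to get $\E_t[\nrm{\hat{\G}_t}_\F^2] = \nrm{\G_t}_\F^2 / p_t(\yt)$, then combine the uniform-exploration bound $p_t(\yt) \ge q/\K$ with the self-bounding smoothness $\nrm{\G_t}_\F^2 \le b\,S_t(\W_t)$ from \eqref{eq:St_smooth}, obtaining $\E[\sum_t \nrm{\hat{\G}_t}_\F^2] \le (b\K/q)\,\mathcal{S}$. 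Writing $c \coloneqq \sqrt{2}\,B\sqrt{b\K/q}$, this produces the self-bounding inequality $\mathcal{S} - \mathcal{S}^\star \le c\sqrt{\mathcal{S}}$.

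Finally I would combine the two displays. Substituting $\mathcal{S} - \mathcal{S}^\star \le c\sqrt{\mathcal{S}}$ (legitimate because its coefficient above is $+1 > 0$) gives $\E[\reg] \le c\sqrt{\mathcal{S}} - a\mathcal{S} + qT$, and maximizing the concave map $u \mapsto cu - au^2$ over $u = \sqrt{\mathcal{S}} \ge 0$ bounds $c\sqrt{\mathcal{S}} - a\mathcal{S} \le c^2/(4a)$. Plugging in $c^2 = 2B^2 b\K/q$ and the prescribed $q = B\sqrt{\K/T}$ makes $c^2/(4a) = \tfrac{b}{2a}B\sqrt{\K T}$ and $qT = B\sqrt{\K T}$, giving the claimed $(\tfrac{b}{2a}+1)B\sqrt{\K T}$.

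The main obstacle I anticipate is the self-bounding structure: the estimator's variance is controlled not by an absolute constant but by the algorithm's own cumulative surrogate loss $\mathcal{S}$ via smoothness, so the regret bound feeds back on itself. The delicate points are passing the expectation through the concave square root and through the random adaptive step-size normalization (Jensen) without spoiling the favorable $\sqrt{\K}$ dependence, and then balancing the positive $c\sqrt{\mathcal{S}}$ against the negative $-a\mathcal{S}$ term from \cref{asp:bandit_a} so that the exploration overhead collapses the $K$-dependence to $\sqrt{K}$ rather than $K$.
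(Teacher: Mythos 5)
Your proposal is correct and follows essentially the same route as the paper's proof: Assumption \ref{asp:bandit_a} to pass from target to surrogate loss with the $+qT$ exploration cost, convexity plus unbiasedness of $\hat{\G}_t$ to invoke the adaptive OGD bound (Lemma \ref{lem:ogd}) with Jensen, the second-moment bound $\E_t[\|\hat{\G}_t\|_\F^2]\le (bK/q)S_t(\W_t)$ via $p_t(\bm{y})\ge q/K$ and \eqref{eq:St_smooth}, and finally the self-bounding trade-off $c_1\sqrt{x}-c_2x\le c_1^2/(4c_2)$ with $q=B\sqrt{K/T}$. The only differences are notational (naming $\mathcal{S}$, $\mathcal{S}^\star$, $c$), and you even correct the paper's typo by using the right direction $p_t(\bm{y})\ge q/K$ rather than the misprinted $p_t(\bm{y})\ge K/q$.
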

\vspace{-2pt}
This upper bound achieves the rate of $\sqrt{T}$, which matches the existing surrogate regret upper bound for bandit multiclass classification in \citep{NEURIPS2021_Hoeven}.
Regarding the dependence on $\K$, our bound improves the existing $O(\K\sqrt{T})$ bound in \cite{NEURIPS2020_Hoeven,NEURIPS2021_Hoeven} by a factor of $\sqrt{\K}$.
Note, however, that the $O(\K\sqrt{T})$ bound in~\cite{NEURIPS2021_Hoeven} applies to a broader class of surrogate loss functions.
For example, in $K$-class classification, their bound applies to the base-$k$ logistic loss for $k \le K$, while ours is restricted to the base-$2$ logistic loss.
A more detailed discussion is given in \cref{app:Discussio_on_the_Difference_in_Surrogate_Losses}.
As for tightness, an $\Omega(\sqrt{T})$ lower bound is provided in \citep{NEURIPS2021_Hoeven} for the graph feedback setting, a variant of the bandit feedback model. 
This suggests that the $\sqrt{T}$ dependence would be close to being tight, although this lower bound does not directly apply to the bandit setting. 
Therefore, whether the rate of $\sqrt{T}$ is improvable or not is left open.\looseness=-1
\vspace{-3pt}
\begin{proof}[Proof of \cref{thm:bandit_regret_expectation_abstract}]
From the convexity of $S_t$ and the unbiasedness of $\hat{\G}_t$, 
$
    \E\brk{\sumt{\prn{\sw-\su}}}
    \leq
    \E\brk{\sumt{\inpr{\hat{\G}_t,\wt-\U}}}.
 $
From \cref{lem:ogd} and Jensen's inequality, this is further upper bounded as
$
    \E\brk{\sumt{\inpr{\hat{\G}_t,\wt-\U}}}
    \leq
    \sqrt{2}B\sqrt{\E\brk{\sumt{{\nrm{\hat{\G}_t}_{\mathrm{F}}^2}}}}
    \leq
    B\sqrt{\frac{2b\K }{q}\E\brk{\sumt{\sw}}},
$
where we used 
$
\E_t\brk{\|\hat{\G}_t\|_{\mathrm{F}}^2}
=
\frac{\|\G_t\|_{\mathrm{F}}^2}{p_t(\yt)}
\leq
\frac{\K }{q}\|\G_t\|_{\mathrm{F}}^2
\leq
\frac{b\K}{q}\sw,
$
which follows from $p_t(\bm{y}) \geq K /q$ and \eqref{eq:St_smooth}.
From \cref{asp:bandit_a},
$
    \E\brk{\reg}
    \leq
    \E\brk{\sumt{\prn*{(1-a)\sw-\su}}}+qT
    \leq 
    B\sqrt{\frac{2b\K }{q} \E\brk{\sumt{\sw}}}-a \E\brk{\sumt{\sw}}+qT
    \leq
    \frac{bB^2\K }{2aq}+qT
$,
where we used $c_1\sqrt{x}-c_2x\leq{c_1^2}/\prn{4c_2}$ for $x \geq 0$, $c_1\geq 0$, and $c_2>0$.
Plugging $q=B\sqrt{\K/T}$ into the last inequality completes the proof.\looseness=-1
\end{proof}
\vspace{-3pt}
We can also prove a surrogate regret bound of $O\prn{\sqrt{KT \log (1/\delta)} + \log(1/\delta)}$, which holds with probability $1-\delta$.
The precise statement and proof are provided in \cref{app:proof_bandit_high_prob}.
To prove this high-probability bound, we follow the analysis of \cref{thm:bandit_regret_expectation_abstract} and use Bernstein's inequality.  
To address the challenges posed by the randomness introduced by bandit feedback,  
we adopt an approach similar to that used in~\cite{NEURIPS2021_Hoeven}, and arguably, we have simplified their analysis.

\vspace{-3pt}
\subsection{Algorithm based on pseudo-inverse matrix estimator with $O(T^{2/3})$ regret}
\label{subsec:Bandit_Structured_Prediction_with_SELF}
We provide an algorithm with a new estimator that achieves a $K$-independent surrogate bound, and we identify the conditions and the class of loss functions under which this new estimator can be used.

While the surrogate regret bound of $O(\sqrt{KT})$ achieves the presumably tight dependence on $T$, the dependence on $K = \abs{\yy}$ is undesirable for general structured prediction.  
In fact, we have $\K=\binom{d}{m}$ in multilabel classification with $m$ correct labels and $\K=m!$ in ranking with $m$ items.  
To address this issue, we present an algorithm that significantly improves the dependence on $\K$ when the target loss function belongs to a special class of SELF~\eqref{eq:self} with the following additional assumptions:\looseness=-1
\begin{assumption}\label{asp:self}
(i) The matrix $\V$ is known and invertible, and $\bm{b}$ and $c(\cdot)$ are known and non-negative.
(ii) Let $\bm{Q} = \E_{\bm{y} \sim \mu} \brk{ \bm{y} \bm{y}^\top }$, where $\mu$ is the uniform distribution over $\yy$. At least one of the following two conditions holds: 
(ii-a) $\bm{Q}$ is invertible, or 
(ii-b) for any $\bm{y} \in \yy$, $\V \bm{y}$ lies in the linear subspace spanned by vectors in $\yy$. 
(iii) For some $\omega > 0$, it holds that 
$
    \tr \prn{ \V^{-1} \bm{Q}^+ \prn{\V^{-1}}^\top } \leq \omega.
$
(iv) For any $\yht\in\mathcal{Y}$ and $\yt\in\mathcal{Y}$, it holds that $|\inpr{\yht,\V\yt}| \leq 1$.
\end{assumption}
The first and fourth conditions are true in the examples in \cref{subsec:self}, assuming that the number of correct labels, $m$, is fixed in multilabel classification.
{(While the fourth condition does not hold when $m > d/2$ in multilabel classification, we can flip $0$ and $1$ in the labels to redefine the problem that satisfies the condition.)}
The second one holds if $\yy$ consists of $d$ linearly independent vectors or $\V$ is proportional to the identity matrix; either is true in the examples. 
Also, deriving reasonable bounds on $\omega$ in those examples is not difficult; see also \cref{app:SELF_upper_discussion_deferred} for details.

\looseness=-1

\vspace{-3pt}
\paragraph{Algorithm based on pseudo-inverse matrix estimator}
As with \cref{subsec:Bandit_Structured_Prediction_with_General_Losses}, we consider estimating the gradient.  
Let $\pt \coloneqq\E_{\bmy\sim p_t}[\bmy\bmy^\top]$ and define the estimator $\ytilde$ of $\yt$ by
$
    \ytilde\coloneqq\inverse{\V}\bm{P}_t^+\yht\inpr{\yht,\V\yt},
$
where $\bm{P}_t^+$ is the Moore--Penrose pseudo-inverse matrix of $\bm{P}_t$.
Note that, given that $\bm{b}$ and $c(\cdot)$ are known, we can compute 
$
\inpr{\yht,\V\yt}=L_t(\yht)-\inpr{\yht,\bm{b}}-c(\yt)
$.  
Importantly, $\ytilde$ is unbiased, \ie~$
\expect{\ytilde}=\yt
$
from the second requirement of \cref{asp:self}.

By using this $\ytilde$, we define the pseudo-inverse matrix estimator $\gtil$ by
$
    \gtil\coloneqq\prn*{\yho(\tht)-\ytilde}\xt^\top,
$
which is also unbiased, \ie~$
    \E_t\brk[\big]{\gtil}=\G_t.
$
Our estimator is inspired by those used in adversarial linear bandits and adversarial combinatorial full-bandits \citep{dani07price,abernethy08competing,comband}.

We use RDUE with $q=\prn{{4 \omega B^2\dix ^2}/{ T }}^{1/3}$ as the decoding function  
(assuming $T \geq 4 \omega B^2\dix ^2$ for simplicity).  
To update $\wt$, we use OGD in \cref{subsec:ogd} as $\alg$ with
$
\eta_t={B}/{\sqrt{2 \sum_{i=1}^t\nrm{\tilde{\G}_i}_{\mathrm{F}}^2}}.
$

\vspace{-3pt}
\paragraph{Regret bounds}
This algorithm achieves the following surrogate regret bound independent of $\K$:
\begin{theorem}
    \label{thm:bandit_regret_pseudo_estimator}
    The above algorithm achieves
    $
    \E\brk{\reg}
    =
    O\prn[\big]{ \omega^{1/3}T^{2/3} }.
    $
\end{theorem}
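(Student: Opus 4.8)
The plan is to follow the template of the proof of \cref{thm:bandit_regret_expectation_abstract} almost verbatim, replacing only the second-moment bound on the gradient estimator. Since $\gtil$ is unbiased, $\E_t\brk[\big]{\gtil}=\G_t$, and $\sw$ is convex, the same chain as before---convexity, \cref{lem:ogd}, and Jensen's inequality---gives
\[
\E\brk*{\sumt{\prn{\sw-\su}}}\le\E\brk*{\sumt{\inpr{\gtil,\wt-\U}}}\le\sqrt{2}B\sqrt{\E\brk*{\sumt{\nrm{\gtil}_{\mathrm{F}}^2}}}.
\]
Everything then reduces to controlling $\E_t\brk{\nrm{\gtil}_{\mathrm{F}}^2}$, and the whole point is that the pseudo-inverse estimator trades the $\K/q$ \emph{multiplicative} blow-up suffered by the inverse-weighted estimator for an $\omega/q$ \emph{additive} term, which is exactly what removes the explicit dependence on $\K$.

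For the variance I would first write $\nrm{\gtil}_{\mathrm{F}}^2=\nrm{\yho(\tht)-\ytilde}_2^2\nrm{\xt}_2^2\le\dix^2\nrm{\yho(\tht)-\ytilde}_2^2$, then expand the square and use the unbiasedness $\E_t\brk{\ytilde}=\yt$ to obtain $\E_t\brk{\nrm{\yho(\tht)-\ytilde}_2^2}\le\nrm{\yho(\tht)-\yt}_2^2+\E_t\brk{\nrm{\ytilde}_2^2}$. For the last term I would drop the scalar factor using $\abs{\inpr{\yht,\V\yt}}\le1$ (\cref{asp:self}(iv)) and compute, via $\E_t\brk{\yht\yht^\top}=\pt$ and the pseudo-inverse identity $\pt^+\pt\pt^+=\pt^+$,
\[
\E_t\brk*{\nrm{\ytilde}_2^2}\le\E_t\brk*{\nrm{\inverse{\V}\pt^+\yht}_2^2}=\tr\prn*{\inverse{\V}\pt^+\prn{\inverse{\V}}^\top}.
\]
The uniform-exploration component of RDUE yields $\pt\succeq q\bm{Q}$ with $\operatorname{range}(\pt)=\operatorname{range}(\bm{Q})=\operatorname{span}(\yy)$, so antitonicity of the Moore--Penrose inverse gives $\pt^+\preceq\frac1q\bm{Q}^+$, and \cref{asp:self}(iii) then yields $\tr(\inverse{\V}\pt^+(\inverse{\V})^\top)\le\omega/q$. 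Combining this with \eqref{eq:St_smooth} produces the key bound $\E_t\brk{\nrm{\gtil}_{\mathrm{F}}^2}\le\nrm{\G_t}_{\mathrm{F}}^2+\dix^2\omega/q\le b\sw+\dix^2\omega/q$.

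Summing the variance bound over $t$ and using subadditivity of the square root splits the right-hand side into $\sqrt{2}B\sqrt{b\,\E\brk{\sumt{\sw}}}$ and $\sqrt{2}B\dix\sqrt{\omega T/q}$. Plugging this into the surrogate-regret decomposition together with \cref{asp:bandit_a} (so that $\sumt{\E_t\brk{L_t(\yht)}}\le(1-a)\sumt{\sw}+qT$) and the elementary inequality $c_1\sqrt{x}-c_2x\le c_1^2/(4c_2)$ applied with $x=\E\brk{\sumt{\sw}}$ cancels the $\E\brk{\sumt{\sw}}$ terms and leaves $\E\brk{\reg}\le\frac{B^2b}{2a}+qT+\sqrt{2}B\dix\sqrt{\omega T/q}$. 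Finally, substituting $q=(4\omega B^2\dix^2/T)^{1/3}$ balances the last two terms, each becoming $\Theta((\omega B^2\dix^2)^{1/3}T^{2/3})$, while the first is $O(1)$; this gives the claimed $\E\brk{\reg}=O(\omega^{1/3}T^{2/3})$.

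The main obstacle is the variance computation, and within it the pseudo-inverse ordering step: the bound $\pt^+\preceq\frac1q\bm{Q}^+$ is valid only because uniform exploration forces $\operatorname{range}(\pt)$ to coincide with $\operatorname{span}(\yy)$, so that antitonicity of the pseudo-inverse applies; without exploration the ranges may differ and the variance can blow up. A second delicate point, which feeds into the unbiasedness $\E_t\brk{\ytilde}=\yt$, is that $\inverse{\V}\pt^+\pt\V\yt=\yt$ requires $\V\yt\in\operatorname{range}(\pt)=\operatorname{span}(\yy)$, which is precisely what \cref{asp:self}(ii) guarantees.
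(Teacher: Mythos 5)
Your proof is correct and takes essentially the same route as the paper's: the same reduction via convexity, unbiasedness of $\gtil$, \cref{lem:ogd}, and Jensen's inequality, followed by the same second-moment bound built on $\pt\succeq q\bm{Q}$ and antitonicity of the Moore--Penrose inverse on a common range (the paper's \cref{lem:pseudo_inverse_order}, \cref{lem:bound of trace}, and \cref{thm:evaluation of Gtilde}), the same self-bounding step $c_1\sqrt{x}-c_2x\leq c_1^2/(4c_2)$, and the same tuning of $q$. The only differences are cosmetic: your exact bias--variance expansion of $\E_t\brk*{\nrm{\yho(\tht)-\ytilde}_2^2}$ gives a factor-two better constant than the paper's use of $\nrm{\bm{A}+\bm{B}}_\F^2\leq 2\nrm{\bm{A}}_\F^2+2\nrm{\bm{B}}_\F^2$, and you make explicit the range conditions behind unbiasedness and the pseudo-inverse ordering that the paper delegates to \cref{asp:self}.
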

The proof can be found in \cref{app:sub_bandit_regret_pseudo_estimator}.
Note that we leverage the structure of OSP when using the pseudo-inverse matrix estimator, which largely differs from the existing approaches to surrogate regret analysis for online classification and OSP~\cite{NEURIPS2020_Hoeven,NEURIPS2021_Hoeven,pmlr-v247-sakaue24a}.
With the pseudo-inverse matrix estimator, we can upper bound the second moment of the gradient estimator $\gtil$ without $\K$, which allows for the surrogate regret bound that does not explicitly involves $\K$.  
This is in contrast to the inverse-weighted gradient estimator in \cref{subsec:Bandit_Structured_Prediction_with_General_Losses}. 
The inverse-weighted gradient estimator involves division by $p_t$, whose lower bound comes from uniform exploration on $\yy$; consequently, its upper bound depends on $K = |\yy|$. 
In other words, the above pseudo-inverse matrix estimator offers an alternative way to obtain an unbiased gradient estimator while eschewing uniform exploration on $\yy$. 
However, this comes at the price of a somewhat looser bound on the second moment, which increases the dependence on $T$.

As a corollary of \cref{thm:bandit_regret_pseudo_estimator}, we can derive specific bounds for each problem as follows:
\begin{corollary}\label{cor:thm_self}
The above algorithm achieves
$
    \E\brk{\reg}= O\prn{ d^{2/3} T^{2/3}}
$
in multiclass classification with the 0-1 loss {$(\omega=d^2)$},
$
    \E\brk{\reg}= O\prn[\big]{ \prn{d^5 /m(d-m)}^{1/3}T^{2/3}}
$
in multilabel classification with $m$ correct labels and the Hamming loss {$(\omega=d^5/4m(d-m))$},
and 
$
    \E\brk{\reg}= O \prn{ m^{5/3} T^{2/3}}
$
in ranking with the number of items $m$ and the Hamming loss {$(\omega=m^5)$}.
\vspace{-2pt}
\end{corollary}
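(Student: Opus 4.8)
The claimed rates follow immediately from \cref{thm:bandit_regret_pseudo_estimator} once the quantity $\omega$ from \cref{asp:self}(iii), namely $\omega = \tr(\V^{-1}\bm{Q}^+(\V^{-1})^\top)$ with $\bm{Q}=\E_{\bm{y}\sim\mu}[\bm{y}\bm{y}^\top]$, is evaluated for each of the three instances. Since $\V$ is given explicitly in \cref{subsec:self} and is easy to invert in every case ($\V=\bm{1}\bm{1}^\top-\I$ for multiclass, $\V=-\tfrac{2}{d}\I$ for multilabel, and $\V=-\I/m$ for ranking), the entire computation reduces to determining the second-moment matrix $\bm{Q}$ under the uniform distribution over $\yy$, forming its (pseudo-)inverse, and reading off a trace. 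The plan is therefore to treat the three instances separately, exploiting the fact that $\bm{Q}$ inherits a highly symmetric structure in each.

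For multiclass classification, $\bm{y}=\e_i$ with probability $1/d$ gives $\bm{Q}=\tfrac{1}{d}\I$, so $\bm{Q}^+=d\I$ and $\omega=d\,\nrm{\V^{-1}}_{\mathrm{F}}^2$; diagonalizing $\V=\bm{1}\bm{1}^\top-\I$ over the split $\mathrm{span}(\bm{1})\oplus\bm{1}^\perp$ yields eigenvalues $d-1$ and $-1$, hence $\omega=d\bigl(\tfrac{1}{(d-1)^2}+(d-1)\bigr)=O(d^2)$. For multilabel classification with exactly $m$ ones, I would compute $\E[y_i]=m/d$ and $\E[y_iy_j]=\tfrac{m(m-1)}{d(d-1)}$ for $i\neq j$, so that $\bm{Q}$ takes the rank-one-plus-identity form $\alpha\bm{1}\bm{1}^\top+\beta\I$. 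Diagonalizing this in the same basis gives the eigenvalue $m^2/d$ on $\bm{1}$ and $\beta=\tfrac{m(d-m)}{d(d-1)}$ on $\bm{1}^\perp$; since both are nonzero, $\bm{Q}$ is invertible (condition (ii-a)), and with $\V^{-1}=-\tfrac{d}{2}\I$ one gets $\omega=\tfrac{d^2}{4}\tr(\bm{Q}^{-1})$, whose dominant term is $\tfrac{d^5}{4m(d-m)}$.

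The main obstacle is the ranking case, where $\bm{Q}$ is singular because permutation matrices span only a proper subspace of $\R^{m^2}$, forcing the use of a genuine pseudo-inverse. Here I would identify $\R^{m^2}\cong\R^m\otimes\R^m$ (row index $\otimes$ column index) and show, from the fact that a uniform permutation matrix has exactly one $1$ per row and per column, that $\bm{Q}$ decomposes cleanly as $\bm{Q}=\tfrac{1}{m}\,\I\otimes\I+\tfrac{1}{m(m-1)}\,(\bm{1}\bm{1}^\top-\I)\otimes(\bm{1}\bm{1}^\top-\I)$. Working in the common eigenbasis given by tensor products of $\bm{1}$ and vectors in $\bm{1}^\perp$, the eigenvalues become $1$ (multiplicity $1$), $\tfrac{1}{m-1}$ (multiplicity $(m-1)^2$), and $0$ (multiplicity $2(m-1)$, spanning exactly the directions outside $\mathrm{span}(\yy)$, which is harmless since $\V=-\I/m$ satisfies condition (ii-b)). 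Summing reciprocals of the nonzero eigenvalues gives $\tr(\bm{Q}^+)=1+(m-1)^3$, and since $\V^{-1}=-m\I$, we obtain $\omega=m^2\tr(\bm{Q}^+)=O(m^5)$. The delicate points I expect to verify carefully are the correctness of the tensor decomposition of $\bm{Q}$ (in particular the vanishing of the same-row and same-column cross terms) and the identification of the zero-eigenspace with the orthogonal complement of $\mathrm{span}(\yy)$, since the pseudo-inverse inverts only on the latter.

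Finally, substituting each of $\omega=O(d^2)$, $\omega=O\!\bigl(d^5/(m(d-m))\bigr)$, and $\omega=O(m^5)$ into the bound $\E[\reg]=O(\omega^{1/3}T^{2/3})$ of \cref{thm:bandit_regret_pseudo_estimator} yields the three stated rates.
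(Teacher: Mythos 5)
Your proposal is correct and reaches the same values of $\omega$ as the paper, but it bounds $\omega$ by a genuinely different route. The paper's proof (\cref{app:SELF_upper_discussion_deferred}) never diagonalizes $\bm{Q}$: for multiclass and multilabel it uses the generic inequality $\tr\prn{\V^{-1}\bm{Q}^+(\V^{-1})^\top}\leq \nrm{\V^{-1}}_{\mathrm{F}}^2/\lambda_{\min}(\bm{Q})$ (valid when $\mathrm{span}(\yy)=\R^d$), lower-bounding $\lambda_{\min}(\bm{Q})$ through the variational formula $\min_{\nrm{\bm{x}}_2=1}\E_{\bmy\sim\mu}\brk{(\bmy^\top\bm{x})^2}$ via \citet[Lemma~2]{comband}; for ranking it cites \citet[Proposition~4]{comband} for the bound $1/m$ on the smallest positive eigenvalue and then uses the crude estimate $\tr(\bm{Q}^+)\leq\rank(\bm{Q}^+)\cdot m\leq m^3$. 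You instead compute the exact spectrum of $\bm{Q}$ in all three cases: $\bm{Q}=\I/d$ for multiclass, the rank-one-plus-identity form with eigenvalues $m^2/d$ and $m(d-m)/(d(d-1))$ for multilabel, and, for ranking, the tensor decomposition $\bm{Q}=\tfrac1m\I\otimes\I+\tfrac{1}{m(m-1)}(\bm{1}\bm{1}^\top-\I)\otimes(\bm{1}\bm{1}^\top-\I)$ with eigenvalues $1$, $1/(m-1)$, and $0$ of multiplicities $1$, $(m-1)^2$, and $2(m-1)$, giving $\tr(\bm{Q}^+)=1+(m-1)^3$ exactly. What the paper's route buys is brevity, at the cost of leaning on external eigenvalue results; what yours buys is a self-contained argument with exact constants, an explicit identification of $\ker(\bm{Q})$ with $\mathrm{span}(\yy)^\perp$ (which justifies the pseudo-inverse step rather than just asserting it), and, in the multilabel case, the correct without-replacement second moment $\E\brk{y_iy_j}=\tfrac{m(m-1)}{d(d-1)}$ for $i\neq j$, where the paper writes $\tfrac{m^2}{d^2}$ (a harmless slip there, since the resulting lower bound on $\lambda_{\min}(\bm{Q})$ remains valid, but your version is the rigorous one). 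Both routes then conclude identically by substituting $\omega$ into \cref{thm:bandit_regret_pseudo_estimator}.
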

The proof of \cref{cor:thm_self} is deferred to \cref{app:SELF_upper_discussion_deferred}.
The bound for multilabel classification with $m$ correct labels can be significantly better than the $O(\sqrt{\K T})$ bound in \cref{subsec:Bandit_Structured_Prediction_with_General_Losses} since $K = \sqrt{\binom{d}{m}}$; 
similarly, the bound for ranking can be much better than the $O(\sqrt{\K T})$ bound since $K = \sqrt{m!}$.\looseness=-1
\vspace{-2pt}

\paragraph{Complexity of computing $\bm{P}_t^+$}
The matrix $\bm{P}_t$ equals the sum of $\E_{\bmy\sim p_t}\brk{\bmy\bmy^\top}$ and $\E_{\bmy\sim \mu}\brk{\bmy\bmy^\top}$.
The expectation $\E_{\bmy\sim p_t}\brk{\bmy\bmy^\top}$ can be calculated analytically in the multiclass and multilabel classification and ranking.
For $\E_{\bmy\sim \mu}\brk{\bmy\bmy^\top}$, when $\hat{\bmy}_\Omega(\bm{\theta})$ is obtained via the Frank--Wolfe algorithm, $p_t$ is obtained from the convex combination coefficients, whose support size is at most $O(d)$ as implied by Carath\'{e}odory\char39s theorem (\cf~\cite{pmlr-v258-besancon25a_frank_wolfe}).
Therefore, we can compute $\bm{P}_t$ in $O(d^3)$ time, and the pseudo-inversion takes the same order of complexity.

\looseness=-1
\vspace{-3pt}
\section{Delayed full-information feedback}
\label{sec:delay}
\vspace{-3pt}

This section discusses OSP with fixed-delay full-information feedback and presents two algorithms that achieve surrogate regret bounds of $O(\min\crl{D^2 + 1,\,(D + 1)^{2/3} T^{1/3}})$ and $O(D + 1)$, which are better than the $O(\sqrt{(D + 1)T})$ bound obtained by a standard OCO algorithm under delayed feedback~\citep{joulani13online}.
Although the first upper bound is worse than the second, we include it here as a preliminary step toward the algorithm for the delayed and bandit feedback setting described in \cref{sec:bandit_and_delayed}.

Below, we make the following assumption based on the randomized decoding of \citep{pmlr-v247-sakaue24a}.
\begin{assumption}
\label{asp:delayed_a}
There exists a constant $a\in\prn{0,1}$ that satisfies
$
    \expect{L_t(\yht)}\leq(1-a)\sw.
$
\end{assumption}
\vspace{-3pt}
From \Cref{lem:expected_target_bound}, if $\lambda>\frac{4\gamma}{\nu}$, this condition is satisfied with $a=1-\frac{4\gamma}{\lambda\nu}$ by using the randomized decoding. 
We suppose that such a decoding function is used in this section.

\subsection{Algorithm based on ODAFTRL with $O(\min\crl{D^2 + 1, (D + 1)^{2/3} T^{1/3}})$ regret}
\label{subsec:delay_odaftrl}
\vspace{-2pt}
\paragraph{Algorithm}
We employ the Optimistic Delayed Adaptive FTRL algorithm (ODAFTRL)~\cite{pmlr-v139-flaspohler21a} as $\alg$, which we detail in \cref{app:sub_odaftrl} for completeness.  
ODAFTRL computes the linear estimator by
$
    \W_{t+1}
    \in
    \argmin_{\W\in \ww} \set[Big]{ \sum_{i=1}^{t-D} \inpr{\bm{G}_i,\W}+
    \frac{\lambda_t}{2} \nrm{\W}_{\F}^2
    },
$
where $\lambda_t\geq0$ is the regularization parameter.  
By updating $\lambda_t$ using an AdaHedge-type algorithm (AdaHedgeD),  
ODAFTRL achieves the following AdaGrad-type regret upper bound:
\looseness=-1
\begin{lemma}[{Informal version of \cite[Theorem 12]{pmlr-v139-flaspohler21a}}]\label{lem:ODAFTRL_bound}
{
Consider the delayed full-information setting.
For any $\U\in\ww$, ODAFTRL with the AdaHedgeD update of $\lambda_t$ achieves a regret bound of 
$
    \sumt{(\sw - \su)}
    =
    O\prn[\Big]{\sqrt{\sumt{\nrm{\G_t}_\F^2}
    + 
    D\sumt{\sum_{s=t-D}^t\nrm{\G_{s}}_\F^2}}
    }.
$}
\end{lemma}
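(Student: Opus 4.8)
The plan is to obtain this bound as a specialization of the abstract ODAFTRL guarantee of \citet[Theorem 12]{pmlr-v139-flaspohler21a} rather than re-deriving the delayed-FTRL analysis from scratch. Their theorem bounds the regret of Optimistic Delayed Adaptive FTRL under AdaHedgeD tuning in terms of a regularizer-dependent diameter quantity, the observed (sub)gradients, the optimistic hints, and the delay $D$. The first step is to instantiate their framework in our setting by taking the regularizer in the ODAFTRL update to be $\tfrac12\nrm{\W}_\F^2$ (scaled by $\lambda_t$), which is $1$-strongly convex with respect to $\nrm{\cdot}_\F$ and whose associated dual norm is again $\nrm{\cdot}_\F$; consequently every term in their bound that is expressed through the dual norm becomes a Frobenius norm of a gradient $\G_t=\nabla S_t(\wt)$. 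I would then set the optimistic hints to zero, so that ODAFTRL reduces to its non-optimistic form and the hint-error terms in Theorem 12 collapse into ordinary gradient-norm terms. This removes all optimism-dependent quantities and matches the hint-free statement of the lemma.

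Next I would read off the two gradient contributions that the specialized theorem produces. The first is the standard AdaGrad-type term $\sqrt{\sum_{t}\nrm{\G_t}_\F^2}$, arising because the adaptive regularization balances the diameter $B=\diam(\ww)$ against the cumulative gradient energy. The second is the delay penalty: since $\W_{t+1}$ is computed using only $\G_1,\dots,\G_{t-D}$, the prediction lags behind the available information, and the resulting drift is controlled by a sum over the missing window, contributing a term proportional to $D\sum_{t}\sum_{s=t-D}^{t}\nrm{\G_s}_\F^2$. Both contributions appear inside a single square root because AdaHedgeD tunes $\lambda_t$ self-referentially so as to equalize the regularization and stability terms, yielding the $O(\sqrt{\cdot})$ shape automatically; it then remains only to verify that our $1$-strongly-convex quadratic regularizer and the bounded domain supply the constants that their tuning requires.

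The main obstacle I anticipate is the careful translation of the abstract hint-error and delay bookkeeping of \citet{pmlr-v139-flaspohler21a} into the concrete gradient-and-delay form stated here. In particular, AdaHedgeD itself uses a delayed estimate of the incurred regularization, so one must check that the self-bounding argument still closes despite the tuning being fed stale information; this is where their analysis invests most of its effort, and I would lean on it rather than reproduce it. A secondary point is tracking the multiplicative constants contributed by $B$ and by the strong-convexity modulus of $\tfrac12\nrm{\cdot}_\F^2$, and confirming that the window $\sum_{s=t-D}^{t}$ (rather than a tighter or looser range) is exactly what the $D$-round delayed update induces. Since the lemma is stated only up to the $O(\cdot)$ constant and as an informal version, these constants can be absorbed, and the derivation ultimately amounts to specializing and simplifying the cited theorem.
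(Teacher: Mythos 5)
Your proposal is correct and matches the paper's treatment: the paper likewise proves this lemma by restating \citet[Theorem 12]{pmlr-v139-flaspohler21a} with zero hints and the quadratic Frobenius regularizer (choosing $\alpha = B^2/2$), then simplifying the resulting $a_t$ and $b_t$ terms---using $b_t \le \tfrac12\nrm{\G_{t-D:t}}_\F^2 \le \tfrac{D+1}{2}\sum_{s=t-D}^t\nrm{\G_s}_\F^2$ and absorbing the $\max_{s}a_{s-D:s-1}$ term via Cauchy--Schwarz---into the stated $O(\sqrt{\cdot})$ form. The delegation of the AdaHedgeD self-tuning analysis to the cited theorem is exactly what the paper does as well.
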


\vspace{-3pt}
\paragraph{Regret bounds and analysis}
The above algorithm achieves the following surrogate regret bound:
\begin{theorem}
    \label{thm:delayed_regret_expectation_abstract}
    The above algorithm achieves
    $
        \E\brk{\reg}=O(\min\crl{D^2 + 1, (D + 1)^{2/3}T^{1/3}}).
    $
\end{theorem}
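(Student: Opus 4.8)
The plan is to combine the regret guarantee of ODAFTRL (\cref{lem:ODAFTRL_bound}) with the surrogate-to-target reduction afforded by \cref{asp:delayed_a} and the smoothness bound \eqref{eq:St_smooth}. The core idea mirrors the full-information analysis: I would write the surrogate regret as $\reg = \sumt{L_t(\yht)} - \sumt{\su}$, take expectations, and use \cref{asp:delayed_a} to bound $\E[\sumt{L_t(\yht)}] \le (1-a)\E[\sumt{\sw}]$. This leaves me needing to control $\E[\sumt{(\sw - \su)}]$ via \cref{lem:ODAFTRL_bound} and then to extract a constant (in $T$) bound by exploiting the self-bounding property that the gradient norms are controlled by the surrogate losses themselves.

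The key technical step is to convert the AdaGrad-type bound into something usable. First I would observe that the delayed cumulative-gradient term $D\sumt{\sum_{s=t-D}^{t}\nrm{\G_s}_\F^2}$ is at most $D(D+1)\sumt{\nrm{\G_t}_\F^2}$, since each index $s$ is counted in at most $D+1$ of the inner sums. Applying \eqref{eq:St_smooth} gives $\nrm{\G_t}_\F^2 \le b\,\sw$, so \cref{lem:ODAFTRL_bound} yields
\begin{equation*}
    \E\brk[\Big]{\sumt{(\sw - \su)}} = O\prn[\Big]{\sqrt{(D+1)^2\, b\, \E\brk[\big]{\sumt{\sw}}}} = O\prn[\Big]{(D+1)\sqrt{\E\brk[\big]{\sumt{\sw}}}}.
\end{equation*}
Writing $\Phi \coloneqq \E[\sumt{\sw}]$ and $\Phi^* \coloneqq \E[\sumt{\su}]$, and combining with the target-loss reduction, I get $\E[\reg] \le -a\,\Phi + \Phi^* + O((D+1)\sqrt{\Phi}) - \Phi^*$, i.e.\ $\E[\reg] = O((D+1)\sqrt{\Phi}) - a\Phi$ after the $\Phi^*$ terms cancel. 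Maximizing over $\Phi \ge 0$ with the inequality $c_1\sqrt{x} - c_2 x \le c_1^2/(4c_2)$ (exactly as in the proof of \cref{thm:bandit_regret_expectation_abstract}) gives $\E[\reg] = O((D+1)^2/a) = O(D^2 + 1)$.

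The second branch $(D+1)^{2/3}T^{1/3}$ requires a different handling of the $\sqrt{\Phi}$ term, since the above argument discards the possibility that $\Phi$ itself is large but controlled by $T$. Here I would instead bound $\Phi = \E[\sumt{\sw}] \le \Phi^* + \E[\sumt{(\sw-\su)}]$ and feed this back: plugging the $O((D+1)\sqrt{\Phi})$ bound into $\Phi \le \Phi^* + O((D+1)\sqrt{\Phi})$ and solving the resulting quadratic in $\sqrt{\Phi}$ shows $\Phi = O(\Phi^* + (D+1)^2)$. Then, using the crude bound $\sw \le 1$ pointwise so that $\Phi \le T$, I obtain $\sqrt{\Phi} \le \sqrt{\min\{\,\Phi^* + O((D+1)^2),\, T\,\}}$; optimizing the interplay between the $(D+1)\sqrt{\Phi}$ regret and the $-a\Phi$ gain against the ceiling $\Phi \le T$ produces the $(D+1)^{2/3}T^{1/3}$ rate. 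Taking the minimum of the two analyses gives the claimed $O(\min\crl{D^2+1,(D+1)^{2/3}T^{1/3}})$.

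\textbf{The main obstacle} I anticipate is the bookkeeping in \cref{lem:ODAFTRL_bound}: its regret guarantee is stated for a \emph{fixed} comparator and deterministic gradients, whereas here the gradients $\G_t = (\yho(\tht)-\yt)\xt^\top$ depend on the randomized decoding through the earlier plays, and the expectation in \cref{asp:delayed_a} is conditional on past outputs. I would need to argue that the ODAFTRL bound holds pathwise (it does, being a deterministic online-learning guarantee applied to the realized surrogate losses), and then take expectations, being careful that the $-a\Phi$ term and the $\sqrt{\Phi}$ term are combined only after Jensen's inequality has been used to pull the expectation inside the square root. The delicate part is ensuring the self-bounding trick survives the delay-induced factor $(D+1)$ cleanly, and that the second ($T^{1/3}$) branch is not vacuous — which requires the ceiling $\sw \le 1$ to be available; this follows from the boundedness in \cref{asp:online_structured_prediction}(3) together with the properties of the Fenchel--Young loss, but I would verify this pointwise bound on $\sw$ carefully.
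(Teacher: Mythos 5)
Your first branch---the $O(D^2+1)$ bound---is correct and is essentially the paper's own argument: bound the delayed double sum by $(D+1)^2\sumt{\nrm{\G_t}_\F^2}$, apply \eqref{eq:St_smooth}, combine with \cref{asp:delayed_a}, and finish with $c_1\sqrt{x}-c_2x\le c_1^2/(4c_2)$. Your anticipated obstacle is also a non-issue: in the delayed \emph{full-information} setting with an oblivious adversary, the feedback $\bmy_{t-D}$ does not depend on the learner's randomized decoding, so $\W_t$, $\G_t$, and $\sw$ are deterministic, and only $L_t(\yht)$ is random.

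The second branch has a genuine gap. Write $\Phi=\sumt{\sw}$. From $\E\brk{\reg}\le C(D+1)\sqrt{\Phi}-a\Phi$ plus any ceiling $\Phi\le cT$ (whether from $\sw\le S_{\max}$---note that $\sw\le1$ does \emph{not} follow from \cref{asp:online_structured_prediction}(3), which bounds the target loss rather than the surrogate, although $S_{\max}=O(1)$ does hold---or from your fixed-point bound $\Phi=O(\Phi^*+(D+1)^2)$ with $\Phi^*$ possibly $\Theta(T)$), the strongest conclusion available is
\begin{equation*}
    \E\brk{\reg}
    \le
    \max_{0\le\Phi\le cT}\prn*{C(D+1)\sqrt{\Phi}-a\Phi}
    \le
    \min\crl*{\frac{C^2(D+1)^2}{4a},\; C(D+1)\sqrt{cT}},
\end{equation*}
because the unconstrained maximizer is $\Phi^\star=\Theta((D+1)^2)$ and, when $\Phi^\star>cT$, the constrained maximum sits at the ceiling. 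This is not $O((D+1)^{2/3}T^{1/3})$: for $D=\Theta(\sqrt{T})$ it is $\Theta(T)$, while the theorem claims $O(T^{2/3})$. No cap on $\Phi$ can upgrade a $\sqrt{\Phi}$-type regret bound into a $T^{1/3}$ rate; one needs a regret bound with a milder dependence on $\Phi$.

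That milder dependence is exactly what the paper's proof (\cref{thm:delayed_regret_expectation_abstract_detail}) extracts from the formal AdaHedgeD guarantee (\cref{thm:AdaHedgeD}) rather than from the informal \cref{lem:ODAFTRL_bound}: the Huber terms obey $b_t\le\nrm{\G_{t-D:t}}_\F\nrm{\G_t}_\F$, and combining $\nrm{\G_{t-D:t}}_\F\le(D+1)\dix\diy$ (boundedness of the gradients) with $\nrm{\G_t}_\F\le\sqrt{b\sw}$ and Cauchy--Schwarz over $t$ gives $\sumt{b_t}\le(D+1)\dix\diy\sqrt{bT\Phi}$. This contributes a regret term of order $\prn{bB^4\dix^2\diy^2(D+1)^2T\Phi}^{1/4}$---a \emph{fourth} root of $\Phi$ carrying an explicit $T^{1/4}$---and the self-bounding inequality $c_1x-c_2x^4\le\frac{3}{4}\prn{c_1^4/(4c_2)}^{1/3}$ with $x=\Phi^{1/4}$ then yields the $(D+1)^{2/3}T^{1/3}$ rate. (Even a repaired version of your route that applies the same mixed bounding directly to the double sum in the informal lemma would only give $(D+1)^{4/3}T^{1/3}$; the stated $(D+1)^{2/3}$ requires the $b_t$ structure.)
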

Recall that the proof ideas for deriving surrogate regret bounds in the non-delayed setting~\cite{NEURIPS2021_Hoeven, pmlr-v247-sakaue24a} differ from those in the standard OCO and multi-armed bandits,
and thus we cannot naively extend the analyses of the algorithms for delayed feedback in those settings to our case.
Below is the proof sketch, and the complete proof is given in \cref{app:sub_delayed_regret_expectation_abstract}.\looseness=-1
\vspace{-3pt}
\begin{proof}[Proof sketch]
    From 
    \Cref{lem:ODAFTRL_bound} with $\nrm{\G_t}_\F^2\leq b\sw$ in \eqref{eq:St_smooth} and Cauchy--Schwarz, we have
    $\sumt{(\sw-\su)}=O(D\sqrt{S_{1:T}})$, where $S_{1:T}=\sumt{\sw}$.
    Thus, \cref{asp:delayed_a} implies  
    $
        \E\brk{\reg}\leq
        \sumt{(\sw-\su)}-a\sumt{\sw}
        =O(D\sqrt{S_{1:T}})-aS_{1:T}
        =O\prn{D^2}
    $,
    where we used $c_1\sqrt{x}-c_2x\leq{c_1^2}/\prn{4c_2}$ for $x\geq0$, $c_1\geq 0$, and $c_2>0$.
\end{proof}
We can also prove a high-probability surrogate regret bound of $O\prn{ \min\crl{D^2 + 1, (D + 1)^{2/3}T^{1/3}} + \log({1}/{\delta})}$, which holds with probability at least $1 - \delta$.
See \cref{app:sub_delayed_regret_probability_abstract} for the proof.

\subsection{Algorithm based on BOLD with $O(D + 1)$ regret}
\label{subsec:delay_bold}
\paragraph{Algorithm}

We use the Black-box Online Learning under Delayed feedback (BOLD) \citep{joulani13online} as $\alg$.
BOLD constructs $D + 1$ independent instances of any deterministic non-delayed online learning algorithm (called BASE) denoted as $\text{BASE}_{0}, \text{BASE}_{1}, \dots, \text{BASE}_{D}$.
This algorithm selects which instance to use according to the value of remainder $r_t\in\set{0,\dots,D}$, which satisfies 
$
    r_t = t-k(D+1)
$
for some $k\in\mathbb{Z}_{\geq 0}$.
At each round $t$, BOLD invokes $\text{BASE}_{r_t}$.
Here, we adopt OGD as BASE.
The pseudocode of BOLD is given in \cref{app:d-copy_algorithm}.

\paragraph{Regret bound}
The above algorithm achieves the following surrogate regret bound:
\begin{theorem}
    \label{thm:delayed_regret_expectation_abstract_order_d}
    The above algorithm achieves
    $
        \E\brk{\reg}=O(D + 1).
    $
\end{theorem}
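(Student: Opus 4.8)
The plan is to exploit the key structural property of BOLD: partitioning the rounds according to the remainder $r_t\in\set{0,\dots,D}$ turns the single $D$-delayed problem into $D+1$ \emph{non-delayed} subproblems, each run by one OGD instance $\text{BASE}_j$. I would then reuse the non-delayed surrogate regret analysis (which yields an $O(1)$ bound) on each subproblem and sum the resulting $D+1$ bounds.

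First I would make the reduction precise. Let $\mathcal{T}_j\coloneqq\set{t\in\brk{T}:r_t=j}$ denote the rounds assigned to $\text{BASE}_j$. By the definition $r_t=t-k(D+1)$, consecutive elements of $\mathcal{T}_j$ are spaced exactly $D+1$ apart. Hence, when $\text{BASE}_j$ is invoked at some round $t\in\mathcal{T}_j$, its previous invocation was at round $t-(D+1)$, whose feedback $\bmy_{t-(D+1)}$ is revealed at round $(t-(D+1))+D=t-1<t$. Thus all feedback from the earlier invocations of $\text{BASE}_j$ is available before it predicts at round $t$, so $\text{BASE}_j$ behaves exactly as non-delayed OGD run on the subsequence indexed by $\mathcal{T}_j$.

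Then I would apply the non-delayed argument to each subsequence against the single \emph{global} comparator $\U$. Since \cref{lem:ogd} holds for an arbitrary fixed $\U\in\ww$, we have $\sum_{t\in\mathcal{T}_j}\prn{\sw-\su}\leq\sqrt{2}B\sqrt{\sum_{t\in\mathcal{T}_j}\nrm{\G_t}_\F^2}$ for each $j$. Combining this with \cref{asp:delayed_a} (via the tower rule), the smoothness bound $\nrm{\G_t}_\F^2\leq b\sw$ from \eqref{eq:St_smooth}, Jensen's inequality to pull the expectation inside the square root, and the elementary inequality $c_1\sqrt{x}-c_2 x\leq c_1^2/\prn{4c_2}$ gives $\E\brk{\sum_{t\in\mathcal{T}_j}\prn{L_t(\yht)-\su}}\leq bB^2/(2a)$, which mirrors the non-delayed proof exactly. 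Summing over $j=0,\dots,D$ then yields $\E\brk{\reg}=\sum_{j=0}^{D}\E\brk{\sum_{t\in\mathcal{T}_j}\prn{L_t(\yht)-\su}}\leq(D+1)\,bB^2/(2a)=O(D+1)$.

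The main obstacle is the reduction in the first step: I must confirm that the spacing $D+1>D$ genuinely eliminates the delay within each subsequence, and verify that the \emph{same} global comparator $\U$ may be used in every per-subsequence OGD bound (which is fine because \cref{lem:ogd} is comparator-uniform). Once this no-delay reduction is in place, the remaining per-subsequence estimates are direct repetitions of the non-delayed surrogate regret analysis, so no new difficulty arises there.
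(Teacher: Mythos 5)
Your proof is correct and takes essentially the same route as the paper's: both partition the rounds by their remainder modulo $D+1$ so that each BOLD instance faces a non-delayed problem, apply the adaptive OGD bound of \cref{lem:ogd} on each block against the single global comparator $\U$, and combine with \cref{asp:delayed_a}. The only (immaterial) difference is bookkeeping: you complete the square separately on each block, obtaining $bB^2/(2a)$ per block, whereas the paper first merges the $D+1$ square-root terms via Cauchy--Schwarz into one square root and completes the square once globally; both yield the identical bound $bB^2(D+1)/(2a)$.
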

\vspace{-3pt}
The proof is given in \cref{app:d-copy_algorithm}.
The upper bound in \Cref{thm:delayed_regret_expectation_abstract_order_d} matches the following lower bound, whose proof is provided in \cref{app:proof_lower_bound}.
\begin{theorem}\label{thm:lower_bound_delay}
Let $d\geq 2$.
For $B=\Omega(\log(dT))$, there exists a sequence $\set{(\bm x_t,\bmy_t)}_{t=1}^T$ with~$\nrm{\bm x_t}_2=1$ such that the surrogate regret with respect to the logistic surrogate loss of any possibly randomized algorithm is lower bounded by $\mathbb{E}[\mathcal{R}_T]  = \Omega\prn*{{B^2(D + 1)}/\prn{\log d}^2}$.
\end{theorem}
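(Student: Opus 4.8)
The plan is to prove the matching lower bound by exhibiting a hard input/label sequence on which two things hold simultaneously: (i) any, possibly randomized, learner is forced to misclassify with probability $1-1/d \ge 1/2$ on a constant fraction of the rounds, yet (ii) a single comparator $\U^\star \in \ww$ classifies the whole sequence with a large margin, so its logistic surrogate loss is negligible. The delay enters by replicating each hard example in a block of $D+1$ consecutive rounds, inside which no feedback about that example is ever available; this multiplies the forced target loss by $D+1$. The factor $\log d$ enters through the margin a softmax comparator needs to dominate its $d-1$ rival classes, and the hypothesis $B=\Omega(\log(dT))$ is exactly what lets the remaining rounds of the horizon be padded in a regret-neutral way.

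Concretely, set $N \coloneqq \lfloor B^2/(2m^2)\rfloor = \Theta\!\prn{B^2/(\log d)^2}$ with $m \coloneqq \log d + c_0$ for a constant $c_0$, and work in an input space of dimension at least $N+1$ so that orthonormal directions $\bm{\ee}_1,\dots,\bm{\ee}_{N+1}$ (each with $\|\bm{\ee}_i\|_2=1$) are available. Partition the first $N(D+1)$ rounds into $N$ blocks of length $D+1$; in block $i$ present the fixed input $\xt=\bm{\ee}_i$ together with a label $\yt=\bm{\ee}_{\ell_i}$ that is \emph{constant within the block}, where $\ell_1,\dots,\ell_N$ are i.i.d.\ uniform over the $d$ classes. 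I would take as comparator the estimator $\U^\star$ whose $i$-th column is $m\,\bm{\ee}_{\ell_i}$; by orthogonality $\|\U^\star\|_{\mathrm F}=m\sqrt N\le B/\sqrt2$, and $\U^\star$ classifies every hard round with score margin $m$, incurring per-round logistic loss $\log\prn{1+(d-1)e^{-m}}\le\log\prn{1+e^{-c_0}}$, which is a constant strictly below $1/2$ once $c_0$ is a suitable constant.

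The learner side rests on the delay-freezing observation: within block $i$ no information about $\ell_i$ is available, since feedback for the block's first round arrives only $D$ rounds later (at the block's last round) and feedback from earlier blocks is independent of $\ell_i$. Hence for every round $t$ in block $i$ the prediction $\yht$ is independent of $\ell_i$, so $\Pr[\yht\neq\yt]=1-1/d\ge 1/2$ for any algorithm, and $\E\!\brk{\sum L(\yht;\yt)}\ge \tfrac12 N(D+1)$ over the hard phase, while the comparator's surrogate loss there is at most $\log(1+e^{-c_0})N(D+1)$. The remaining $T-N(D+1)$ rounds I would fill with a single fresh direction $\bm{\ee}_{N+1}$ carrying one fixed label, adding a column of norm $B/\sqrt2$ to $\U^\star$ (keeping $\|\U^\star\|_{\mathrm F}\le B$); the resulting margin $B/\sqrt2=\Omega(\log(dT))$ makes the comparator's per-round loss there at most $(d-1)e^{-B/\sqrt2}\le 1/T$, so the total padding loss is $O(1)$, and since the learner's target loss on padding is nonnegative these rounds cannot decrease the regret by more than a constant. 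Choosing $c_0$ so that $\log(1+e^{-c_0})\le 1/4$ yields $\E[\reg]\ge \tfrac14 N(D+1)-O(1)=\Omega\!\prn{B^2(D+1)/(\log d)^2}$, and a standard averaging (Yao) argument extracts a single fixed sequence attaining the bound against every randomized algorithm.

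The main obstacle — and the step deserving the most care — is the joint calibration of the two margins against the norm budget $B$. The block margin must be large enough ($m\gtrsim\log d$) that the softmax comparator, summed against its $d-1$ competitors, keeps a constant per-round surrogate loss, yet small enough that $N=\Theta(B^2/m^2)$ distinct orthogonal directions fit within the budget; this trade-off is precisely what fixes the scale $B^2/(\log d)^2$. Simultaneously the padding margin must be pushed to $\Omega(\log(dT))$ so that the comparator's loss over the full horizon stays $O(1)$, which is where $B=\Omega(\log(dT))$ is consumed. Secondary points to verify rigorously are that the delay blocking genuinely freezes the learner's information about the current block — so the per-round error is $1-1/d$ on all $D+1$ copies, not just the first — and that splitting the Frobenius budget between the hard directions and the padding direction leaves $N=\Theta\!\prn{B^2/(\log d)^2}$ intact.
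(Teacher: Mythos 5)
Your proposal is correct and follows essentially the same construction as the paper's proof: blocks of $D+1$ rounds with a fixed input and a uniformly random label per block (so delayed feedback reveals nothing within a block, forcing error $1-1/d$ per round), a comparator whose hard-block columns have margin $\Theta(\log d)$ and whose padding column has margin $\Theta(\log(dT))$ to make the remaining rounds regret-neutral, with the number of blocks $\Theta(B^2/(\log d)^2)$ fixed by the Frobenius-norm budget. The only differences are cosmetic (the paper uses margin $\log(2d)$ and a random padding label so the padding contribution is exactly non-negative, whereas you use $\log d + c_0$ and absorb an $O(1)$ slack).
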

\vspace{-4pt}
\section{Delayed bandit feedback}
\label{sec:bandit_and_delayed}
\vspace{-4pt}
Given the results so far, it is natural to explore OSP with delayed bandit feedback.
We construct algorithms for this setting by combining the theoretical developments from \cref{sec:bandit,subsec:delay_odaftrl}. 

\vspace{-3pt}
\subsection{Algorithm for bandit delayed feedback with $O(\sqrt{(K+D)T})$ regret}
\vspace{-3pt}
\label{subsec:bandit_delay_general}
We adopt RDUE with $q = B\sqrt{K/T}$ for decoding (assuming $T \geq B^2K$), the inverse-weighted gradient estimator $\hat{\G}_t$,
and ODAFTRL with AdaHedgeD as $\alg$. 
Then, the following bound holds:\looseness=-1
\begin{theorem}
    \label{thm:delay_bandit_bound_general_abstract}
    The above algorithm achieves
    $
        \E\brk{\reg} = O\prn{\sqrt{(K+D)T}}
    $.
\end{theorem}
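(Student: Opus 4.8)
The plan is to combine the two ingredients the algorithm is built from: the inverse-weighted gradient estimator of \cref{subsec:Bandit_Structured_Prediction_with_General_Losses} and the delayed guarantee of ODAFTRL from \cref{subsec:delay_odaftrl}. As in the proof of \cref{thm:bandit_regret_expectation_abstract}, I would first linearize. Using \cref{asp:bandit_a} for RDUE with exploration rate $q$, together with convexity of $S_t$ and the unbiasedness $\E_t[\hat{\G}_t]=\G_t$, the surrogate regret is bounded by
\[
    \E[\reg] \le \E\brk*{\sumt \inpr{\hat{\G}_t,\wt-\U}} - a\,\E[S_{1:T}] + qT,
\]
where $S_{1:T}=\sumt\sw$. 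Everything then reduces to controlling the linearized \emph{delayed} regret $\E[\sumt\inpr{\hat{\G}_t,\wt-\U}]$ of ODAFTRL run on the estimated gradients $\hat{\G}_t$ and trading it against the negative term $-a\E[S_{1:T}]$ and the exploration cost $qT$. The target shape is \emph{additive}, $\sqrt{KT}+\sqrt{DT}=O(\sqrt{(K+D)T})$, so the whole point is to make the bandit variance and the delay penalty land in two separate summands.

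For the non-delayed part I would reuse the \cref{thm:bandit_regret_expectation_abstract} calculation essentially verbatim: from \eqref{eq:St_smooth} and $p_t(\bmy)\ge q/K$ we get the second-moment bound $\E_t[\nrm{\hat{\G}_t}_\F^2]\le (bK/q)\sw$, so the base AdaGrad term $\sqrt{\sumt\nrm{\hat{\G}_t}_\F^2}$ has expectation at most $\sqrt{(bK/q)\E[S_{1:T}]}$ by Jensen. Combining this with $-a\E[S_{1:T}]$ through $c_1\sqrt x-c_2 x\le c_1^2/(4c_2)$ yields an $O(K/q)$ contribution, which becomes $O(\sqrt{KT})$ once $q=B\sqrt{K/T}$ is plugged in, matching the $qT=B\sqrt{KT}$ term.

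The crux is the delay penalty, and this is where a naive use of \cref{lem:ODAFTRL_bound} fails: its delay term $D\sumt\sum_{s=t-D}^t\nrm{\hat{\G}_s}_\F^2$ again involves the \emph{second} moment of $\hat{\G}_s$, which carries the $K/q$ inflation and, after the same cancellation, degrades the bound to order $D^2\sqrt{KT}$. The idea I would pursue instead is to isolate the delay contribution as a drift term and bound it through the \emph{first} moment of the estimator. Writing $\tilde{\W}_t$ for the iterate ODAFTRL would have played with the $D$ missing gradients included, the delay penalty is $\sumt\inpr{\hat{\G}_t,\wt-\tilde{\W}_t}$; since $\wt$ and $\tilde{\W}_t$ are measurable with respect to the past, $\E_t[\inpr{\hat{\G}_t,\wt-\tilde{\W}_t}]=\inpr{\G_t,\wt-\tilde{\W}_t}\le\nrm{\G_t}_\F\nrm{\wt-\tilde{\W}_t}_\F$, and the iterate gap is governed by the step sizes times $\sum_{s=t-D}^{t-1}\nrm{\hat{\G}_s}_\F$. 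The decisive point is that $\E_s[\nrm{\hat{\G}_s}_\F]=\nrm{\G_s}_\F$ is \emph{not} inflated by $K/q$ (the indicator and $1/p_s$ cancel in expectation, unlike in the second moment), so the delay penalty stays $K$-free and, after accounting for the adaptive learning rate, should contribute only $O(\sqrt{DT})$.

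Putting the three pieces together gives $\E[\reg]=O(K/q)+O(\sqrt{DT})+qT=O(\sqrt{KT}+\sqrt{DT})=O(\sqrt{(K+D)T})$ with $q=B\sqrt{K/T}$. I expect the main obstacle to be exactly the delay term: making the first-moment/drift argument rigorous requires carefully decoupling the delay-induced iterate movement (controlled by first moments and the stability of ODAFTRL) from the bandit variance (controlled by second moments), and handling the correlation between the adaptive step sizes $\eta_t$ and the random gradients $\hat{\G}_s$ appearing in the drift. This is the step where the analysis genuinely departs from a mechanical combination of \cref{sec:bandit,subsec:delay_odaftrl} and mirrors the additive delay penalties known for adversarial bandits under delayed feedback.
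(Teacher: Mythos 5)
Your proposal follows the same strategy as the paper's proof: the identical linearization $\E\brk{\reg} \le \E\brk*{\sum_{t}\inpr{\hat{\G}_t,\W_t-\U}} - a\,\E\brk*{\sum_t S_t(\W_t)} + qT$ via \cref{asp:bandit_a}, the identical second-moment bound $\E_t\brk*{\nrm{\hat{\G}_t}_\F^2}\le (bK/q)\,S_t(\W_t)$ cancelled against the negative term to produce the $O(\sqrt{KT})$ part, and the identical key insight for the delay part: the delay penalty must be routed through the \emph{first} moment $\E_s\brk*{\nrm{\hat{\G}_s}_\F}=\nrm{\G_s}_\F\le \dix\diy$, which carries no $K/q$ inflation, so that the delay enters additively as $O(\sqrt{DT})$. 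You are also right that the informal \cref{lem:ODAFTRL_bound}, whose delay term is a sum of squared norms, cannot give this and would leave a multiplicative dependence on $D$.

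Where you diverge is in how the first-moment control is realized, and that is exactly where your proposal stops short. You plan to re-derive the delay penalty by hand, as a drift term $\sum_t\inpr{\hat{\G}_t,\W_t-\tilde{\W}_t}$ bounded through ODAFTRL's iterate stability, and you correctly flag the correlation between the adaptive regularizer and the random gradients inside the drift as an unresolved obstacle. The paper never faces this obstacle: it invokes the formal version of the ODAFTRL guarantee (\cref{thm:AdaHedgeD}, Theorem 12 of \citet{pmlr-v139-flaspohler21a}), a \emph{pathwise} (deterministic) inequality whose delay penalty $b_t$ comes in the Huber/product form $b_t\le\nrm{\hat{\G}_{t-D:t}}_\F\nrm{\hat{\G}_t}_\F$ rather than the squared form of the informal statement. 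Splitting off the diagonal term and applying the tower property twice then gives
\[
\E\brk*{\sum_{t=1}^T b_t}\le \E\brk*{\sum_{t=1}^T\nrm{\hat{\G}_t}_\F^2}+\dix^2\diy^2 DT,
\]
with no stability analysis and no step-size correlations to untangle (the remaining $\max_t a_{t-D:t-1}$ term is likewise controlled through first moments, contributing the $O(B\dix\diy D)$ additive term). After that, the computation is exactly the one you wrote: $O(K/q)+O(\sqrt{DT})+qT$ with $q=B\sqrt{K/T}$. So your plan embodies the right idea but is incomplete precisely at the step that the imported lemma supplies; replacing your hand-rolled drift argument with the product-form term of \cref{thm:AdaHedgeD} closes the proof.
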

\vspace{-3pt}
The proof can be found in \cref{app:bandit_delayed_general}.  
This upper bound incurs an additional additive $O(\sqrt{DT})$ factor compared to the bound in the non-delayed case in \Cref{thm:bandit_regret_expectation_abstract}. 
Whether this surrogate regret upper bound is optimal remains open.
While an $\Omega(\sqrt{T})$ surrogate regret lower bound exists in the graph feedback setting \citep{NEURIPS2021_Hoeven}, no such lower bound is known for the bandit non-delayed setting, and constructing lower bounds under delayed feedback would be more difficult.

\vspace{-5pt}
\subsection{Algorithm for bandit delayed feedback with $O(D^{1/3} T^{2/3})$ regret}
\vspace{-3pt}
\label{subsec:bandit_delay_self}
We provide an algorithm that improves the dependence on $\K$ from \cref{subsec:bandit_delay_general}.
We make the same assumptions on the target loss function as \cref{subsec:Bandit_Structured_Prediction_with_SELF}.
We use RDUE with $q=\prn*{{\omega B^2 \dix^2 D}/{T}}^{1/3}$ for decoding (assuming $T \geq \omega B^2 \dix^2 D$), the pseudo-inverse matrix estimator $\gtil$, 
and ODAFTRL with the AdaHedgeD update as $\alg$.
Then, the following bound holds:
\begin{theorem}
    \label{thm:delay_bandit_bound_self_abstract}
    The above algorithm achieves
    $
        \E\brk{\reg} = O\prn{
            D^{1/3} T^{2/3}
        }
    $.\footnote{Here, unlike in the previous sections, we use $D$ instead of $D+1$, since this algorithm is not intended to handle the non-delayed case of $D = 0$.}
\end{theorem}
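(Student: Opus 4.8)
The plan is to fuse the two components already built: the pseudo-inverse matrix estimator $\gtil$ from \cref{subsec:Bandit_Structured_Prediction_with_SELF}, which yields an unbiased, $\K$-free surrogate-gradient estimate from bandit feedback, and the ODAFTRL framework from \cref{subsec:delay_odaftrl}, which turns a delayed gradient stream into a regret guarantee. Because $\gtil$ is built from $L_t(\yht)$ together with the known $\V,\bb,c(\cdot)$, it can be formed as soon as the delayed loss value for round $t-D$ arrives, so feeding the sequence $\set{\gtil}$ into ODAFTRL is legitimate. First I would carry over the two facts established for the non-delayed SELF* analysis: unbiasedness $\E_t\brk{\gtil}=\G_t$, and the second-moment bound $\E_t\brk{\nrm{\gtil}_\F^2}\le b\sw+\omega\dix^2/q$, in which the $b\sw$ term is the true-gradient contribution via \eqref{eq:St_smooth} and the $\omega\dix^2/q$ term is the estimation variance, obtained from a variance decomposition together with $\pt\succeq q\bm{Q}$ (hence $\pt^+\preceq q^{-1}\bm{Q}^+$), the bound $\abs{\inpr{\yht,\V\yt}}\le1$, and the trace bound of \cref{asp:self}.

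Next I would run the standard surrogate-to-linear reduction: by convexity of $S_t$ and unbiasedness, $\E\brk{\sumt{(\sw-\su)}}\le\E\brk{\sumt{\inpr{\gtil,\wt-\U}}}$, then apply \cref{lem:ODAFTRL_bound} to the estimated gradients and use Jensen's inequality to move the expectation inside the square root. Substituting the second-moment bound separates the result into a \emph{signal} part proportional to $b\bar S$, where $\bar S\coloneqq\E\brk{\sumt{\sw}}$, and a \emph{noise} part proportional to $\omega\dix^2 T/q$. I would then invoke \cref{asp:bandit_a} to get $\E\brk{\reg}\le\E\brk{\sumt{(\sw-\su)}}-a\bar S+qT$, use $c_1\sqrt{x}-c_2x\le c_1^2/(4c_2)$ to absorb the $\bar S$-dependence into a lower-order, $T$-free term, and finally balance the remaining noise contribution against the exploration cost $qT$ via the prescribed $q=\prn{\omega B^2\dix^2 D/T}^{1/3}$, which equalizes the two and yields $qT=O(D^{1/3}T^{2/3})$.

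The crux—and the step that separates this from a mechanical merger of \cref{thm:bandit_regret_pseudo_estimator,thm:delayed_regret_expectation_abstract}—is controlling the delay penalty on the \emph{noise} part of the estimator. Using the delay term $D\sumt{\sum_{s=t-D}^t\nrm{\tilde{\G}_s}_\F^2}$ of \cref{lem:ODAFTRL_bound} naively, via $\sum_{s=t-D}^t\nrm{\tilde{\G}_s}_\F^2\le(D+1)\cdot(\text{per-round value})$, multiplies the already $1/q$-large estimation variance by the full window, placing a factor $D^2$ under the square root and hence $D$ in front of $\sqrt{\omega\dix^2 T/q}$; balancing that against $qT$ would force a larger $q$ and degrade the rate. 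The decisive observation is that the noise $\gtil-\G_t$ is conditionally mean-zero, so the drift the delayed iterate accrues over an unobserved window is governed by the squared norm of the \emph{sum} of the missing gradient estimates, whose conditional second moment grows only linearly in $D$; keeping this in mind, rather than collapsing the double sum, puts only a factor $D$ under the square root for the noise and produces the needed $\sqrt{D}$ out front, making the noise term commensurate with $qT$ at the chosen $q$. The signal part, by contrast, is not mean-zero and does incur the crude $D^2$ factor, but it is harmless: it is tied to $b\bar S$ and is swallowed by the $-a\bar S$ term via the quadratic trick, contributing only a lower-order $O(D^2)$. Combining the two parts gives $\E\brk{\reg}=O(D^2+D^{1/3}T^{2/3})=O(D^{1/3}T^{2/3})$ in the regime $T\ge\omega B^2\dix^2 D$, as claimed; the remaining algebra parallels the two cited proofs and is routine.
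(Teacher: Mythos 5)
Your high-level architecture---pseudo-inverse estimator fed to ODAFTRL, the convexity/unbiasedness reduction, \cref{asp:bandit_a}, the self-bounding absorption, and $q=\prn{\omega B^2\dix^2D/T}^{1/3}$---matches the paper's proof in \cref{app:bandit_delayed_self}, and your martingale-cancellation treatment of the \emph{noise} is a valid alternative to what the paper does: the paper instead keeps the product form $b_t\le\nrm{\tilde{\G}_{t-D:t}}_\F\nrm{\gtil}_\F$ from \cref{thm:AdaHedgeD}, expands it by the triangle inequality, and kills each delay cross term with the tower property and the conditional \emph{first}-moment bound $\E_t\brk{\nrm{\gtil}_\F}\le 2\dix\diy+\sqrt{\dix^2\omega/q}$ of \eqref{eq:nrm_gtilde}. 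Either route makes the noise contribution $O\prn{DT\dix^2\prn{\diy^2+\omega/q}}$ inside the square root, which is exactly what balances against $qT$ to give $D^{1/3}T^{2/3}$.

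The genuine gap is in your \emph{signal} term and in your last line. The squared-sum form $\frac12\nrm{\tilde{\G}_{t-D:t}}_\F^2$ has $\Theta(D^2)$ cross terms per window; the martingale argument annihilates only the noise--noise ones, while the signal cross terms $\inpr{\G_s,\G_{s'}}$ do not cancel. Your Cauchy--Schwarz bound gives $\sum_t\nrm{\G_{t-D:t}}_\F^2\le(D+1)^2b\sum_t S_t(\W_t)$, so after the outer square root and the absorption $c_1\sqrt{x}-c_2x\le c_1^2/(4c_2)$ you are left with an additive $\Theta(bB^2D^2/a)$, and the claim that $O(D^2+D^{1/3}T^{2/3})=O(D^{1/3}T^{2/3})$ ``in the regime $T\ge\omega B^2\dix^2D$'' is false: that assumption allows $D$ up to a constant multiple of $T$, whereas $D^2\le D^{1/3}T^{2/3}$ requires $D\le T^{2/5}$. (Bounding the signal cross terms by $\dix^2\diy^2$ instead gives $BD\dix\diy\sqrt{T}$, needing $D\le T^{1/4}$; routing them through the quartic absorption as in \cref{thm:delayed_regret_expectation_abstract_detail} gives $O(D^{4/3}T^{1/3})$, needing $D\le T^{1/3}$---the squared form is lossy for the signal no matter how it is absorbed.) The paper avoids this structurally: the product form has only $D$ cross terms per round, each bounded via the tower property by $\prn{2\dix\diy+\sqrt{\dix^2\omega/q}}^2$ with no $S_t$ dependence, so the signal enters only through the diagonal terms $\E_t\brk{\nrm{\gtil}_\F^2}\le 2bS_t(\W_t)+2\dix^2\omega/q$ of \cref{thm:evaluation of Gtilde}, the absorbed constant is $8bB^2/a$ independent of $D$, and every remaining delay term ($B\dix\diy D$, $BD\dix\sqrt{\omega/q}$, $B\dix\diy\sqrt{DT}$, \dots) is $O(D^{1/3}T^{2/3})$ whenever $D\le T$. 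To close your gap, replace the squared form by the product form and the first-moment/tower argument; your noise analysis can stay, but it cannot rescue the signal.
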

The proof can be found in \cref{app:bandit_delayed_self}.  
Due to the presence of the delay, the surrogate regret bound worsens by a factor of $D^{1/3}$  
compared to the non-delayed bandit setting.
Additionally, we present algorithms for the variable-delay setting, which we defer to \cref{app:variable_delay} due to space limitations.

The algorithm in this section employs ODAFTRL rather than BOLD for the following reasons.
First, ODAFTRL leads to at least as good regret upper bounds as BOLD under bandit delayed feedback.
BOLD-based algorithms with the inverse-weighted gradient estimator and the pseudo-inverse matrix estimator attain regret upper bounds of $O(\sqrt{KDT})$ and $O(D^{1/3}T^{2/3})$, respectively, which are not better than the bounds in \Cref{thm:delay_bandit_bound_general_abstract,thm:delay_bandit_bound_self_abstract} obtained with ODAFTRL.
Second, as noted by \citet{pmlr-v139-flaspohler21a}, approaches such as BOLD, which run multiple parallel instances, cause each instance to operate independently and observe only $T/(D+1)$ losses.
This reduction can significantly worsen empirical performance, particularly when $T$ is not very large relative to $D$.

\vspace{-3pt}
\section{Conclusion}
\label{sec:conclusion}
\vspace{-3pt}
We have developed several algorithms for online structured prediction under bandit and delayed feedback and analyzed their surrogate regret.  
Among these contributions, of particular note is the algorithm for bandit feedback whose surrogate regret bound does not explicitly depend on the output set size $K$, achieved by leveraging the pseudo-inverse matrix estimator.  
An important direction for future work is to investigate the corresponding lower bounds in the bandit feedback setting.  
The existing lower bound of $\Omega(\sqrt{T})$ in the graph-feedback setting~\citep{NEURIPS2021_Hoeven} suggests that a similar bound likely holds here as well; however, this has yet to be proven for our settings, and the tightness of our upper bounds remains an open question.

\section*{Acknowledgments and Disclosure of Funding}
The authors would like to express their sincere gratitude to the anonymous reviewers for their insightful feedback and constructive suggestions, which have significantly improved the manuscript, particularly the discussion of the upper bound under delayed feedback.
TT is supported by JST ACT-X Grant Number JPMJAX210E and JSPS KAKENHI Grant Number JP24K23852,
SS was supported by JST ERATO Grant Number JPMJER1903, and
KY is supported by JSPS KAKENHI Grant Number JP24H00703.

\bibliography{bib_list}
\bibliographystyle{plainnat}

\newpage
\appendix
\section{Notation}
\label{app: notation}
\cref{tab: notation} summarizes the symbols used in this paper.
\begin{table}[H]
\centering
\caption{Notation}
\renewcommand{\arraystretch}{1.2}
\begin{tabular}{ll}
\toprule
Symbol & Meaning \\ 
\midrule
$T \in \mathbb{N}$ & Time horizon \\
$d \in \mathbb{N}$ & Dimension of output space $\yy$ \\
$B\coloneqq\diam(\ww)$ & Diameter of $\ww$\\
$\dix = \max_{\bm{x} \in \xx} \| \bm{x} \|_2$ & Maximum norm of input vectors in $\xx$\\
$\diy$ & \makecell[l]{The maximum of the largest Euclidean norm \\  of vectors in $\conv(\yy)$ or the diameter of $\conv(\yy)$}\\
$\K\coloneqq|\yy|$ & Cardinality of $\yy$\\
\midrule
$\mathsf{ALG}$ & Algorithm for updating linear estimators \\
$\yht \in\yy$ & Output chosen by the learner at round $t$ \\
$p_t(\bmy)$ & Probability that $\bmy$ is chosen as $\hat{\bm{y}}_t$ at time $t$ \\
$L\colon \yy\times\yy\xrightarrow{}\R_{\geq0}$ & Target loss function\\ 
$L_t(\yht)\coloneqq L(\yht;\yt)$& Value of target loss $L(\yht;\yt)$\\
$S_\Omega\colon \R^n\times\yy\xrightarrow{}\R_{\geq0}$ & Fenchel--Young loss generated by $\Omega$\\
$S_t(\W)\coloneqq S_\Omega(\W\xt;\yt)$& Shorthand of surrogate loss $S_\Omega(\W\xt;\yt)$\\
$\reg\coloneqq\sumt{(L_t(\yht)-\su)}$ & Surrogate regret \\
$\G_t\coloneqq\nabla\sw$& Gradient of surrogate loss\\
${\hat{\G}_t}$& Inverse weighted estimator \\
${\tilde{\G}_t}$& Pseudo-inverse matrix estimator \\
\midrule
$\mu$ &  Uniform distribution over $\yy$ \\
$\bm{P}_t=\E_{\bmy\sim p_t}\brk{\bmy\bmy^\top}$ & Second moment matrix under $p_t$\\
$\bm{Q}=\E_{\bmy\sim\mu}\brk{\bmy\bmy^\top}$ & Second moment matrix under $\mu$\\
$\lambda_{\min}\prn{\bm{A}}$ & Minimum eigenvalue of matrix $\bm{A}$\\
$\omega $ & Upper bound of $\tr\prn{\inverse{\V}\bm{Q}^+\V}$\\
$\expect{\cdot}$ & Conditional expectation given $\hat{\bmy}_1, \dots, \hat{\bmy}_{t-1}$\\
\midrule
$D\in\mathbb{N}$ & Fixed-delay time\\
{$\tau_t$} & Variable delay time \\
{$\tau_\ast = \max_t\tau_t$} & Maximum value of delay time\\
{$\rho(t)$}  & \makecell[l]{Time step of the $t$th feedback from SOLID to BASE}\\
{$\tilde{\tau}_t=t-1-\sum_{s=1}^{\rho(t)-1}\ind\brk{s+\tau_s<\rho(t)}$} &\makecell[l]{The number of feedback from SOLID to BASE\\  pending during the $t$th feedback} \\
\bottomrule
\end{tabular}
\label{tab: notation}
\end{table}
\section{Additional related work}\label{app:additional_related_work}
We discuss additional related work that could not be included in the main text.

\paragraph{Structured prediction}
Before the development of the Fenchel--Young loss framework, \citet{Niculae18sparse} proposed SparseMAP, which used the squared $\ell_2$-norm regularization.
The Fenchel--Young loss, described in \cref{subsec:fenchel-young}, is built upon the idea of SparseMAP. 
The Structure Encoding Loss Function (SELF) was introduced by \citet{ciliberto16consistent,ciliberto20general} to analyze the relationship between surrogate and target losses, a concept known as Fisher consistency.
For a more extensive literature review, we refer the reader to \citet[Appendix A]{pmlr-v247-sakaue24a}.

\paragraph{Online classification with full and bandit feedback}
In the full-information setting, the perceptron is one of the most representative algorithms for binary classification \citep{Rosenblatt1958-sh}, and the multiclass setting has also been extensively studied \citep{,crammer2003ultraconservative,Fink2006-sx}.
Online logistic regression is another relevant research stream, with \citet{foster18logistic} being a particularly representative study. 
The study of the bandit setting was initiated by \citet{Kakade2008EfficientBA}, and it has since been extensively explored in subsequent research \citep{hazan11newtron,beygelzimer17efficient,foster18logistic}. However, to the best of our knowledge, no prior work has addressed general structured prediction under bandit feedback. 
One of the most relevant studies is the work by \citet{gentile14multilabel}, who investigated online multilabel classification and ranking. 
However, their setting assumes access to feedback of the form $\set{\ind[\bmy_{t,i} \neq \hat{\bmy}_{t,i}]}_{i}$, which is more informative than bandit feedback and differs from our setting.
\Citet{NEURIPS2020_Hoeven} introduced the surrogate regret in the context of online multiclass classification. This study has been extended to the setting where observations are determined by a directed graph \Citep{NEURIPS2021_Hoeven} and to structured prediction \citep{pmlr-v247-sakaue24a}. For a more extensive overview of the literature on online classification, we refer the reader to \Citet{NEURIPS2020_Hoeven}.\looseness=-1

\paragraph{Delayed feedback}
The study of delayed feedback was initiated by \citet{Weinberger_2002_delay}. 
Since then, it has been extensively explored in various online learning settings, primarily in the full-information setting of online convex optimization \citep{Mesterharm05online,joulani13online,joulani16delay,pmlr-v139-flaspohler21a}. 
Algorithms for delayed bandit feedback have been studied mainly in the context of multi-armed bandits and their variants \citep{cesabianchi16delay,zimmert20optimal,ito20delay,masoudian22best,hoeven23unified}. In online classification, research considering delay is scarce; the only work is that of \citet{manwani2022delaytronefficientlearningmulticlass} to our knowledge.
There are several differences between their work and ours. 
Among them, a key distinction is that their study focuses on multiclass classification, whereas we address the more general OSP.

\section{Discussion on the surrogate regret}\label{app:surrogate_loss}
Our work employs the surrogate regret as the performance measure, which represents the excess target loss relative to the surrogate loss achieved by the best offline estimator.
This differs from the standard regret, which is defined solely in terms of the target loss.
Below, we discuss the motivation and background of the surrogate regret, and compare it to the standard regret.

\subsection{Background and motivation}
Although the term ``surrogate regret'' has only recently come into use, its concept dates back to the classic analysis of the perceptron \citep{Rosenblatt1958-sh,Novikoff1962-gc}.
Specifically, the celebrated convergence of the perceptron under linear separability can be interpreted as a finite upper bound on the surrogate regret, where the hinge loss of the best offline estimator, $\sum_{t=1}^T S_t(\bm{U})$, equals zero; see \citet[Section 8.2]{orabona2023modernintroductiononlinelearning}.
Since then, similar performance measures have continued to attract considerable attention in the literature \citep{GentileLittlestone1999RobustnessPNorm,CesaBianchi2005SecondOrderPerceptron,Cesa-Bianchi_Lugosi_2006,Fink2006-sx,foster18logistic,Kakade2008EfficientBA}. 
The concept of the surrogate regret was highlighted in the recent work by \Citet{NEURIPS2020_Hoeven} on online classification, and the terminology was explicitly used in the subsequent work by \Citet{NEURIPS2021_Hoeven}. Later, \Citet{pmlr-v247-sakaue24a} extended this concept to online structured prediction.

The surrogate regret is designed to evaluate how small the cumulative target loss can be made, sharing the same spirit as the standard regret in this regard.
The main motivation for using the surrogate regret lies in the empirical observation that the cumulative surrogate loss can often be made very small.
An extreme case is the linearly separable setting considered in the convergence analysis of the perceptron, where $\sum_{t=1}^T S_t(\bm{U}) = 0$ holds for the hinge loss.
Thus, the surrogate regret naturally captures the data-dependent easiness of a problem and yields better upper bounds on the cumulative target loss as the cumulative surrogate loss becomes smaller.

\subsection{Comparison to the standard regret}
In online classification, given a hypothesis class $\mathcal{H}$ consisting of mappings from $\mathcal{X}$ to $\Delta_d$, the standard regret is defined as $\sumt{\ind\brk{\hat\bmy_t\neq\bmy_t}} - \inf_{h \in \mathcal{H}}\sumt{\ind\brk{h(\bm{x}_t)\neq\bmy_t}}$. 
Unlike the surrogate regret, it is defined solely in terms of the target loss.
At a conceptual level, the standard regret focuses more on worst-case analysis under the agnostic setting, whereas the surrogate regret is designed to benefit from data-dependent analysis, as discussed above.
For the standard regret, \citet{daniely15a} established a lower bound of $\Omega\prn{\sqrt{\mathrm{Ldim}(\mathcal{H})T}}$, where $\mathrm{Ldim}(\mathcal{H})$ denotes the Littlestone dimension of $\mathcal{H}$.
This does not contradict the finite upper bound on the surrogate regret, since the cumulative surrogate loss may grow with $T$.

\subsection{Discussion on the difference in surrogate loss functions}\label{app:Discussio_on_the_Difference_in_Surrogate_Losses}

As in \cref{sec:introduction}, the surrogate regret, $\reg$, is defined by $\sumt{L(\yht;\yt)}=\sumt{S(\U\xt;\yt)}+\reg$, which means the choice of the surrogate loss function, $S$, affects the bound on the cumulative loss $\sumt{L(\yht;\yt)}$.
\Citet[Theorem~1]{NEURIPS2021_Hoeven}, which applies to a more general setting than bandit feedback, implies $\reg = O(K\sqrt{T})$ for the bandit setting with $S$ being a logistic loss defined with the base-$K$ logarithm. 
On the other hand, our bound of $\reg = O(\sqrt{KT})$ applies to the logistic loss $S$ defined with the base-$2$ logarithm. 
As a result, while our bound on $\reg$ is better, the $\sumt{S(\U\xt;\yt)}$ term can be worse; this is why we cannot directly compare our $O(\sqrt{KT})$ bound with the $O(K\sqrt{T})$ bound in \Citet[Theorem~1]{NEURIPS2021_Hoeven}. 
We may use the decoding procedure in \Citet{NEURIPS2021_Hoeven}, instead of RDUE, to recover their bound that applies to the base-$K$ logistic loss.
It should be noted that their method is specific to multiclass classification;  
naively extending their method to structured prediction formulated as $|\yy|$-class classification results in the undesirable dependence on $K = |\yy|$, as is also discussed in \citet{pmlr-v247-sakaue24a}. 
By contrast, our pseudo-inverse estimator, combined with RDUE, can rule out the explicit dependence on $K$, at the cost of the increase from $\sqrt{T}$ to $T^{2/3}$.
\section{Details omitted from \cref{sec:bandit}}
\label{app:proof bandit}
This section provides the omitted details of \cref{sec:bandit}.

\subsection{Concentration inequality}
To prove high probability regret bounds, we will use the following concentration inequality.
\begin{lemma}[{Bernstein's inequality, \eg~\cite[Lemma A.8]{Cesa-Bianchi_Lugosi_2006}}]
    \label{lem:Bernstein}
    Let $Z_1,\hdots,Z_T$ be a martingale difference sequence and $\delta \in (0,1)$.
    If there exist $a$ and $v$ which satisfy $|Z_t|\leq a$ for any $t \in \brk{T}$ and $\sumt{\expect{Z_t^2}}\leq v$ , then with probability at least $1-\delta$, it holds that
    \[
        \sumt{Z_t}\leq\sqrt{2v\log\frac{1}{\delta}}+\frac{\sqrt{2}}{3}a\log\frac{1}{\delta}.
    \]
\end{lemma}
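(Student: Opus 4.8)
The plan is to prove this by the exponential-moment (Chernoff) method for martingales, the classical route to Bernstein-type bounds. Write $\mathcal{F}_{t-1}$ for the $\sigma$-algebra generated by the past, so that $\mathbb{E}[Z_t \mid \mathcal{F}_{t-1}] = 0$ by the martingale-difference property and $\expect{Z_t^2}$ in the statement is exactly $\mathbb{E}[Z_t^2 \mid \mathcal{F}_{t-1}]$. First I would reduce the tail estimate to the control of an exponential moment: for any $\lambda > 0$ and threshold $r > 0$, Markov's inequality applied to $\exp(\lambda \sum_{t=1}^T Z_t)$ gives
\[
    \Pr\Bigl[\textstyle\sum_{t=1}^T Z_t \ge r\Bigr] \le e^{-\lambda r}\, \mathbb{E}\Bigl[\exp\bigl(\lambda \textstyle\sum_{t=1}^T Z_t\bigr)\Bigr].
\]
The task then becomes to bound this exponential moment using the range bound $|Z_t| \le a$ and the variance control $\sum_t \mathbb{E}[Z_t^2 \mid \mathcal{F}_{t-1}] \le v$.

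The key step is a per-step conditional moment-generating-function bound. Using that $\varphi(x) \coloneqq (e^x - 1 - x)/x^2$ (with $\varphi(0) = 1/2$) is nondecreasing and that $\lambda Z_t \le \lambda a$, I obtain the pointwise inequality $e^{\lambda Z_t} \le 1 + \lambda Z_t + (\lambda Z_t)^2 \varphi(\lambda a)$. Taking $\mathbb{E}[\,\cdot \mid \mathcal{F}_{t-1}]$, cancelling the linear term via $\mathbb{E}[Z_t \mid \mathcal{F}_{t-1}] = 0$, and applying $1 + x \le e^x$ yields $\mathbb{E}[e^{\lambda Z_t} \mid \mathcal{F}_{t-1}] \le \exp\bigl(c(\lambda)\, \mathbb{E}[Z_t^2 \mid \mathcal{F}_{t-1}]\bigr)$, where $c(\lambda) \coloneqq (e^{\lambda a} - 1 - \lambda a)/a^2$. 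Crucially this uses only the conditional zero-mean and range properties, not independence.

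Next I would assemble these into a supermartingale. With $V_t \coloneqq \sum_{s \le t} \mathbb{E}[Z_s^2 \mid \mathcal{F}_{s-1}]$, the process $M_t \coloneqq \exp\bigl(\lambda \sum_{s \le t} Z_s - c(\lambda) V_t\bigr)$ satisfies $\mathbb{E}[M_t \mid \mathcal{F}_{t-1}] \le M_{t-1}$ by the step bound, so $\mathbb{E}[M_T] \le M_0 = 1$. Since $V_T \le v$ almost surely, I can write $\exp(\lambda \sum_t Z_t) \le M_T\, e^{c(\lambda) v}$, whence $\mathbb{E}[\exp(\lambda \sum_t Z_t)] \le e^{c(\lambda) v}$ and therefore $\Pr[\sum_t Z_t \ge r] \le \exp\bigl(c(\lambda) v - \lambda r\bigr)$.

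Finally I would optimize over $\lambda$. Relaxing $c(\lambda) \le \tfrac{\lambda^2}{2(1 - \lambda a/3)}$ for $0 < \lambda a < 3$, I would tune $\lambda \in (0, 3/a)$, set the exponent equal to $-\log(1/\delta)$, and invert the induced relation in $r$; solving the resulting quadratic and using $\sqrt{x+y} \le \sqrt{x} + \sqrt{y}$ produces the stated $r = \sqrt{2 v \log(1/\delta)} + \tfrac{\sqrt{2}}{3} a \log(1/\delta)$. \textbf{Main obstacle.} The two delicate points are the per-step MGF bound, whose validity rests on the monotonicity of $\varphi$ together with the conditional zero-mean identity rather than on independence, and the closing optimization, where one must keep $\lambda$ inside the admissible range $(0, 3/a)$ while extracting the precise constants. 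A further subtlety worth flagging is that $V_T$ is only predictable (hence random), yet it is dominated by the deterministic bound $v$; this deterministic domination is exactly what allows the supermartingale inequality $\mathbb{E}[M_T] \le 1$ to be converted into a bound on $\mathbb{E}[\exp(\lambda \sum_t Z_t)]$, closing the argument.
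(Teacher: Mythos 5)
Your proposal should first be set against the fact that the paper does not prove this lemma at all: it imports it verbatim from \citet[Lemma A.8]{Cesa-Bianchi_Lugosi_2006}. What you have written is essentially a reconstruction of the Freedman-style argument behind that citation, and the skeleton is sound: the per-step bound $\mathbb{E}[e^{\lambda Z_t}\mid\mathcal{F}_{t-1}]\leq\exp\bigl(c(\lambda)\,\mathbb{E}[Z_t^2\mid\mathcal{F}_{t-1}]\bigr)$ with $c(\lambda)=(e^{\lambda a}-1-\lambda a)/a^2$ via the monotonicity of $\varphi$, the supermartingale $M_t$, and the observation that the deterministic domination $V_T\leq v$ is what converts $\mathbb{E}[M_T]\leq 1$ into $\mathbb{E}[\exp(\lambda\sum_t Z_t)]\leq e^{c(\lambda)v}$ are all correct, and you rightly note that no independence is used.

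The genuine gap is in the closing optimization, which as described does not produce the stated constant. With the relaxation $c(\lambda)\leq\frac{\lambda^2}{2(1-\lambda a/3)}$ and the natural choice $\lambda=r/(v+ar/3)$, the exponent becomes $-\frac{r^2}{2(v+ar/3)}$; setting this equal to $-\log\frac{1}{\delta}$, solving the quadratic, and applying $\sqrt{x+y}\leq\sqrt{x}+\sqrt{y}$ yields
\[
r \;\leq\; \sqrt{2v\log\tfrac{1}{\delta}}+\frac{2a}{3}\log\tfrac{1}{\delta},
\]
i.e.\ the coefficient $\frac{2}{3}$, not the claimed $\frac{\sqrt{2}}{3}$ --- your route, taken literally, proves a weaker inequality. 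Nor can you shortcut by substituting the claimed threshold back into the exponent $\frac{r^2}{2(v+ar/3)}$: writing $p=\sqrt{2v\log\frac1\delta}$ and $q=\frac{\sqrt2}{3}a\log\frac1\delta$, the required inequality $(p+q)^2\geq 2v\log\frac1\delta+\frac{2a}{3}\log\frac1\delta\,(p+q)$ reduces to $(2-\sqrt2)pq\geq(\sqrt2-1)q^2$, which fails whenever $v<a^2\log(\frac1\delta)/18$. The clean repair within your framework is the exact Legendre inversion of the sub-gamma bound $\log\mathbb{E}\,e^{\lambda\sum_t Z_t}\leq\frac{\lambda^2 v}{2(1-\lambda a/3)}$: its convex conjugate at $r$ equals $\frac{9v}{a^2}h_1\bigl(\frac{ar}{3v}\bigr)$ with $h_1(u)=1+u-\sqrt{1+2u}$, and since $h_1^{-1}(w)=w+\sqrt{2w}$, one gets
\[
\Pr\Bigl[\textstyle\sum_{t=1}^T Z_t \geq \sqrt{2v\log\tfrac{1}{\delta}}+\frac{a}{3}\log\tfrac{1}{\delta}\Bigr]\leq\delta,
\]
which is strictly stronger than the lemma (since $\frac13<\frac{\sqrt2}{3}$) and hence implies it. So your architecture is right, but the step ``solve the quadratic and apply subadditivity'' must be replaced by this exact optimization over $\lambda$ (or one must settle for the $\frac{2}{3}$ constant and note that the lemma as stated then does not follow).
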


\subsection{Proof of high probability bound}
\label{app:proof_bandit_high_prob}
Here, we provide the proof of a high probability bound.
Hereafter, we let $S_{\max} = \max_{\W \in \ww} S_t(\W)$ and $\hat{S}_t(\W) = v_t S_t(\W) = \frac{\ind\brk{\yt = \yht}}{p_t(\yht)} S_t(\W)$.
The following theorem is the formal version of the high probability bound under the bandit feedback:
\begin{theorem}\label{thm:bandit_high_prob_formal}
Consider the bandit and non-delayed setting.
Let 
\begin{equation}
    \mathcal{C}
    =
    \prn*{
        \frac{3}{2 (a + \xi - 1)}  
        +
        1
    }
    K \Smax \log(2/\delta) 
    +
    \frac{B^2 K b}{2 (1 - \xi)}
    .
    \nonumber
\end{equation}

Then, for any $T \geq \mathcal{C}$ and $\delta \in (0,1/2)$, with probability at least $1-\delta$, the algorithm in \cref{subsec:Bandit_Structured_Prediction_with_General_Losses} with $q = \sqrt{\mathcal{C} / T}$ achieves
\begin{equation}
    \mathcal{R}_T
    \leq
    2
    \sqrt{
        \mathcal{C}
        T
    }
    +
    \sqrt{2 \log (2/ \delta)} \prn{\mathcal{C} T}^{1/4}
    +
    \prn*{ \frac{1-a}{2 (a + \xi - 1)} + 2 } \log (2/\delta).
    \nonumber
\end{equation}
\end{theorem}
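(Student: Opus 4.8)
The plan is to mirror the in-expectation analysis of \cref{thm:bandit_regret_expectation_abstract}, but replace every use of Jensen's inequality and of $\E_t[\hat{\G}_t]=\G_t$ with a concentration argument based on Bernstein's inequality (\cref{lem:Bernstein}). The core quantities to control are two martingale-difference sequences. First, to pass from the true surrogate losses to the importance-weighted ones, I would define $Z_t \coloneqq \hat S_t(\W_t) - S_t(\W_t) = (v_t - 1) S_t(\W_t)$, where $v_t = \ind[\yt=\yht]/p_t(\yht)$; this is a martingale difference since $\E_t[v_t]=1$. Second, to pass from the OGD regret guarantee (stated for the true gradients in \cref{lem:ogd}) to the estimated gradients $\hat{\G}_t$, I would track the inner-product deviation $\inpr{\hat{\G}_t - \G_t,\,\W_t - \U}$, which is again a martingale difference by unbiasedness of $\hat{\G}_t$.

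First I would bound $S_{1:T} = \sumt S_t(\W_t)$ with high probability in terms of the \emph{estimated} cumulative loss $\sumt \hat S_t(\W_t)$. Using $|Z_t|\le v_t S_{\max}\le (K/q)S_{\max}$ for the range parameter $a$ in Bernstein, and $\E_t[Z_t^2]\le \E_t[v_t^2]S_t(\W_t)^2 \le (K/q)S_{\max}\,S_t(\W_t)$ for the variance proxy $v$, I would obtain a bound of the form $\sumt\hat S_t(\W_t) \ge S_{1:T} - \sqrt{2(K/q)S_{\max}S_{1:T}\log(1/\delta)} - \tfrac{\sqrt 2}{3}(K/q)S_{\max}\log(1/\delta)$; solving this self-bounding inequality for $S_{1:T}$ gives the needed control. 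Next I would apply \cref{lem:ogd} to the estimated gradients to get $\sumt\inpr{\hat{\G}_t,\W_t-\U}\le \sqrt{2}B\sqrt{\sumt\nrm{\hat{\G}_t}_{\F}^2}$, then bound $\nrm{\hat{\G}_t}_{\F}^2 = v_t^2\nrm{\G_t}_{\F}^2\le (K/q)v_t\,b\,S_t(\W_t)$ using $p_t\ge q/K$ and \eqref{eq:St_smooth}, and convert $\sumt v_t S_t(\W_t)=\sumt\hat S_t(\W_t)$ back to $S_{1:T}$ via the first concentration step. A second Bernstein application to $\inpr{\hat{\G}_t-\G_t,\W_t-\U}$ bridges $\sumt\inpr{\hat{\G}_t,\W_t-\U}$ and $\sumt\inpr{\G_t,\W_t-\U}\ge\sumt(S_t(\W_t)-S_t(\U))$, where the range and variance of this sequence are again controlled by $B$, $\diy$, $S_{\max}$, and the inverse-probability factor $K/q$.

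Assembling the pieces, I would write $\reg = \sumt(L_t(\yht)-S_t(\U))$, invoke \cref{asp:bandit_a} to replace $\E_t[L_t(\yht)]$ by $(1-a)S_t(\W_t)+q$ (this is the step that introduces $qT$ and the coefficient $a+\xi-1$ once the high-probability slack $\xi$ from the concentration steps is absorbed), and combine with the OGD-plus-concentration bound on $\sumt(S_t(\W_t)-S_t(\U))$. The resulting inequality has the self-bounding shape $\reg \lesssim c_1\sqrt{S_{1:T}} - a'\,S_{1:T} + qT + (\text{lower-order }\log(1/\delta)\text{ terms})$, and applying $c_1\sqrt{x}-c_2 x\le c_1^2/(4c_2)$ eliminates $S_{1:T}$, leaving a bound in $q$ and $T$. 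Finally, plugging in $q=\sqrt{\mathcal C/T}$ and choosing $\mathcal C$ exactly as in the statement makes the $\mathcal C/q$-type term and the $qT$ term balance to $2\sqrt{\mathcal C T}$, with the remaining two summands collecting the $\sqrt{\log(2/\delta)}(\mathcal C T)^{1/4}$ variance contribution and the additive $\log(2/\delta)$ range contribution.

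The main obstacle I anticipate is the bookkeeping around the \emph{self-bounding} structure appearing twice: the unknown $S_{1:T}$ shows up both inside the square roots from the two Bernstein steps and as the linear term from \cref{asp:bandit_a}, so I must solve for $S_{1:T}$ carefully (and verify that the condition $T\ge\mathcal C$ guarantees the intermediate quadratic inequalities admit the claimed solution, in particular keeping $a+\xi-1>0$ and $1-\xi>0$ so the denominators in $\mathcal C$ are positive). A secondary subtlety is that the two martingale differences live on different scales—one involves $S_{\max}$ and the other involves $B\diy$—so the range and variance parameters fed into \cref{lem:Bernstein} must be matched to the correct sequence, and a union bound over the two (or three) events is needed, which is where the $\log(2/\delta)$ rather than $\log(1/\delta)$ originates.
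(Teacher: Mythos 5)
Your proposal shares the paper's ingredients (Bernstein via \cref{lem:Bernstein}, the OGD guarantee of \cref{lem:ogd} applied to $\hat{\G}_t$, the bound $v_t \le K/q$, \cref{asp:bandit_a}, the self-bounding step, and the final tuning $q=\sqrt{\mathcal{C}/T}$), but the concentration argument is routed differently from the paper's, and as written it has two genuine gaps.

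First, the martingale that must be controlled before anything else is absent from your list: $L_t(\yht)-\E_t[L_t(\yht)]$. The regret $\mathcal{R}_T$ is defined through the \emph{realized} target losses, whereas \cref{asp:bandit_a} only controls their conditional expectations; your assembly step silently ``invokes \cref{asp:bandit_a} to replace $\E_t[L_t(\yht)]$'' inside a sum that actually contains $L_t(\yht)$. The paper's proof opens precisely with the decomposition $\mathcal{R}_T=\sum_t(L_t(\yht)-\E_t[L_t(\yht)])+\sum_t(\E_t[L_t(\yht)]-S_t(\U))$ and applies Bernstein to the first sum (range $1$, variance proxy $(1-a)S_t(\W_t)+q$); after self-bounding, this step is the sole source of the $\sqrt{2\log(2/\delta)}\,(\mathcal{C}T)^{1/4}$ term in the statement. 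Neither of your two named martingales, $(v_t-1)S_t(\W_t)$ and $\langle\hat{\G}_t-\G_t,\W_t-\U\rangle$, can substitute for it --- both live entirely on the surrogate side. Your parenthetical ``(or three)'' is exactly this missing step, and it must be made explicit.

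Second, you misidentify $\xi$: it is not ``high-probability slack from the concentration steps absorbed'' into the bound. In the paper's proof, $\xi\in(0,1)$ is a deliberately chosen constant satisfying $a+\xi>1$ (concretely $\xi=(4+c)\gamma/(\lambda\nu)$ for small $c>0$), used to define $r_t=S_t(\U)-\xi S_t(\W_t)$; Bernstein is applied to the martingale $v_tr_t-r_t$, whose variance proxy $(K\Smax/q)(S_t(\U)+S_t(\W_t))$ contains $S_t(\U)$, which in turn forces a case analysis on whether $\sum_tS_t(\U)\le\sum_tS_t(\W_t)$ or not; the strictly negative coefficient $1-a-\xi<0$ on $\sum_tS_t(\W_t)$ then absorbs all $\sqrt{\sum_tS_t(\W_t)}$ terms. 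Your substitute --- bridging estimated and true gradients by Bernstein on $\langle\hat{\G}_t-\G_t,\W_t-\U\rangle$, whose variance involves only $\E_t[\|\hat{\G}_t\|_{\mathrm{F}}^2]\le(bK/q)S_t(\W_t)$ --- is a legitimately different route that avoids the case analysis and plausibly yields a bound of the same order; but it generates different constants and offers no mechanism producing the $1/(a+\xi-1)$ and $1/(1-\xi)$ factors in the stated $\mathcal{C}$, so it cannot prove the theorem with its stated constants. A smaller slip: in your assembly you need the upper bound $\sum_t\hat{S}_t(\W_t)\le S_{1:T}+\mathrm{slack}$ (Bernstein applied to $(v_t-1)S_t(\W_t)$, not its negation), since $\sum_t\hat{S}_t(\W_t)$ sits inside the OGD square root while $S_{1:T}$ carries the negative coefficient $-a$; the lower bound you wrote points the wrong way, though it is trivially fixable.
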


Before proving this theorem, we provide the following lemma:
\begin{lemma}\label{lem:hp_pre}
It holds that 
\begin{equation}
    \sum_{t=1}^T \prn*{ \E_t\brk*{L_t(\yht)} - \hat{S}_t(\U) }
    \leq
    \sum_{t=1}^T \prn*{ (1-a) S_t(\W_t) - \hat{S}_t(\W_t)  } + q T
    +
    \sqrt{2} B \sqrt{\frac{b}{q} \sum_{t=1}^T v_t S_t(\W_t) }
    .
    \nonumber
\end{equation}
\end{lemma}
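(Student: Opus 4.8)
The plan is to reduce \cref{lem:hp_pre} to a single application of the adaptive OGD guarantee (\cref{lem:ogd}) run on the \emph{estimated} gradients $\hat{\G}_t$, after peeling off the target-to-surrogate comparison supplied by \cref{asp:bandit_a}. The whole chain of inequalities is deterministic once the played outputs $\yht$ are fixed; the reason for stating the bound in this particular form is to expose a self-bounding structure (the random quantity $\sumt{v_t\sw}=\sumt{\hat{S}_t(\W_t)}$ appears on both sides) that the subsequent Bernstein step (\cref{lem:Bernstein}) can control in the full high-probability theorem.

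First I would invoke \cref{asp:bandit_a}, which gives $\expect{L_t(\yht)}\leq(1-a)\sw+q$ for every $t$. Subtracting $\hat{S}_t(\U)$ from both sides and summing over $t$ yields
\[
\sumt{\prn*{\expect{L_t(\yht)}-\hat{S}_t(\U)}}
\leq
\sumt{\prn*{(1-a)\sw-\hat{S}_t(\U)}}+qT .
\]
The next step is to convert the comparator $\hat{S}_t(\U)$ into the played $\hat{S}_t(\W_t)$ by adding and subtracting $\hat{S}_t(\W_t)$ inside the sum. This isolates the term $\sumt{\prn*{(1-a)\sw-\hat{S}_t(\W_t)}}$, which appears verbatim in the claim, and leaves a ``regret-like'' residual $\sumt{\prn*{\hat{S}_t(\W_t)-\hat{S}_t(\U)}}$ to be bounded.

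The key step is to bound this residual by the OGD regret on $\hat{\G}_t$. Since $\hat{S}_t(\W)=v_t S_t(\W)$ with the nonnegative weight $v_t=\ind\brk{\yt=\yht}/p_t(\yt)\geq0$, multiplying the first-order convexity inequality $\sw-\su\leq\inpr{\G_t,\W_t-\U}$ by $v_t$ and using $\hat{\G}_t=v_t\G_t$ gives $\hat{S}_t(\W_t)-\hat{S}_t(\U)\leq\inpr{\hat{\G}_t,\W_t-\U}$. Because the bound of \cref{lem:ogd} holds deterministically for \emph{any} gradient sequence, it applies to $\hat{\G}_t$ (with the stated learning rate $\eta_t=B/\sqrt{2\sum_{i\leq t}\nrm{\hat{\G}_i}_\F^2}$) and yields $\sumt{\inpr{\hat{\G}_t,\W_t-\U}}\leq\sqrt{2}B\sqrt{\sumt{\nrm{\hat{\G}_t}_\F^2}}$. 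It then remains to control the second moment \emph{path-wise}: from $\hat{\G}_t=v_t\G_t$, the smoothness bound \eqref{eq:St_smooth}, and the exploration lower bound $p_t(\yt)\geq q/\K$ (so that $v_t\leq\K/q$ whenever $\yht=\yt$), I obtain $\nrm{\hat{\G}_t}_\F^2=v_t^2\nrm{\G_t}_\F^2\leq v_t^2\,b\sw\leq\tfrac{b\K}{q}\,v_t\sw$, which produces the square-root term $\sqrt{2}B\sqrt{\tfrac{b\K}{q}\sumt{v_t\sw}}$. Chaining the three displays then proves the claim.

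The only genuinely delicate point is the word \emph{path-wise}: unlike in the proof of \cref{thm:bandit_regret_expectation_abstract}, I must not replace $\nrm{\hat{\G}_t}_\F^2$ by its conditional expectation $\nrm{\G_t}_\F^2/p_t(\yt)$, since the downstream high-probability analysis needs the residual to surface as the random quantity $\sumt{v_t\sw}$ rather than its mean $\sumt{\sw}$; this is exactly the self-bounding term carried with a negative sign, which is what makes Bernstein applicable. Everything else---the pass-through of convexity across the nonnegative random weight $v_t$, and the deterministic validity of the OGD regret bound on the estimated gradients---is routine, so the lemma is essentially a bookkeeping device that packages the per-round estimates into a form ready for the concentration argument.
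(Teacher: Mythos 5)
Your proof is correct and follows essentially the same route as the paper's: decompose around $\hat{S}_t(\W_t)$, bound the first piece via \cref{asp:bandit_a}, and bound the residual $\sum_{t=1}^T(\hat{S}_t(\W_t)-\hat{S}_t(\U))$ by applying the OGD guarantee of \cref{lem:ogd} path-wise to the estimated gradients $\hat{\G}_t$, then using \eqref{eq:St_smooth} together with $v_t \leq K/q$. One minor remark: both your derivation and the paper's own proof produce the factor $bK/q$ inside the square root, so the $b/q$ appearing in the lemma statement is evidently a typo (the downstream use in \cref{thm:bandit_high_prob_formal} also carries the $K$).
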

\begin{proof}
We have 
\begin{equation}
    \sum_{t=1}^T \prn*{ \E_t\brk*{L_t(\yht)} - \hat{S}_t(\U) }
    =
    \sum_{t=1}^T \prn*{ \E_t\brk*{L_t(\yht)} - \hat{S}_t(\W_t)  }
    +
    \sum_{t=1}^T \prn*{ \hat{S}_t(\W_t) - \hat{S}_t(\U) }.
    \nonumber
\end{equation}
From \cref{asp:bandit_a}, the first term is bounded as 
\begin{align}
    \sum_{t=1}^T \prn*{ \E_t\brk*{L_t(\yht)} - \hat{S}_t(\W_t)  }
    \leq
    \sum_{t=1}^T \prn*{ (1-a) S_t(\W_t) - \hat{S}_t(\W_t)  } + q T,
    \nonumber
\end{align}
and the second term is bounded as 
\begin{align}
    \sum_{t=1}^T \prn*{ \hat{S}_t(\W_t) - \hat{S}_t(\U) }
    &\leq
    \sqrt{2} B \sqrt{\sum_{t=1}^T \nrm{\hat{\G}_t}_{\F}^2 }
    =
    \sqrt{2} B \sqrt{\sum_{t=1}^T v_t^2 \nrm{\G_t}_{\F}^2 }
    \nonumber \\
    &\leq
    \sqrt{2} B \sqrt{b \sum_{t=1}^T v_t^2 S_t(\W_t) }
    \leq
    \sqrt{2} B \sqrt{\frac{b K}{q} \sum_{t=1}^T v_t S_t(\W_t) },
    \nonumber
\end{align}
where we used \cref{lem:ogd} and $v_t \leq K / q$.
Combining the above three, we obtain
\begin{equation}
    \sum_{t=1}^T \prn*{ \E_t\brk*{L_t(\yht)} - \hat{S}_t(\U) }
    \leq
    \sum_{t=1}^T \prn*{ (1-a) S_t(\W_t) - \hat{S}_t(\W_t)  } + q T
    +
    \sqrt{2} B \sqrt{\frac{b K}{q} \sum_{t=1}^T v_t S_t(\W_t) }
    ,
    \nonumber
\end{equation}
which completes the proof.
\end{proof}

\begin{proof}[Proof of \cref{thm:bandit_high_prob_formal}]
The surrogate regret can be decomposed as
\begin{equation}\label{eq:reg_dec_highp}
    \mathcal{R}_T 
    =
    \sum_{t=1}^T \prn*{ L_t(\yht) - \E_t\brk*{ L_t(\yht)} }
    +
    \sum_{t=1}^T \prn*{ \E_t\brk*{ L_t(\yht)} - S_t(\U) }
    .
\end{equation}
We first upper bound the first term in \eqref{eq:reg_dec_highp}.
Let $Z_t = L_t(\yht) - \E_t\brk*{ L_t(\yht)}$ for simplicity.
Then, we have $Z_t \leq 1$, $\E_t\brk*{Z_t} = 0$, and
$\E_t\brk*{Z_t^2} 
\leq 
\E_t\brk*{ \prn{L_t(\yht)}^2 }
\leq 
(1-a) S_t(\W_t) + q.
$
Hence, from Bernstein's inequality in \cref{lem:Bernstein}, for any $\delta' \in (0,1)$, at least $1 - \delta'$ we have 
\begin{equation}\label{eq:conc_zt}
    \sum_{t=1}^T Z_t
    \leq 
    \sqrt{2 \log (1/\delta') \sum_{t=1}^T \prn*{(1-a) S_t(\W_t) + q} }
    +
    \frac{\sqrt{2}}{3} \log (1/\delta')
    .
\end{equation}
We next consider the second term in \eqref{eq:reg_dec_highp}.
Define $r_t = S_t(\U) - \xi S_t(\W_t)$ for some $\xi \in (0, 1)$, which will be determined later,
and let $v_t = \ind[ \yt = \yht ] / p_t(\yht) \leq K/q$ for simplicity.
Then, we have $\E_t\brk{v_t r_t - r_t} = 0$, $\abs{v_t r_t - r_t} \leq K S_{\max} / q$, and
\begin{equation}
    \E_t\brk{ (v_t r_t - r_t)^2}
    \leq
    \E_t\brk{(v_t r_t)^2}
    \leq 
    \frac{K \Smax}{q} \abs{r_t}
    \leq 
    \frac{K \Smax}{q} \prn*{S_t(\U) + S_t(\W_t)}
    .
    \nonumber
\end{equation}
Hence, from Bernstein's inequality in \cref{lem:Bernstein}, for any $\delta'' \in (0,1)$, with probability at least $1 - \delta''$ we have 
\begin{equation}\label{eq:conc_vr}
    \sum_{t=1}^T \prn{v_t r_t - r_t} 
    \leq 
    \sqrt{3 \log (1/\delta'') \sum_{t=1}^T \frac{K \Smax}{q} \prn{S_t(\U) + S_t(\W_t)} }
    +
    \frac{\sqrt{2} K \Smax}{3q} \log(1/\delta'')
    .
\end{equation}
Below, we proceed by case analysis.

\textbf{When $\sum_{t=1}^T S_t(\U) \leq \sum_{t=1}^T S_t(\W_t)$.}
We first consider the case of $\sum_{t=1}^T S_t(\U) \leq \sum_{t=1}^T S_t(\W_t)$.
From \cref{lem:hp_pre}, we have
\allowdisplaybreaks
\begin{align}
    &
    \sum_{t=1}^T \E_t\brk*{L_t(\yht)} - q T\nonumber\\
    &\leq
    \sum_{t=1}^T v_t S_t(\U) 
    +
    \sum_{t=1}^T \prn*{ (1-a) S_t(\W_t) - v_t S_t(\W_t)  } 
    +
    \sqrt{2} B \sqrt{\frac{b K}{q} \sum_{t=1}^T v_t S_t(\W_t) }
    \nonumber \\
    &=
    \sum_{t=1}^T v_t \underbrace{\prn*{ S_t(\U) - \xi S_t(\W_t)  } }_{= r_t}
    -
    (1 - \xi) \sum_{t=1}^T v_t S_t(\W_t)\nonumber\\
    &\qquad
    +
    (1-a) \sum_{t=1}^T S_t(\W_t)
    +
    \sqrt{2} B \sqrt{\frac{b K}{q} \sum_{t=1}^T v_t S_t(\W_t) }
    \nonumber \\
    &\leq
    \sum_{t=1}^T v_t r_t
    +
    (1-a) \sum_{t=1}^T S_t(\W_t)
    +
    \frac{B^2 K b}{2 q (1 - \xi)},
    \nonumber
\end{align}
where the last inequality follows from 
$c_1\sqrt{x}-c_2x\leq{c_1^2}/\prn{4c_2}$ for $x \geq 0$, $c_1 \geq 0$, and $c_2 > 0$.
From the concentration result provided in \eqref{eq:conc_vr}, this is further bounded as
\begin{align}
    \sum_{t=1}^T \E_t\brk*{L_t(\yht)} - q T
    &\leq
    \sum_{t=1}^T (S_t(\U) - \xi S_t(\W_t))
    +
    \sqrt{3 \log (1/\delta'') \sum_{t=1}^T \frac{K \Smax}{q} \prn{S_t(\U) + S_t(\W_t)} }
    \nonumber \\
    &\qquad
    +
    \frac{\sqrt{2} K \Smax}{3q} \log(1/\delta'') 
    +
    (1-a) \sum_{t=1}^T S_t(\W_t)
    +
    \frac{B^2 K b}{2 q (1 - \xi)}
    ,
    \nonumber
\end{align}
where we recall that $r_t = S_t(\U) - \xi S_t(\W_t)$.
Rearranging the last inequality and using the inequality that $\sum_{t=1}^T S_t(\U) \leq \sum_{t=1}^T S_t(\W_t)$, we obtain
\begin{align}
    \sum_{t=1}^T \prn*{ \E_t\brk*{L_t(\yht)} - S_t(\U) } 
    &\leq
    q T 
    +
    \sqrt{6 \log (1/\delta'') \sum_{t=1}^T \frac{K \Smax}{q} S_t(\W_t) }
    +
    \frac{\sqrt{2} K \Smax}{3 q} \log(1/\delta'') 
    \nonumber \\
    &\qquad
    +
    (1 - a - \xi) \sum_{t=1}^T S_t(\W_t)
    +
    \frac{B^2 K b}{2 q (1 - \xi)}
    .
    \nonumber
\end{align}
In what follows, we let $\delta' = \delta'' = \delta / 2$ and $\xi = \frac{\prn{4 + c} \gamma}{\lambda \nu}$ for a sufficiently small constant $c > 0$, which satisfies $a + \xi > 1$.
Then, plugging \eqref{eq:conc_zt} and the last inequality in \eqref{eq:reg_dec_highp}, with probability at least $1 - \delta$, we obtain
\begin{align}
    \mathcal{R}_T
    &\leq
    \sqrt{2 \log (2/\delta) \sum_{t=1}^T \prn*{(1-a) S_t(\W_t) + q} }
    +
    \frac{\sqrt{2}}{3} \log (2/\delta)
    +
    q T \nonumber\\
    &\qquad+
    \sqrt{6 \log (2/\delta) \sum_{t=1}^T \frac{K \Smax}{q} S_t(\W_t) }
    +
    \frac{\sqrt{2} K \Smax}{3 q} \log(2/\delta) \nonumber\\
    &\qquad+
    (1 - a - \xi) \sum_{t=1}^T S_t(\W_t)
    +
    \frac{B^2 K b}{2 q (1 - \xi)}
    \nonumber \\
    &\leq
    \frac{1}{2 (a + \xi - 1)}
    \prn*{(1-a) + \frac{3 K \Smax}{q}} \log (2/\delta)
    +
    \sqrt{2 q T \log (2 / \delta)} 
    +
    \frac{\sqrt{2}}{3} \log (2/\delta)
    +
    q T 
    \nonumber \\
    &\qquad
    +
    \frac{\sqrt{2} K \Smax}{3 q} \log(2/\delta) 
    +
    \frac{B^2 b}{2 q (1 - \xi)}
    \nonumber \\
    &\leq
    \frac{1}{q}
    \prn*{
        \frac{3 K \Smax \log (2/\delta)}{2 (a + \xi - 1)}  
        +
        K \Smax \log(2/\delta) 
        +
        \frac{B^2 K b}{2 (1 - \xi)}
    }
    +
    q T 
    +
    \sqrt{2 q T \log (2 / \delta)} 
    \nonumber \\
    &\qquad
    +
    \frac{1}{2 (a + \xi - 1)} (1-a) \log (2/\delta)
    +
    \frac{\sqrt{2}}{3} \log (2/\delta)
    \nonumber \\
    &=
    \frac{\mathcal{C}}{q}
    +
    q T 
    +
    \sqrt{2 q T \log (2 / \delta)} 
    +
    \frac{1}{2 (a + \xi - 1)} (1-a) \log (2/\delta)
    +
    \frac{\sqrt{2}}{3} \log (2/\delta).
    \nonumber
\end{align}
Using the definition of $q = \sqrt{\mathcal{C} / T}$ with the last inequality,
we obtain
\begin{equation}
    \mathcal{R}_T
    \leq
    2
    \sqrt{
        \mathcal{C}
        T
    }
    +
    \prn{\mathcal{C} T}^{1/4} \sqrt{\log (2/ \delta)}
    +
    \prn*{ \frac{1-a}{2 (a + \xi - 1)} + \frac{\sqrt{2}}{3} } \log (2/\delta).
    \nonumber
\end{equation}

\textbf{When $\sum_{t=1}^T S_t(\U) > \sum_{t=1}^T S_t(\W_t)$.}
We next consider the case of $\sum_{t=1}^T S_t(\U) > \sum_{t=1}^T S_t(\W_t)$.
We have
\begin{align}
    \mathcal{R}_T 
    &=
    \sum_{t=1}^T \prn*{ L_t(\yht) - \E_t\brk*{ L_t(\yht)} }
    +
    \sum_{t=1}^T \prn*{ \E_t\brk*{ L_t(\yht)} - S_t(\U) }
    \nonumber \\
    &\leq
    \sqrt{2 \log (1/\delta') \sum_{t=1}^T \prn*{(1-a) S_t(\W_t) + q} }
    +
    \frac{\sqrt{2}}{3} \log (1/\delta')
    +
    \sum_{t=1}^T \prn*{ \E_t\brk*{ L_t(\yht)} - S_t(\W_t) }
    \nonumber \\
    &\leq
    \sqrt{2 \log (1/\delta') \sum_{t=1}^T \prn*{(1-a) S_t(\W_t) + q} }
    +
    \frac{\sqrt{2}}{3} \log (1/\delta')
    +
    \sum_{t=1}^T \prn*{ - a S_t(\W_t) + q }
    \nonumber \\
    &\leq
    \frac{(1 - a) \log (1/ \delta')}{ 2 a }
    +
    \sqrt{2 q T \log (1/\delta') }
    +
    \frac{\sqrt{2}}{3} \log (1/\delta')
    +
    q T
    ,
    \nonumber
\end{align}
where the first inequality follows from \eqref{eq:conc_zt} and $\sum_{t=1}^T S_t(\U) > \sum_{t=1}^T S_t(\W_t)$,
and the second inequality follows from \cref{asp:bandit_a},
the last inequality follows from $c_1\sqrt{x}-c_2x\leq{c_1^2}/\prn{4c_2}$ for $x \geq 0$, $c_1 \geq 0$, and $c_2 > 0$.
Substituting $q = \sqrt{\mathcal{C} / T}$ and $\delta' = \delta/2$ in the last inequality, we obtain
\begin{equation}
    \mathcal{R}_T 
    \leq
    \frac{(1 - a) \log (2/ \delta)}{ 2 a }
    +
    \sqrt{2 \log (2/\delta) } \prn{ \mathcal{C} T }^{1/4}
    +
    \frac{\sqrt{2}}{3} \log (2/\delta)
    +
    \sqrt{\mathcal{C} T}
    .
    \nonumber
\end{equation}
This completes the proof.    
\end{proof}

\subsection{Proof of \cref{thm:bandit_regret_pseudo_estimator}}
\label{app:sub_bandit_regret_pseudo_estimator}

Here, we provide the formal version and the proof of \cref{thm:bandit_regret_pseudo_estimator}.
\begin{theorem}[Formal version of \cref{thm:bandit_regret_pseudo_estimator}]
    \label{thm:bandit_regret_pseudo_estimator_formal}
    The algorithm in \cref{subsec:Bandit_Structured_Prediction_with_SELF} achieves 
    \[
    \E\brk{\reg}
    \leq
    \frac{bB^2}{a}
    +
    2^{5/3} \omega^{1/3} \prn*{ B \dix T}^{2/3}
    .
    \]
\end{theorem}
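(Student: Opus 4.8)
The plan is to follow the in-expectation argument of \cref{thm:bandit_regret_expectation_abstract}, replacing the $\K$-dependent second-moment bound of the inverse-weighted estimator with a $\K$-free bound for the pseudo-inverse matrix estimator $\gtil$. Since $\gtil$ is unbiased ($\E_t\brk*{\gtil}=\G_t$) and each $S_t$ is convex, we have $\E\brk*{\sumt{(\sw-\su)}}\le\E\brk*{\sumt{\inpr{\gtil,\W_t-\U}}}$. Applying \cref{lem:ogd} to the realized estimators $\gtil$ and then Jensen's inequality gives
\begin{equation}
    \E\brk*{\sumt{(\sw-\su)}}
    \le
    \sqrt{2}B\sqrt{\E\brk*{\sumt{\nrm{\gtil}_{\F}^2}}}.
    \nonumber
\end{equation}
Everything then reduces to bounding the conditional second moment $\E_t\brk*{\nrm{\gtil}_{\F}^2}$ without $\K$.

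Because $\gtil=\prn*{\yho(\tht)-\ytilde}\xt^\top$, I first bound $\nrm{\gtil}_{\F}^2\le\dix^2\nrm{\yho(\tht)-\ytilde}_2^2$ and use the unbiasedness $\E_t\brk*{\ytilde}=\yt$ to split
\begin{equation}
    \E_t\brk*{\nrm{\yho(\tht)-\ytilde}_2^2}
    =
    \nrm{\yho(\tht)-\yt}_2^2+\E_t\brk*{\nrm{\ytilde-\yt}_2^2}.
    \nonumber
\end{equation}
The first term is controlled exactly as in \eqref{eq:St_smooth}: by the self-bounding property $S_\Omega(\tht;\yt)\ge\frac{\lambda}{2}\nrm{\yt-\yho(\tht)}^2$ of \cref{prop:fenchel} and $\nrm{\cdot}_2\le\kappa\nrm{\cdot}$, we get $\dix^2\nrm{\yho(\tht)-\yt}_2^2\le\frac{2\dix^2\kappa^2}{\lambda}\sw=b\sw$. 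For the variance term I use $\E_t\brk*{\nrm{\ytilde-\yt}_2^2}\le\E_t\brk*{\nrm{\ytilde}_2^2}$, and then, using $\abs{\inpr{\yht,\V\yt}}\le1$ from condition~(iv) of \cref{asp:self}, $\E_t\brk*{\yht\yht^\top}=\bm{P}_t$, cyclicity of the trace, and $\bm{P}_t^+\bm{P}_t\bm{P}_t^+=\bm{P}_t^+$,
\begin{equation}
    \E_t\brk*{\nrm{\ytilde}_2^2}
    \le
    \E_t\brk*{\nrm{\inverse{\V}\bm{P}_t^+\yht}_2^2}
    =
    \tr\prn*{\inverse{\V}\bm{P}_t^+\bm{P}_t\bm{P}_t^+(\inverse{\V})^{\top}}
    =
    \tr\prn*{\inverse{\V}\bm{P}_t^+(\inverse{\V})^{\top}}.
    \nonumber
\end{equation}

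The step I expect to be the main obstacle is relating $\bm{P}_t^+$ to $\bm{Q}^+$. The uniform-exploration component of RDUE gives $\bm{P}_t=q\bm{Q}+(1-q)\bm{R}_t$, where $\bm{R}_t\succeq0$ is the second-moment matrix of the randomized-decoding output, so $\bm{P}_t\succeq q\bm{Q}$. The delicate point is that pseudo-inverse monotonicity requires $\bm{P}_t$ and $\bm{Q}$ to share the same range: since $\operatorname{range}(\bm{R}_t)\subseteq\operatorname{span}(\yy)=\operatorname{range}(\bm{Q})$, we indeed have $\operatorname{range}(\bm{P}_t)=\operatorname{range}(\bm{Q})$, and on this common subspace $\bm{P}_t\succeq q\bm{Q}\succ0$ yields $\bm{P}_t^+\preceq\frac{1}{q}\bm{Q}^+$. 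Both cases of condition~(ii) of \cref{asp:self} enter here: (ii-a) makes $\bm{Q}$ invertible so the common range is all of $\R^d$, while (ii-b) is what keeps $\V\yt$ inside $\operatorname{span}(\yy)$, which is also exactly what is needed for the unbiasedness $\E_t\brk*{\ytilde}=\yt$. With $(\inverse{\V})^{\top}\inverse{\V}\succeq0$, condition~(iii) then gives $\tr\prn*{\inverse{\V}\bm{P}_t^+(\inverse{\V})^{\top}}\le\frac{1}{q}\tr\prn*{\inverse{\V}\bm{Q}^+(\inverse{\V})^{\top}}\le\frac{\omega}{q}$. Collecting the two terms yields the key bound $\E_t\brk*{\nrm{\gtil}_{\F}^2}\le b\sw+\frac{\omega\dix^2}{q}$.

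Finally I assemble as in \cref{thm:bandit_regret_expectation_abstract}. Writing $S_{1:T}=\E\brk*{\sumt{\sw}}$ and using $\sqrt{x+y}\le\sqrt{x}+\sqrt{y}$, the displayed reduction becomes $\E\brk*{\sumt{(\sw-\su)}}\le\sqrt{2bB^2S_{1:T}}+\sqrt{2B^2\omega\dix^2T/q}$. Invoking \cref{asp:bandit_a} to pass from surrogate to target loss gives $\E\brk*{\reg}\le\E\brk*{\sumt{(\sw-\su)}}-aS_{1:T}+qT$, and bounding $\sqrt{2bB^2S_{1:T}}-aS_{1:T}\le\frac{bB^2}{2a}\le\frac{bB^2}{a}$ via $c_1\sqrt{x}-c_2x\le c_1^2/(4c_2)$ leaves $\frac{bB^2}{a}+\sqrt{2B^2\omega\dix^2T/q}+qT$. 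Plugging in $q=\prn*{4\omega B^2\dix^2/T}^{1/3}$ balances the last two terms and produces the claimed $\frac{bB^2}{a}+2^{5/3}\omega^{1/3}\prn*{B\dix T}^{2/3}$.
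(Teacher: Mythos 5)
Your proposal is correct and takes essentially the same route as the paper's own proof: reduce to the linearized regret via convexity and unbiasedness, apply \cref{lem:ogd} with Jensen, bound the second moment of $\gtil$ through $\tr\prn*{\inverse{\V}\bm{P}_t^+\prn{\inverse{\V}}^\top}\leq \omega/q$ using pseudo-inverse monotonicity (the paper's \cref{lem:pseudo_inverse_order}), and conclude with \cref{asp:bandit_a}, the inequality $c_1\sqrt{x}-c_2x\leq c_1^2/(4c_2)$, and the prescribed $q$. Your only deviations are refinements rather than a different argument---an exact bias--variance decomposition in place of the paper's $\nrm{\bm{u}+\bm{v}}^2\leq 2\nrm{\bm{u}}^2+2\nrm{\bm{v}}^2$ (saving a factor of $2$ in the second-moment bound), and an explicit verification of the common-range condition $\operatorname{range}(\bm{P}_t)=\operatorname{range}(\bm{Q})$ that the monotonicity lemma requires---so your constants are in fact slightly tighter than $2^{5/3}$, and the claimed bound follows.
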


We recall that $\pt=\expect{\yht\yht^\top}$.
We then estimate $\yt$ by $\ytilde=\inverse{\V}\Pplus_t\yht\inpr{\yht,\V\yt}$
and $\G_t$ by $\gtil\coloneqq(\yho(\tht)-\ytilde)\xt^\top$ under \cref{asp:self}.
This $\gtil$ satisfies 
$
    \expect{\gtil}=\G_t. 
$
To prove \cref{thm:bandit_regret_pseudo_estimator}, we will upper bound $\expect{\nrm{\gtil}_\F^2}$.
To do so, we begin by proving the following lemma:
\begin{lemma}\label{lem:pseudo_inverse_order}
    Let $\bm{A}$ and $\bm{B}$ positive semi-definite matrices with $\image(\bm{A}) = \image(\bm{B})$ and $\bm{A} \succeq \bm{B}$.
    Then, it holds that $\bm{A}^+ \preceq \bm{B}^+$.
\end{lemma}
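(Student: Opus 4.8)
The plan is to reduce to the positive-definite case by restricting to the common image and then invoking the classical antitonicity of matrix inversion. Write $V \coloneqq \image(\bm{A}) = \image(\bm{B})$ and let $P$ denote the orthogonal projection onto $V$. Since $\bm{A}$ and $\bm{B}$ are symmetric, their kernels both equal $V^\perp$, so each maps $V$ bijectively onto $V$ and vanishes on $V^\perp$. Denote by $\bm{A}_V, \bm{B}_V \colon V \to V$ the resulting restrictions; both are positive definite operators on $V$. The defining properties of the Moore--Penrose pseudo-inverse give the standard characterization (via the eigendecomposition) that $\bm{A}^+$ acts as $(\bm{A}_V)^{-1}$ on $V$ and as $0$ on $V^\perp$, and likewise for $\bm{B}^+$; in particular $\image(\bm{A}^+) = \image(\bm{B}^+) = V$ and $\ker(\bm{A}^+) = \ker(\bm{B}^+) = V^\perp$.

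Next I would transfer the order relation to the restrictions. From $\bm{A} \succeq \bm{B}$ we have $\bm{x}^\top \bm{A} \bm{x} \geq \bm{x}^\top \bm{B} \bm{x}$ for every $\bm{x}$; specializing to $\bm{x} \in V$ yields $\bm{A}_V \succeq \bm{B}_V \succ 0$ as operators on $V$. Then I would apply the operator antitonicity of inversion: for positive-definite operators, $\bm{A}_V \succeq \bm{B}_V$ implies $(\bm{A}_V)^{-1} \preceq (\bm{B}_V)^{-1}$. This is classical; it follows, for instance, by setting $\bm{C} = \bm{B}_V^{-1/2} \bm{A}_V \bm{B}_V^{-1/2} \succeq \bm{I}$ on $V$, so that $\bm{C}^{-1} \preceq \bm{I}$, and conjugating back by $\bm{B}_V^{-1/2}$.

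Finally I would lift the inequality back to $\mathbb{R}^d$. For arbitrary $\bm{x} = \bm{x}_V + \bm{x}_\perp$ with $\bm{x}_V \in V$ and $\bm{x}_\perp \in V^\perp$, the facts that $\bm{A}^+$ kills $V^\perp$ and maps into $V$ give $\bm{x}^\top \bm{A}^+ \bm{x} = \bm{x}_V^\top (\bm{A}_V)^{-1} \bm{x}_V$, and similarly $\bm{x}^\top \bm{B}^+ \bm{x} = \bm{x}_V^\top (\bm{B}_V)^{-1} \bm{x}_V$. The restricted inequality from the previous step then yields $\bm{x}^\top \bm{A}^+ \bm{x} \leq \bm{x}^\top \bm{B}^+ \bm{x}$ for all $\bm{x}$, i.e.\ $\bm{A}^+ \preceq \bm{B}^+$.

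I expect the main obstacle to be the careful use of the hypothesis $\image(\bm{A}) = \image(\bm{B})$: this assumption is exactly what lets the two pseudo-inverses be simultaneously described on the same subspace $V$ and its orthogonal complement, and it is indispensable---without it the conclusion fails, as in $\bm{A} = \bm{I} \succeq \operatorname{diag}(1,0) = \bm{B}$ but $\bm{A}^+ = \bm{I} \not\preceq \operatorname{diag}(1,0) = \bm{B}^+$. Making the characterization of $\bm{A}^+$ via the restriction $(\bm{A}_V)^{-1}$ rigorous, and verifying that the cross terms $\bm{x}_V^\top(\cdot)\bm{x}_\perp$ vanish in the last step, are the points requiring the most care; the antitonicity of inversion itself is standard and can be cited.
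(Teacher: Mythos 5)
Your proof is correct and follows essentially the same route as the paper's: the paper realizes your restriction-to-$V$ argument concretely by conjugating both matrices with a single orthogonal matrix $\bm{R}$ into block form $\begin{pmatrix} O & O \\ O & * \end{pmatrix}$, inverting the nonzero blocks, and invoking the same antitonicity of inversion for positive definite matrices. The only difference is presentational (coordinate-free restriction versus explicit block decomposition), plus your added care in justifying the antitonicity step and the vanishing cross terms, which the paper takes as standard.
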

\begin{proof}
Since $\image(\bm{A}) = \image(\bm{B})$, there exists an orthogonal matrix $\bm{R}$, a diagonal matrix $\bm{\Lambda}$, and an invertible matrix $\bm{B}'$ that has same dimensions as $\bm{\Lambda}$ such that 
\begin{equation}
    \bm{A}
    =
    \bm{R}
    \begin{pmatrix}
        O & O \\
        O & \bm{\Lambda} 
    \end{pmatrix}
    \bm{R}^\top
    \quad 
    \mbox{and}
    \quad
    \bm{B}
    =
    \bm{R}
    \begin{pmatrix}
        O & O \\
        O & \bm{B}'
    \end{pmatrix}
    \bm{R}^\top
    .
    \nonumber
\end{equation}
Then, we have
\begin{equation}\label{eq:Aplus_Bplus}
    \bm{A}^+
    =
    \bm{R}
    \begin{pmatrix}
        O & O \\
        O & \bm{\Lambda}^{-1}
    \end{pmatrix}
    \bm{R}^\top
    \quad 
    \mbox{and}
    \quad
    \bm{B}^+
    =
    \bm{R}
    \begin{pmatrix}
        O & O \\
        O & {\bm{B}'}^{-1}
    \end{pmatrix}
    \bm{R}^\top
    .
\end{equation}
From $\bm{A} \succeq \bm{B}$,
we have $\bm{\Lambda} \succeq \bm{B}'$, which implies $\bm{\Lambda}^{-1} \preceq {\bm{B}'}^{-1}$.
From this and \eqref{eq:Aplus_Bplus}, we have $\bm{A}^+ \preceq \bm{B}^+$, as desired.
\end{proof}

Using this lemma, we prove a property of $\pt$ and an upper bound of $\expect{\tr\prn*{\yht\yht^\top}}$.
In what follows, we use $\lambda_\min(\bm{A})$ to denote the minimum eigenvalue of a matrix $\bm{A}$.

\begin{lemma}
    \label{lem:bound of trace}
    Suppose that $\tr \prn*{ \V^{-1} \bm{Q} \prn{\V^{-1}}^\top } \leq \omega$ for $\bm{Q} = \E_{\bm{y} \sim \mu} \brk{ \bm{y} \bm{y}^\top }$, where we recall that $\mu$ is the uniform distribution over $\yy$.
    Then, we have
    \[
    \expect{\tr(\ytt\ytt^\top)}\leq \frac{\omega}{q}.
    \]
\end{lemma}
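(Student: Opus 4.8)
The plan is to expand $\ytilde\ytilde^\top$, control the scalar factor via \cref{asp:self}, take the conditional expectation over the decoding output $\yht\sim p_t$, and reduce the bound to a trace of $\bm{P}_t^+$ that I can control through a PSD comparison with $\bm{Q}$. Since $\ytilde=\inverse{\V}\bm{P}_t^+\yht\inpr{\yht,\V\yt}$, I first write
\[
\tr\prn*{\ytilde\ytilde^\top}
=
\inpr{\yht,\V\yt}^2\,\tr\prn*{\inverse{\V}\bm{P}_t^+\yht\yht^\top\bm{P}_t^+\prn*{\inverse{\V}}^\top}.
\]
As $\yht,\yt\in\yy$, part~(iv) of \cref{asp:self} gives $\inpr{\yht,\V\yt}^2\leq 1$ surely, so this scalar may be dropped. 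Taking $\expect{\cdot}$ and using linearity of the trace together with $\expect{\yht\yht^\top}=\bm{P}_t$ (since $\yht\sim p_t$) collapses the middle factor to $\bm{P}_t$.

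Next I would apply the Moore--Penrose identity $\bm{P}_t^+\bm{P}_t\bm{P}_t^+=\bm{P}_t^+$ to get
\[
\expect{\tr\prn*{\ytilde\ytilde^\top}}
\leq
\tr\prn*{\inverse{\V}\bm{P}_t^+\bm{P}_t\bm{P}_t^+\prn*{\inverse{\V}}^\top}
=
\tr\prn*{\inverse{\V}\bm{P}_t^+\prn*{\inverse{\V}}^\top},
\]
so it remains to bound the right-hand side by $\omega/q$. The structural input is that RDUE mixes the randomized-decoding law with the uniform law $\mu$ with probability $q$; hence $p_t=(1-q)p_t^\phi+q\mu$ and $\bm{P}_t=(1-q)\bm{P}_t^\phi+q\bm{Q}$, where $\bm{P}_t^\phi\succeq\bm{0}$ is the second-moment matrix of $\phi_\Omega$'s output. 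Dropping the first PSD term yields the comparison $\bm{P}_t\succeq q\bm{Q}$.

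To pass from $\bm{P}_t\succeq q\bm{Q}$ to a bound on $\bm{P}_t^+$, I would invoke \cref{lem:pseudo_inverse_order}, which reverses the order under pseudo-inversion but only under the image condition $\image(\bm{P}_t)=\image(q\bm{Q})$. Verifying this image equality is the main obstacle, and it is precisely where the geometry of $\yy$ enters: both $\bm{P}_t^\phi$ and $\bm{Q}$ are second-moment matrices of distributions supported on $\yy$, so their images lie in $\operatorname{span}(\yy)$, while $\image(\bm{Q})=\operatorname{span}(\yy)$ because $\mu$ has full support on $\yy$ (its kernel is exactly $\operatorname{span}(\yy)^\perp$). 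Since the image of a sum of PSD matrices is the sum of their images, $\image(\bm{P}_t)=\operatorname{span}(\yy)=\image(q\bm{Q})$. \cref{lem:pseudo_inverse_order} then gives $\bm{P}_t^+\preceq(q\bm{Q})^+=\tfrac1q\bm{Q}^+$, and because $\bm{A}\mapsto\tr(\inverse{\V}\bm{A}\prn*{\inverse{\V}}^\top)$ is monotone under the PSD order (congruence preserves positive semidefiniteness), combining this with part~(iii) of \cref{asp:self} gives
\[
\tr\prn*{\inverse{\V}\bm{P}_t^+\prn*{\inverse{\V}}^\top}
\leq
\frac1q\tr\prn*{\inverse{\V}\bm{Q}^+\prn*{\inverse{\V}}^\top}
\leq
\frac{\omega}{q},
\]
which is the claimed bound.
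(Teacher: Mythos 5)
Your proof is correct and takes essentially the same route as the paper's: drop the scalar factor via part (iv) of \cref{asp:self}, collapse the expectation with $\bm{P}_t^+\bm{P}_t\bm{P}_t^+=\bm{P}_t^+$, compare $\bm{P}_t\succeq q\bm{Q}$ through \cref{lem:pseudo_inverse_order}, and finish with part (iii). You are in fact more explicit than the paper on one point it leaves implicit, namely verifying the hypothesis $\image(\bm{P}_t)=\image(q\bm{Q})$ (via full support of $\mu$ and the fact that the image of a sum of PSD matrices is the sum of their images) before invoking \cref{lem:pseudo_inverse_order}.
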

\begin{proof}    
    By the linearity of expectation and the trace property, we have
    \begin{align*}
        \expect{\tr(\ytilde\ytilde^\top)}&\leq \tr\prn*{\inverse{\V}\Pplus_t\expect{\yht\yht^\top}\Pplus_t\prn*{\inverse{\V}}^\top}
        = \tr\prn*{\inverse{\V}\Pplus_t \bm{P}_t \Pplus_t \prn*{\inverse{\V}}^\top}\\
        &= 
        \tr\prn*{\inverse{\V}\Pplus_t\prn*{\inverse{\V}}^\top},
    \end{align*}
    where the first inequality follows from $-1 \leq \inpr{\yht,\V\yt} \leq 1$
    and
    the last equality follows from $\Pplus_t \bm{P}_t \Pplus_t = \Pplus_t$.
    The right-hand side is bounded as follows: 
    \begin{align}
        \tr\prn*{\inverse{\V}\Pplus_t\prn*{\inverse{\V}}^\top}
        &=
        \sum_{i=1}^d
        \bm{\ee}_i^\top \inverse{\V}\Pplus_t\prn*{\inverse{\V}}^\top \bm{\ee}_i
        \leq
        \sum_{i=1}^d
        \bm{\ee}_i^\top \inverse{\V} \prn*{q \bm{Q}}^{+} \prn*{\inverse{\V}}^\top \bm{\ee}_i
        \nonumber \\
        &\leq
        \tr\prn*{\prn*{\inverse{\V}}^\top \inverse{\V} (q \bm{Q})^+ }
        =
        \frac{1}{q}
        \tr \prn*{ \inverse{\V} \bm{Q}^+ \prn{\inverse{\V}}^\top } 
        \leq
        \frac{\omega}{q},
        \nonumber
    \end{align}
    where in the first inequality we used \cref{lem:pseudo_inverse_order} and in the last inequality we used the assumption that $\tr \prn*{ \V^{-1} \bm{Q}^+ \prn{\V^{-1}}^\top } \leq \omega$.
    This completes the proof.
\end{proof}

Now, we are ready to upper bound $\expect{\nrm{\gtil}_\F^2}$.
\begin{lemma}
    \label{thm:evaluation of Gtilde}
    Under the same assumption as \cref{lem:bound of trace}, it holds that 
    \[
        \expect{\nrm{\gtil}_\F^2}\leq2b\sw+ \frac{2 \dix^2 \omega}{q}.
    \]
\end{lemma}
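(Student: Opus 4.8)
The plan is to compare the pseudo-inverse estimator $\gtil$ directly with the true gradient $\G_t$ and to control the discrepancy through its second moment. Since $\gtil = \prn*{\yho(\tht) - \ytilde}\xt^\top$ and $\G_t = \prn*{\yho(\tht) - \yt}\xt^\top$ share the common term $\yho(\tht)\xt^\top$ and differ only through $\ytilde$ versus $\yt$, I would use the clean decomposition
\begin{equation}
    \gtil = \G_t + \prn*{\yt - \ytilde}\xt^\top.
    \nonumber
\end{equation}
Applying the elementary inequality $\nrm{\bm{A} + \bm{B}}_\F^2 \le 2\nrm{\bm{A}}_\F^2 + 2\nrm{\bm{B}}_\F^2$, taking the conditional expectation $\expect{\cdot}$, and using the rank-one identity $\nrm{\prn*{\yt - \ytilde}\xt^\top}_\F^2 = \nrm{\yt - \ytilde}_2^2\nrm{\xt}_2^2$ together with the fact that $\G_t$ is deterministic given the past, this gives
\begin{equation}
    \expect{\nrm{\gtil}_\F^2} \le 2\nrm{\G_t}_\F^2 + 2\nrm{\xt}_2^2 \, \expect{\nrm{\yt - \ytilde}_2^2}.
    \nonumber
\end{equation}

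For the first term I would invoke the self-bounding property \eqref{eq:St_smooth}, namely $\nrm{\G_t}_\F^2 \le b\sw$, and bound $\nrm{\xt}_2^2 \le \dix^2$ in the second. It then remains to control the mean-square deviation $\expect{\nrm{\yt - \ytilde}_2^2}$. Here I would exploit the unbiasedness $\expect{\ytilde} = \yt$ established under \cref{asp:self}: expanding the square and cancelling the cross term (recall $\yt$ is deterministic in the oblivious setting) yields $\expect{\nrm{\yt - \ytilde}_2^2} = \expect{\nrm{\ytilde}_2^2} - \nrm{\yt}_2^2 \le \expect{\tr\prn*{\ytilde\ytilde^\top}}$, so the variance is dominated by the second moment. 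Applying \cref{lem:bound of trace} bounds this last quantity by $\omega/q$, and combining the three estimates produces exactly $\expect{\nrm{\gtil}_\F^2} \le 2b\sw + 2\dix^2\omega/q$.

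The genuinely delicate point is this final step rather than the algebra: everything hinges on the unbiasedness of $\ytilde$, which in turn relies on condition (ii) of \cref{asp:self}, so that the bias–variance cross term vanishes and the mean-square deviation collapses to the second-moment trace already controlled by \cref{lem:bound of trace}. I would also take care to decompose around $\G_t$ rather than splitting $\yho(\tht) - \yt$ separately, since only this choice cleanly recovers the $2b\sw$ term via \eqref{eq:St_smooth}; a naive split would instead leave a term of the form $\dix^2\nrm{\yho(\tht)-\yt}_2^2$, which is weaker because $\dix^2$ replaces the tighter $\nrm{\xt}_2^2$ appearing inside $\nrm{\G_t}_\F^2$.
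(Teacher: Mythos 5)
Your proposal is correct and follows essentially the same route as the paper's own proof: the same decomposition $\gtil = \G_t + \prn{\yt - \ytilde}\xt^\top$, the same use of $\nrm{\bm{A}+\bm{B}}_\F^2 \le 2\nrm{\bm{A}}_\F^2 + 2\nrm{\bm{B}}_\F^2$ and the self-bounding property \eqref{eq:St_smooth}, and the same cancellation of the cross term via unbiasedness of $\ytilde$ before invoking \cref{lem:bound of trace}. No substantive difference.
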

\begin{proof}
    We have 
    \begin{align*}
        \nrm{\gtil}_\F^2&=\nrm{\prn*{\yho(\tht)-\ytilde}\xt^\top}_{\mathrm{F}}^2\leq2\nrm{(\yho(\tht)-\yt)\xt^\top}_\F^2+2\nrm{(\yt-\ytilde)\xt^\top}_\F^2\\
        &\leq 2\nrm{\G_t}_\F^2+2\dix ^2\nrm{\yt-\ytilde}_2^2,
    \end{align*}
    where we recall $\dix =\diam(\xx)$.
    From this inequality, we obtain
    \begin{align}
        \expect{\nrm{\gtil}_\F^2}&\leq2\nrm{\G_t}_{\mathrm{F}}^2+2\dix ^2\expect{\nrm{\yt-\ytilde}_2^2}\nonumber\\
        &\leq2b\sw+2\dix ^2\prn*{\nrm{\yt}_2^2-2\yt^\top\expect{\ytilde}+\expect{\nrm{\ytilde}_2^2}} \nonumber \\
        &=2b\sw+2\dix ^2\prn*{\nrm{\yt}_2^2-2\nrm{\yt}_2^2 + \expect{\nrm{\ytilde}_2^2}} \nonumber \\ 
        &\leq
        2b\sw
        +2\dix ^2\expect{\tr(\ytilde\ytilde^\top)}
        \leq
        2b\sw
        + \frac{2\dix^2 \omega}{q},
        \nonumber
    \end{align} 
    where in the second inequality we used $\nrm{\G_t}_\F^2 \leq b \sw$, in the equality we used $\expect{\ytilde}=\yt$, and in the last inequality we used \cref{lem:bound of trace}.
\end{proof}

Finally, we are ready to prove \cref{thm:bandit_regret_pseudo_estimator_formal}.
\begin{proof}[Proof of \cref{thm:bandit_regret_pseudo_estimator_formal}]
    From \cref{asp:bandit_a}, we have 
    \begin{align*}\label{eq:inverse_reg_expect}
        \E\brk{\reg}
        &\leq\E\brk*{\sumt{(\sw-\su)}}-a\E\brk*{\sumt{\sw}}+qT
        \nonumber \\
        &\leq\E\brk*{\sumt{\inpr*{\G_t,\wt-\U}}}-a\E\brk*{\sumt{\sw}}+qT.
    \end{align*}
    From \cref{thm:evaluation of Gtilde} and the unbiasedness of $\gtil$, the first term in the last inequality is further bounded as
    \begin{align*}
        \E\brk*{\sumt{\inpr*{\G_t,\wt-\U}}}  
        &=\E\brk*{\sumt{\inpr*{\gtil,\wt-\U}}}
        \leq\sqrt{2}B\sqrt{\E\brk*{\sumt{\nrm{\gtil}_{\mathrm{F}}^2}}}
        \nonumber \\
        &\leq
        2 B \sqrt{b\E\brk*{\sumt{\sw}}}
        +
        2 B \dix \sqrt{ \omega / q},
    \end{align*}
    where 
    the first inequality follows from \cref{lem:ogd} and the last inequality follows from \cref{thm:evaluation of Gtilde} and the subadditivity of $x \mapsto \sqrt{x}$ for $x \geq 0$.
    Therefore, we obtain
    \begin{align}
        \E\brk{\reg}
        &\leq 
        2B \sqrt{b\E\brk*{\sumt{\sw}}} 
        +
        2 B \dix \sqrt{ \omega / q} 
        -a \E\brk*{\sumt{\sw}} + qT \\ 
        &\leq 
        \frac{bB^2}{a}
        +
        2 B \dix \sqrt{ \omega / q} 
        +qT ,
        \nonumber
    \end{align}
    where we used $c_1\sqrt{x}-c_2x\leq{c_1^2}/\prn{4c_2}$ for $x\geq0$, $c_1\geq 0$, and $c_2>0$.
    Finally, substituting 
    $q=\prn*{\frac{4 \omega B^2\dix ^2}{ T }}^{1/3}$ in the last inequality gives
    \[
    \E\brk{\reg}
    \leq
    \frac{bB^2}{a}
    +
    2^{5/3} \omega^{1/3} \prn*{ B \dix T}^{2/3}
    ,
    \]
    which is the desired bound.
\end{proof}

\subsection{Proof of \cref{cor:thm_self}}\label{app:SELF_upper_discussion_deferred}
We derive the surrogate regret upper bounds provided by the algorithm established  
in \cref{thm:bandit_regret_pseudo_estimator_formal}  
for online multiclass classification, online multilabel classification, and ranking.
Recall that 
we can achieve
\begin{equation}\label{eq:bound_self_app}
\E\brk{\reg}
\leq
\frac{bB^2}{a}
+
2^{5/3} \omega^{1/3} \prn*{ B \dix T}^{2/3},    
\end{equation}
where we recall that $\omega$ is defined as
$
    \tr \prn*{ \V^{-1} \bm{Q}^+ \prn{\V^{-1}}^\top } \leq \omega
$
for $\bm{Q} = \E_{\bm{y} \sim \mu} \brk{ \bm{y} \bm{y}^\top }$.
Note that when $\spanx(\yy) = \R^d$, then the matrix $\bm{Q}$ is invertible and $\lambda_{\min}(\bm{Q}) > 0$, and hence 
\begin{align}\label{eq:trace_upper_invertibleQ}
    \tr \prn*{ \V^{-1} \bm{Q}^+ \prn{\V^{-1}}^\top }
    &=
    \sum_{i=1}^d
    \bm{\ee}_i^\top \V^{-1} \bm{Q}^+ \prn{\V^{-1}}^\top \bm{\ee}_i
    \leq
    \frac{1}{\lambda_{\min}(\bm{Q})}
    \sum_{i=1}^d
    \nrm{ \prn{\V^{-1}}^\top \bm{\ee}_i }_2^2 \nonumber \\
    &\leq 
    \frac{1}{\lambda_{\min}(\bm{Q})}
    \nrm{ \V^{-1} }_{\F}^2
    .
\end{align}
Consequently, surrogate regret bounds for specific problems are obtained as follows:

\paragraph{Multiclass classification with 0-1 loss}
We first consider multiclass classification with the 0-1 loss.
From $\V=\bm{1}\bm{1}^\top-\I$, we have $\nrm{\inverse{\V}}_\F^2\leq d$ for $d \geq 2$.  
Recalling that $\mu$ is the uniform distribution over $\yy=\set{\bm{\ee}_1,\hdots,\bm{\ee}_d}$, we have  
$
\E_{\bmy\sim\mu}\brk{(\bmy^\top\bm x)^2}=\frac{1}{d}\sum_{i=1}^{d}x_i^2
$
for any $\bm x\in\R^d$.
Hence, $\lambda_{\min}(\bm{Q})=\min_{\nrm{\bm x}_2=1} \E_{\bmy\sim\mu}\brk*{(\bmy^\top\bm x)^2}=\frac{1}{d}$, where the first equality follows from \cite[Lemma~2]{comband}.  
Since $\spanx(\yy) = \R^d$ holds in this case, from \eqref{eq:trace_upper_invertibleQ}, we can set $\omega = d / \lambda_{\min}(\bm{Q}) = d^2$.
Substituting these into our surrogate regret upper bound in \eqref{eq:bound_self_app}, we obtain  
\begin{equation*}
    \E\brk{\reg}\leq\frac{bB^2}{a}+ 2^{5/3} \prn*{d B \dix T}^{2/3}.
\end{equation*}

\paragraph{Online multilabel classification with $m$ correct labels   
and the Hamming loss}
We next consider online multilabel classification with the number of correct labels $m$  
and the Hamming loss.  
Since $\V=-\frac{2}{d}\I$, we have  
$\nrm{\inverse{\V}}_\F^2=\frac{d^3}{4}$.  
Let $\yy\subset\set{0,1}^d$ be the set of all vectors  
where exactly $m$ components are $1$, and the remaining components are all $0$.  
By drawing $\bmy\in\yy$ according to the uniform distribution over $\yy$,  
the probability that a given component of $\bmy$ is $1$ is  
$\binom{m-1}{d-1}/\binom{m}{d}=\frac{m}{d}$.  
Hence, for any $\bm x\in\R^d$ with $\nrm{\bm{x}}_2=1$,  
we have
\[
\E_{\bmy\sim\mu}\brk*{(\bmy^\top\bm x)^2}
=\frac{m}{d}\sum_{i=1}^dx_i^2  
+\frac{m^2}{d^2}\sum_{i\neq j}x_ix_j  
=\prn*{\frac{m}{d}\sum_{i=1}^dx_i}^2+\frac{m(d-m)}{d^2}\nrm{\bm x}_2^2  
\geq 
\frac{m(d-m)}{d^2}.
\]
Thus, $\lambda_{\min}(\bm{Q}) = \min_{\nrm{\bm x}_2=1}\E_{\bmy\sim\mu}\brk*{(\bmy^\top\bm x)^2} \geq\frac{m(d-m)}{d^2}$ holds, where the equality is from \citet[Lemma 2]{comband}.
Since we have $\spanx(\yy) = \R^d$,
from \eqref{eq:trace_upper_invertibleQ}, we can set $\omega = \frac{d^5}{4 m (d-m)} \ge \|\bm{V}^{-1}\|_{\mathrm{F}}^2 / \lambda_{\min}(\bm{Q})$.
Therefore, our surrogate regret upper bound in \eqref{eq:bound_self_app} is reduced to
\begin{equation*}
    \E\brk{\reg}
    \leq
    \frac{bB^2}{a}
    +
    2\prn*{\frac{d^5}{m(d-m)}}^{1/3} (B\dix T)^{2/3}.
\end{equation*}

\paragraph{Ranking with the Hamming loss and the number of items $m$}
We finally consider online ranking with the Hamming loss and the number of items $m$.  
From \citet[Proposition~4]{comband}, the smallest positive eigenvalue is at least $1/m$.
Hence, since $\V=-\frac{1}{m}\I$, we have
\begin{equation*}
\tr\prn{\inverse{\V}\bm{Q}^+(\inverse{\V})^\top}=m^2\tr\prn{\bm{Q}^+}\leq m^2\sum_{i=1}^{\rank(\bm{Q}^+)} m\leq m^5,
\end{equation*}
where we used $\rank(\bm{Q}^+) \leq d = m^2$,
and this allows us to choose $\omega = m^5$.
Substituting these values into our surrogate regret upper bound in \eqref{eq:bound_self_app} , we obtain  
\begin{equation*}
    \E\brk{\reg}\leq\frac{bB^2}{a}+ (2m)^{5/3}\prn{B\dix T}^{2/3}.
\end{equation*}

\section{Details omitted from \cref{sec:delay}}
\label{app:proof delay}
This section provides the proofs of the theorems in \cref{sec:delay}.

\subsection{Details of Optimistic Delayed Adaptive FTRL (ODAFTRL)}
\label{app:sub_odaftrl}
We provide a more detailed explanation of the Optimistic Delayed Adaptive FTRL (ODAFTRL) algorithm used for updating $\W_t$ in \cref{subsec:delay_odaftrl}.
Recall that ODAFTRL computes $\W_t$ by the following update rule:
\begin{equation}
    \label{eq:odaftrl_2}
    \W_{t+1}=\argmin_{\W\in \ww} \set*{ \sum_{i=1}^{t-D} \inpr{\bm{G}_i ,\W} + \frac{\lambda_t \nrm{\W}_{\F}^2 }{2} },
\end{equation}
where $\lambda_t\geq0$ is the regularization parameter.
Note that we use the notation $a_{1:t}=\sum_{i=1}^t a_i$ for simplicity in the following.
The ODAFTRL algorithm, when using the parameter update called AdaHedgeD, satisfies the following lemma:
\begin{lemma}[{\cite[Theorem 12]{pmlr-v139-flaspohler21a}}]
    \label{thm:AdaHedgeD}
    Fix $\alpha>0$. 
    Let $f_t\colon \ww\to\R$ be a convex function for each $t=1,\dots,T$.
    Suppose that we update $\lambda_{t}$ in \eqref{eq:odaftrl_2} by the following AdaHedgeD update:
    \begin{align*}
        &\lambda_{t+1}=\frac{1}{\alpha}\sum_{s=1}^{t-D}\delta_s,\\
        &\delta_t = \min\crl[\big]{F_{t+1}(\W_t)-F_{t+1}(\bar{\bm{W}}_t),\inpr{ \bm{G}_t,\W_t-\bar{\bm{W}}_t},F_{t+1}(\hat{\bm{W}}_t)-F_{t+1}(\bar{\bm{W}}_t)+\inpr{ \bm{G}_t,\W_t-\hat{\bm{W}}_t}}_+,\\
        &\bar{\bm{W}}_t = \argmin_{\W\in\ww}F_{t+1}(\W),\\
        &\hat{\bm{W}}_t= \argmin_{\W\in\ww}\set*{F_{t+1}(\W)-\min\crl*{\frac{\nrm{\bm{G}_t}_{\mathrm{F}}}{\nrm{\bm{G}_{t-D:t}}_{\mathrm{F}}},1}\inpr{\bm{G}_{t-D:t},\W}}, \text{  and}\\
        &F_{t+1}(\W)=\frac{\lambda_t\nrm{\W}_\F^2}{2}+\inpr{\G_{1:t},\W}.
    \end{align*}
    Then, for any $\U\in\ww$, ODAFTRL achieves
    \begin{equation*}
        \sumt{f_t(\W_t)}
        -
        \sumt{f_t(\U)}
        \leq 
        \sumt{\inpr{\G_t,\W_t-\U}}
        \leq
        \prn*{\frac{B^2}{2\alpha}+1}\prn*{2\max_{s\in[T]}a_{s-D:s-1}+\sqrt{\sum_{t=1}^{T}a_{t}^2+2\alpha b_{t}}},
    \end{equation*}
    where 
    \begin{align*}
        a_{t}&=B\min\crl{\nrm{\bm{G}_{t-D:t}}_{\mathrm{F}},\nrm{\bm{G}_t}_{\mathrm{F}}},\\
        b_{t}&=\operatorname{huber}\prn{\nrm{\bm{G}_{t-D:t}}_{\mathrm{F}},\nrm{\bm{G}_t}_{\mathrm{F}}},\:\text{and} \:
        \operatorname{huber}(x,y)=\frac{1}{2}x^2-\frac{1}{2}(|x|-|y|)^2_+\leq\min\crl*{\frac{1}{2}x^2,|x||y|}.
    \end{align*}
\end{lemma}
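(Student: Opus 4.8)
The plan is to read this as the regret analysis of \emph{delayed optimistic} FTRL equipped with AdaHedge-style data-adaptive regularization, and to assemble the bound from a comparator penalty plus a single ``stability sum'' $\sum_t \delta_t$ that absorbs both delay and adaptivity. First I would invoke the standard FTRL regret identity for the quadratic regularizer $\psi = \tfrac12\nrm{\cdot}_\F^2$ applied to the linear surrogates $\W\mapsto\inpr{\bm{G}_t,\W}$, writing $\sumt{\inpr{\G_t,\W_t-\U}} \le \lambda_{T+1}\,\psi(\U) + \sumt{\delta_t}$, where each $\delta_t$ is exactly the per-round gap in the statement (the difference between the linearized loss and the increment of the FTRL objective). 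Using $\psi(\U)\le\tfrac12 B^2$ from $\diam(\ww)=B$ together with the AdaHedgeD rule $\lambda_{T+1}=\tfrac1\alpha\sum_{s\le T-D}\delta_s$, the comparator penalty is at most $\tfrac{B^2}{2\alpha}\sumt{\delta_t}$, so the entire bound collapses to $\sumt{\inpr{\G_t,\W_t-\U}}\le\prn{\tfrac{B^2}{2\alpha}+1}\sumt{\delta_t}$. The whole problem thus reduces to controlling $\sumt{\delta_t}$.

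Next I would bound the increments using strong convexity and optimism. Strong convexity of $\lambda_t\psi$ gives $\delta_t\le\tfrac{1}{2\lambda_t}(\text{residual}_t)^2$, where the residual is the error in predicting $\bm{G}_t$. The optimistic hint built from the recent gradients $\bm{G}_{t-D:t}$ makes this residual collapse to the Huber quantity $b_t=\operatorname{huber}(\nrm{\bm{G}_{t-D:t}}_\F,\nrm{\bm{G}_t}_\F)$, while the unconditional fallback bound $\delta_t\le\inpr{\G_t,\W_t-\bar{\W}_t}$ supplies the crude $a_t^2$ scale; combining them yields $\delta_t\lesssim\frac{a_t^2+2\alpha b_t}{2\alpha\lambda_t}$. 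This is precisely why both $a_t^2$ and $2\alpha b_t$ appear under the square root in the final bound. The delay enters here because the $\delta_s$ that define $\lambda_t$ are only those available by round $t$, namely up to index $t-1-D$, so the regularizer lags the true running sum by $D$ rounds.

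The crux is the delayed self-bounding step. Substituting $\alpha\lambda_t\approx\sum_{s\le t-1-D}\delta_s$ into $\delta_t\lesssim\frac{a_t^2+2\alpha b_t}{2\alpha\lambda_t}$ produces a recursion of the ``increment over running sum'' type, to which the standard lemma $\sum_t x_t/\sqrt{\sum_{s\le t}x_s}\le 2\sqrt{\sum_t x_t}$ applies and would give $\sumt{\delta_t}\le\sqrt{\sumt{\prn{a_t^2+2\alpha b_t}}}$ \emph{were the denominator not stale}. To close the recursion despite the $D$-step lag, I would control the omitted block $\sum_{s=t-D}^{t-1}\delta_s$ by the largest recent gradient block, which is exactly $\max_s a_{s-D:s-1}$, so the self-bounding estimate becomes $\sumt{\delta_t}\le 2\max_s a_{s-D:s-1}+\sqrt{\sumt{\prn{a_t^2+2\alpha b_t}}}$. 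Multiplying through by $\prn{\tfrac{B^2}{2\alpha}+1}$ then gives the claimed bound.

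The main obstacle is this self-referential AdaHedge argument entangled with delay: since $\lambda_t$ is defined through past $\delta_s$, which themselves depend on earlier $\lambda$, no increment is bounded by a fixed quantity, and the $D$-step staleness means the denominator running sum drops the last $D$ increments, so the clean telescoping does not apply verbatim. The careful accounting that isolates exactly the staleness cost into the separate additive $2\max_s a_{s-D:s-1}$ term, while leaving the adaptive $\sqrt{\cdot}$ intact, is the delicate heart of the proof. A secondary subtlety is verifying the Huber refinement, \ie~confirming that the closed form of the optimistic hint really makes the per-round residual collapse to $b_t$ rather than the coarser $a_t^2$, since this is what buys the improvement over non-optimistic delayed FTRL.
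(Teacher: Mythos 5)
You should first note that the paper does not prove this lemma at all: it is imported verbatim as Theorem~12 of \citet{pmlr-v139-flaspohler21a}, and the only step the paper itself supplies (in \cref{app:sub_odaftrl}) is the first inequality, $\sum_{t}(f_t(\W_t)-f_t(\U))\le\sum_{t}\inpr{\G_t,\W_t-\U}$, which is immediate from convexity of each $f_t$. So there is no in-paper argument to compare against; what you have written is a reconstruction of the proof in the cited source, and judged on those terms its architecture is essentially the right one: (i) the AdaHedge-style decomposition $\mathrm{Regret}\le\lambda_{T+1}\sup_{\U\in\ww}\psi(\U)+\sum_t\delta_t$ with $\psi=\frac12\nrm{\cdot}_\F^2$, combined with $\alpha\lambda_{T+D+1}=\delta_{1:T}$ and $\lambda_{T+1}\le\lambda_{T+D+1}$, which collapses everything to $\prn*{\frac{B^2}{2\alpha}+1}\delta_{1:T}$; (ii) per-round control of $\delta_t$ via strong convexity and the optimistic hint; (iii) a delayed self-bounding argument that isolates the staleness cost into the additive $2\max_s a_{s-D:s-1}$ term.

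Two details in your step (ii)--(iii) are stated imprecisely and would need repair to make the argument close. First, the per-round control is not the merged estimate $\delta_t\lesssim\frac{a_t^2+2\alpha b_t}{2\alpha\lambda_t}$; what the proof actually uses are the \emph{two separate} bounds $\delta_t\le a_t$ and $\lambda_t\delta_t\le b_t$, deployed at different places. Second, the self-bounding step is not the running-sum lemma $\sum_t x_t/\sqrt{x_{1:t}}\le 2\sqrt{x_{1:T}}$ applied to a stale denominator; it is the quadratic expansion
\begin{equation*}
    \prn*{\delta_{1:T}}^2=\sum_{t=1}^T\prn*{\delta_{1:t}^2-\delta_{1:t-1}^2}
    =\sum_{t=1}^T\delta_t\prn*{2\delta_{1:t-1}+\delta_t},
\end{equation*}
followed by the split $\delta_{1:t-1}=\delta_{1:t-D-1}+\delta_{t-D:t-1}=\alpha\lambda_t+\delta_{t-D:t-1}$, so that $2\delta_t\alpha\lambda_t\le 2\alpha b_t$, $\delta_t^2\le a_t^2$, and $2\sum_t\delta_t\delta_{t-D:t-1}\le 2\prn*{\max_s a_{s-D:s-1}}\delta_{1:T}$. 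This yields $x^2\le c+2mx$ for $x=\delta_{1:T}$, $c=\sum_t(a_t^2+2\alpha b_t)$, $m=\max_s a_{s-D:s-1}$, and solving the quadratic gives $x\le m+\sqrt{m^2+c}\le 2m+\sqrt{c}$, which is exactly the claimed bound. Your plan gestures at all of this but, as written, neither the merged per-round bound nor the cited running-sum lemma would produce the clean separation between $\sqrt{\sum_t(a_t^2+2\alpha b_t)}$ and $2\max_s a_{s-D:s-1}$.
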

In the following, we let $\alpha = \frac{B^2}{2}$ for simplicity. 
Then, since $S_t$ is the convex function ,we have
\begin{equation*}
        \sumt{\sw}
        -
        \sumt{\su}
        \leq 
        \sumt{\inpr{\G_t,\W_t-\U}}
        \leq
        2\prn*{2\max_{s\in[T]}a_{s-D:s-1}+\sqrt{\sum_{t=1}^{T}a_{t}^2+B^2 b_{t}}}.
\end{equation*}

\subsection{{Proof of \cref{thm:delayed_regret_expectation_abstract}}}
\label{app:sub_delayed_regret_expectation_abstract}
We present \cref{thm:delayed_regret_expectation_abstract} in a more detailed form and provide its proof. 
{In what follows, let 
\[
    \diy=\max\set*{\max_{\bm{y} \in \conv(\yy)} \|\bm{y}\|_2, 
    \max_{\bm{y}, \bm{y}' \in \conv(\yy)} \|\bm{y} - \bm{y}'\|_2}
\] 
denote the maximum of the largest Euclidean norm of vectors in $\conv(\yy)$ or the diameter of $\conv(\yy)$.}
\begin{theorem}[Formal version of \cref{thm:delayed_regret_expectation_abstract}]
    \label{thm:delayed_regret_expectation_abstract_detail}
    Let $\alpha=\frac{B^2}{2}$.
    Then, ODAFTRL with the AdaHedgeD update in online structured prediction with a fixed delay of $D$ achieves
    \begin{equation*}
        \E\brk*{\reg}\leq4 B\dix\diy D +\frac{2bB^2}{a}
        +\min\crl*{\frac{b(D+1)^2}{2}, \frac{3}{2}\prn*{a^{-1}bB^4\dix^2\diy^2(D+1)^2T}^{1/3}}.
    \end{equation*}
\end{theorem}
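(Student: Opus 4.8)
The plan is to feed the surrogate losses $\sw$ into the ODAFTRL regret bound of \cref{thm:AdaHedgeD} with $\alpha=B^2/2$, convert the resulting AdaGrad-type gradient energy into a self-bounding quantity via $\nrm{\G_t}_\F^2\le b\,\sw$ from \eqref{eq:St_smooth}, and finally trade the gradient-energy terms against the $-a\,S_{1:T}$ supplied by \cref{asp:delayed_a}. A preliminary observation makes the expectation manageable: in the full-information delayed setting the iterate $\W_t$, and hence $\G_t=(\yho(\tht)-\yt)\xt^\top$ and $\sw$, are \emph{deterministic} functions of the (oblivious) data, since the gradients fed to ODAFTRL use the observed $\yt$ rather than the randomized $\yht$. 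The only randomness sits in $L_t(\yht)$, so \cref{asp:delayed_a} may be applied termwise to give $\E\brk{\reg}\le\sumt(\sw-\su)-a\,S_{1:T}$ with $S_{1:T}=\sumt\sw$.

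Applying \cref{thm:AdaHedgeD} yields $\sumt(\sw-\su)\le 4\max_s a_{s-D:s-1}+2\sqrt{\sumt(a_t^2+B^2 b_t)}$, where $a_t=B\min\crl{\nrm{\G_{t-D:t}}_\F,\nrm{\G_t}_\F}$ and $b_t$ is the Huber term of that lemma. Since $\nrm{\G_t}_\F=\nrm{\yho(\tht)-\yt}_2\nrm{\xt}_2\le\dix\diy$ and $a_t\le B\nrm{\G_t}_\F$, the offset term is at most $4DB\dix\diy$, giving the first summand. For the square-root term I would split $\sqrt{x+y}\le\sqrt x+\sqrt y$ into a curvature part $2\sqrt{\sumt a_t^2}$ and a delay part $2\sqrt{B^2\sumt b_t}$. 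Because $a_t^2\le B^2\nrm{\G_t}_\F^2\le B^2 b\,\sw$, the curvature part is $\le 2B\sqrt{b\,S_{1:T}}$, and absorbing it against $\tfrac a2 S_{1:T}$ with $c_1\sqrt x-c_2 x\le c_1^2/(4c_2)$ produces the second summand $\tfrac{2bB^2}{a}$.

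It remains to bound the delay part $2B\sqrt{\sumt b_t}-\tfrac a2 S_{1:T}$, and the $\min$ in the statement comes from two estimates of $\sumt b_t$. For the $T$-independent branch I would use $b_t\le\tfrac12\nrm{\G_{t-D:t}}_\F^2\le\tfrac{D+1}{2}\sum_{i=t-D}^t\nrm{\G_i}_\F^2$, so that $\sumt b_t\le\tfrac{(D+1)^2}{2}\sumt\nrm{\G_t}_\F^2\le\tfrac{(D+1)^2}{2}b\,S_{1:T}$, and absorbing against the remaining $\tfrac a2 S_{1:T}$ gives a bound of order $(D+1)^2$. For the $T$-dependent branch I would instead use the tighter Huber estimate $b_t\le\nrm{\G_{t-D:t}}_\F\nrm{\G_t}_\F\le(D+1)\dix\diy\,\nrm{\G_t}_\F$, keeping one surviving factor of $\nrm{\G_t}_\F$; then Cauchy--Schwarz over $t$ followed by \eqref{eq:St_smooth} gives $\sumt b_t\le(D+1)\dix\diy\sumt\nrm{\G_t}_\F\le(D+1)\dix\diy\sqrt{T\sumt\nrm{\G_t}_\F^2}\le(D+1)\dix\diy\sqrt{Tb\,S_{1:T}}$. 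Substituting turns the delay part into $2B\sqrt{(D+1)\dix\diy}\,(Tb)^{1/4}S_{1:T}^{1/4}-\tfrac a2 S_{1:T}$, and maximizing the scalar map $s\mapsto c\,s^{1/4}-\tfrac a2 s$ (whose maximum equals $\tfrac34 c^{4/3}(2a)^{-1/3}$) with $c=2B\sqrt{(D+1)\dix\diy}(Tb)^{1/4}$ yields exactly $\tfrac32\prn*{a^{-1}bB^4\dix^2\diy^2(D+1)^2 T}^{1/3}$. Taking the smaller of the two branches completes the bound.

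The main obstacle is the $T$-dependent branch. The key insight is that the \emph{product} form of the Huber bound leaves a single factor $\nrm{\G_t}_\F$, so Cauchy--Schwarz contributes only $\sqrt T$ (rather than the $T$ one would obtain by squaring $\nrm{\G_{t-D:t}}_\F$), and the subsequent self-bounding step therefore raises $S_{1:T}$ to the \emph{fourth} root; balancing $s^{1/4}$ against the linear $-\tfrac a2 s$ is what upgrades the naive $\sqrt{(D+1)T}$ delay penalty to the sharper $T^{1/3}$ rate, and getting the exponents to land precisely on $(D+1)^{2/3}T^{1/3}$ is the delicate computation. A secondary point, already noted above, is that the determinism of the FTRL trajectory under full information lets \cref{asp:delayed_a} act termwise without coupling to the gradient estimates; the high-probability version would replace this termwise step with a Bernstein argument in the spirit of \cref{app:proof_bandit_high_prob}.
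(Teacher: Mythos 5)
Your proposal is correct and follows essentially the same route as the paper's proof: the same ODAFTRL/AdaHedgeD lemma (\cref{thm:AdaHedgeD}) with $\alpha=B^2/2$, the same bounds $a_{s-D:s-1}\le BD\dix\diy$ and $a_t^2\le B^2 b\,S_t(\W_t)$, the same two-way estimate of the Huber terms $b_t$ (quadratic expansion for the $T$-independent branch, product form plus Cauchy--Schwarz for the $T^{1/3}$ branch), and the same self-bounding absorption against $-a\sum_t S_t(\W_t)$, with your constants for the offset, curvature, and $T^{1/3}$ terms landing exactly on the stated $4B\dix\diy D$, $\tfrac{2bB^2}{a}$, and $\tfrac32\prn{a^{-1}bB^4\dix^2\diy^2(D+1)^2T}^{1/3}$. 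The only mismatch is the $T$-independent branch, where your (careful) absorption gives $bB^2(D+1)^2/a$ rather than the stated $b(D+1)^2/2$; this is not a gap on your side, since the paper's own derivation drops a factor of $B$ in that branch and its proof ends with $b(D+1)^2/a$, inconsistent with its own theorem statement, so the stated constant there appears to be a typo rather than something your argument fails to recover.
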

\begin{proof}
    From the definition of $b_t$, it holds that 
    \begin{align*}
        \sum_{t=1}^T b_t &\leq \sum_{t=1}^T\min\crl*{\frac{1}{2}\nrm{\G_{t-D:t}}_\F^2, \nrm{\G_{t-D:t}}_\F\nrm{\G_t}_\F}\\
        &\leq \min\crl*{\frac{b(D+1)}{2}\sum_{t=1}^T\sum_{s=t-D}^t\sw, \dix\diy (D+1)\sqrt{bT\sumt{\sw}}},
    \end{align*}
    where we used $\nrm{\sum_{i=1}^n\bm{A}_i}_\F^2\leq n\sum_{i=1}^n\nrm{\bm{A}_i}_\F^2$ for any matrix $\bm{A}_i$, the Cauchy--Schwarz inequality, $\normst\leq \nrm{\yho(\tht)-\yt}_2\nrm{\xt}_2 \leq \dix\diy$, and $\normst^2\leq b\sw$.
    Combining this inequality with \cref{thm:AdaHedgeD} and the definition of $a_t$, we have 
    \begin{align*}
        &\sumt{(\sw-\su)}\\&\leq4B\max_{s\in[T]}\sum_{i=s-D}^{s-1}\nrm{\G_i}_{\mathrm{F}}\\
        &\qquad +2B\sqrt{\sum_{t=1}^T\nrm{\G_t}_\F^2+\min\crl*{\frac{b(D+1)}{2}\sum_{t=1}^T\sum_{s=t-D}^t\sw, \dix\diy(D+1)\sqrt{bT\sumt{\sw}}}}\\
        &\leq 4B\dix\diy D+\sqrt{bB^2\sumt{\sw}}\\
        &\qquad + 2\min\crl*{\sqrt{\frac{b(D+1)^2}{2}\sumt{\sw}} , \prn*{b B^4 \dix^2\diy^2 (D+1)^2T\sumt{\sw}}^{1/4}},
    \end{align*}
    where we used $\normst^2\leq b\sw$ and the subadditivity of $x \mapsto \sqrt{x}$ for $x \geq 0$ in the last inequality.
    From this inequality and \cref{asp:delayed_a}, we can the evaluate surrogate regret as
    \begin{align*}
        &\E\brk*{\reg} \leq \sumt{\prn*{(1-a)\sw-\su}} \\
        &\leq 4B\dix\diy D+\sqrt{bB^2\sumt{\sw}}\\
        & \qquad + 2\min\crl*{\! \sqrt{\frac{b(D+1)^2}{2}\sumt{\sw}} , \prn[\bigg]{b B^4 \dix^2\diy^2 (D+1)^2T\sumt{\sw}}^{1/4}}-a\sumt{\sw}\\
        &\leq 4 B\dix\diy D+\frac{2bB^2}{a}
        +\min\crl*{\frac{b(D+1)^2}{a}, \frac{3}{2}\prn*{a^{-1}bB^4\dix^2\diy^2(D+1)^2T}^{1/3}},
    \end{align*}
    where in the last inequality we used $c_1\sqrt{x}-c_2x\leq{c_1^2}/\prn{4c_2}$ and $c_1x-c_2x^4\leq\prn{{3}/{4}}\prn*{{c_1^4}/\prn{4c_2}}^{1/3}$, which hold for any $x\geq0$, $c_1\geq 0$, and $c_2>0$.
\end{proof}

\subsection{High-probability regret bound}
\label{app:sub_delayed_regret_probability_abstract}
We present the result of the high probability bound in a more detailed form and provide its proof. 
\begin{theorem}
    \label{thm:delayed_regret_probability_abstract_detail}
    Let $\alpha=\frac{B^2}{2}$ and $\delta \in (0,1)$.
    Then, 
    ODAFTRL with the AdaHedgeD update in online structured prediction with a fixed delay of $D$ achieves
    \begin{align*}
        \reg\leq & 4B\dix\diy D+\frac{\sqrt{2}}{3}\log\frac{1}{\delta}\\
        &+\frac{\prn*{\sqrt{(1-a)\log\frac{1}{\delta}}+\sqrt{2bB^2}}^2}{a}
        +\min\crl*{\frac{b(D+1)^2}{a}, \frac{3}{2}\prn*{a^{-1}bB^4\dix^2\diy^2(D+1)^2T}^{1/3}},
    \end{align*}
    with probability at least $1 - \delta$. 
\end{theorem}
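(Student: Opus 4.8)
The plan is to mirror the in-expectation argument of \cref{thm:delayed_regret_expectation_abstract_detail} while replacing the outer expectation by a concentration argument, in the same spirit as the bandit high-probability bound of \cref{thm:bandit_high_prob_formal}. First I would decompose the surrogate regret into a martingale part and a deterministic part,
\[
\reg = \sumt{\prn*{L_t(\yht) - \expect{L_t(\yht)}}} + \sumt{\prn*{\expect{L_t(\yht)} - \su}},
\]
where $\expect{\cdot}$ is the conditional expectation over the decoding randomness given $\hat{\bmy}_1,\dots,\hat{\bmy}_{t-1}$. The crucial structural observation---which makes the delayed full-information case cleaner than the bandit case---is that the ODAFTRL iterates $\W_t$ are updated using only the gradients $\G_i = \prn*{\yho(\bm{\theta}_i) - \bm{y}_i}\bm{x}_i^\top$, which are deterministic functions of the (delayed) true outputs, and \emph{not} of the randomized decodings $\yht$. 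Hence the whole surrogate-loss sequence $\set{\sw}_t$, and in particular $\Sigma \coloneqq \sumt{\sw}$, is a deterministic quantity.

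For the martingale part, set $Z_t = L_t(\yht) - \expect{L_t(\yht)}$. Then $\expect{Z_t}=0$, and since $L_t(\cdot)\le 1$ we have $\abs{Z_t}\le 1$ together with the conditional-variance bound $\expect{Z_t^2}\le\expect{(L_t(\yht))^2}\le\expect{L_t(\yht)}\le(1-a)\sw$, where the last step uses \cref{asp:delayed_a}. Summing gives $\sumt{\expect{Z_t^2}}\le(1-a)\Sigma$, and this variance proxy is deterministic by the observation above. Applying Bernstein's inequality (\cref{lem:Bernstein}) with variance proxy $v=(1-a)\Sigma$ and range bound $1$ then yields, with probability at least $1-\delta$,
\[
\sumt{Z_t}\le\sqrt{2(1-a)\Sigma\log\frac{1}{\delta}}+\frac{\sqrt{2}}{3}\log\frac{1}{\delta}.
\]

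The deterministic part is handled verbatim as in \cref{thm:delayed_regret_expectation_abstract_detail}: \cref{asp:delayed_a} gives $\sumt{\prn*{\expect{L_t(\yht)}-\su}}\le\sumt{\prn*{(1-a)\sw-\su}}$, and feeding the smoothness bound $\nrm{\G_t}_\F^2\le b\sw$ from \eqref{eq:St_smooth} (together with the $\sumt{b_t}$ estimate) into the ODAFTRL regret guarantee of \cref{thm:AdaHedgeD} bounds $\sumt{(\sw-\su)}$ by $4B\dix\diy D$ plus a $\sqrt{bB^2\Sigma}$ contribution plus the delay-dependent $2\min\crl{\cdot}$ term. Adding the two pieces, I would collect the two $\sqrt{\Sigma}$ contributions---the $\sqrt{2(1-a)\log\frac{1}{\delta}}\,\sqrt{\Sigma}$ from Bernstein and the $\sqrt{bB^2\Sigma}$ from smoothness---into a single coefficient, so that the total takes the shape $C_0 + c_1\sqrt{\Sigma} + 2\min\crl{\cdot} - a\Sigma$ with $C_0 = 4B\dix\diy D + \frac{\sqrt{2}}{3}\log\frac{1}{\delta}$.

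The last step, and the only place requiring care, is the self-bounding absorption: I would split the negative term $-a\Sigma$ and apply $c_1\sqrt{x} - c_2 x\le c_1^2/(4c_2)$ to the merged $\sqrt{\Sigma}$ contribution to produce the $\prn*{\sqrt{(1-a)\log\frac{1}{\delta}}+\sqrt{2bB^2}}^2/a$ term, and apply $c_1 x - c_2 x^4\le\frac{3}{4}\prn{c_1^4/(4c_2)}^{1/3}$ to the $\Sigma^{1/4}$ branch of the minimum to recover the $\frac{3}{2}\prn{a^{-1}bB^4\dix^2\diy^2(D+1)^2T}^{1/3}$ term (the $\sqrt{\Sigma}$ branch giving $\frac{b(D+1)^2}{a}$ in the same way). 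The main obstacle I anticipate is bookkeeping this absorption: the Bernstein variance term and the ODAFTRL regret both scale with the \emph{same} unknown $\Sigma=\sumt{\sw}$, so one must combine their $\sqrt{\Sigma}$ coefficients \emph{before} invoking the elementary inequality against $-a\Sigma$; absorbing each separately against a fraction of $-a\Sigma$ would inflate the constant and fail to produce the squared form $\prn*{\sqrt{(1-a)\log\frac{1}{\delta}}+\sqrt{2bB^2}}^2/a$ in the stated bound.
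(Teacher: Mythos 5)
Your proposal is correct and follows essentially the same route as the paper's proof: the same martingale/deterministic decomposition, the same Bernstein application with variance proxy $(1-a)\sum_{t=1}^{T} S_t(\bm{W}_t)$, the same use of the ODAFTRL (AdaHedgeD) bound together with $\|\bm{G}_t\|_{\mathrm{F}}^2 \leq b\,S_t(\bm{W}_t)$ for the deterministic part, and the same merge-then-absorb step (combining the Bernstein and smoothness $\sqrt{\smash[b]{\sum_t S_t(\bm{W}_t)}}$ coefficients before applying $c_1\sqrt{x}-c_2x\leq c_1^2/(4c_2)$ and $c_1x-c_2x^4\leq \tfrac{3}{4}(c_1^4/(4c_2))^{1/3}$) that yields exactly the stated constants. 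Your explicit observation that $\sum_{t=1}^{T} S_t(\bm{W}_t)$ is deterministic—because the ODAFTRL iterates are driven by the true delayed outputs rather than the randomized decodings—is left implicit in the paper, but it is precisely what justifies using that quantity as the variance proxy in Bernstein's inequality as stated.
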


\begin{proof}
    We decompose $\reg$ into     
    \begin{equation}\label{eq:decompose_reg}
    \reg
    =\sumt{\prn*{L_t(\yht)-\expect{L_t(\yht)}}}+\sumt{\prn{\expect{L_t(\yht)}-\su}}.
    \end{equation}
    Let $Z_t=L_t(\yht)-\expect{L_t(\yht)}$. 
    Then, we have $|Z_t|\leq 1$ and $\expect{Z_t^2}\leq \expect{L_t(\yht)} \leq (1-a)\sw$ from \cref{asp:delayed_a}.
    Hence, from \cref{lem:Bernstein}, with probability at least $1 - \delta$, the first term in \eqref{eq:decompose_reg} is upper bounded as 
    \begin{equation}\label{eq:bound_of_zt}
        \sumt{Z_t}\leq\sqrt{2(1-a)\sumt{\sw}\log\frac{1}{\delta}}+\frac{\sqrt{2}}{3}\log\frac{1}{\delta}.
    \end{equation}
    From \cref{asp:delayed_a} and \cref{thm:AdaHedgeD},
    the second term in \eqref{eq:decompose_reg} is also upper bounded as 
    \begin{align}\label{eq:bound_of_reg_exp}
        &\sumt{(\expect{L_t(\yht)}-\su)}\nonumber\\
        &\leq
        4B\dix\diy D+\sqrt{bB^2\sumt{\sw}}\nonumber \\
        & \quad + 2\min\crl*{\sqrt{\frac{b(D+1)^2}{2}\sumt{\sw}} , \prn*{b B^4 \dix^2\diy^2 (D+1)^2T\sumt{\sw}}^{1/4}} -a\sumt{\sw},
    \end{align}
    where we used the subadditivity of $x \mapsto \sqrt{x}$ for $x \geq 0$.
    Therefore, substituting \eqref{eq:bound_of_zt} and \eqref{eq:bound_of_reg_exp} into \eqref{eq:decompose_reg} gives 
    \begin{align*}
        \reg&\leq 4B\dix\diy D+\frac{\sqrt{2}}{3}\log\frac{1}{\delta}
        +\prn*{\sqrt{ 2 (1-a) \log \frac{1}{\delta} } + 2\sqrt{bB^2}}\sqrt{\sumt{\sw}}\\
        & +\min\crl*{\sqrt{2b(D+1)^2\sumt{\sw}} , 2\prn*{b B^4 \dix^2\diy^2 (D+1)^2T\sumt{\sw}}^{1/4}} - a\sumt{\sw}\\
        &\leq 4B\dix\diy D+\frac{\sqrt{2}}{3}\log\frac{1}{\delta}
        +\frac{\prn*{\sqrt{(1-a)\log\frac{1}{\delta}}+\sqrt{2bB^2}}^2}{a} \\ 
        &  +\min\crl*{\frac{b(D+1)^2}{a}, \frac{3}{2}\prn*{a^{-1}bB^4\dix^2\diy^2(D+1)^2T}^{1/3}},
    \end{align*}
    where we used $c_1\sqrt{x}-c_2x\leq{c_1^2}/\prn{4c_2}$ and $c_1x-c_2x^4\leq\prn{{3}/{4}}\prn*{{c_1^4}/\prn{4c_2}}^{1/3}$ for $x\geq0$, $c_1\geq 0$, and $c_2>0$ in the last inequality.
    This is the desired bound.
\end{proof}

\subsection{Algorithm based on $(D+1)$-copies of online algorithms}
\begin{algorithm}[t]
    \caption{Black-box Online Learning under Delayed feedback  (BOLD)}
    \label{ALG:bold}
    \begin{algorithmic}[1]
        \Require {BASE instances $\text{BASE}_0,\text{BASE}_1,\dots,\text{BASE}_D$}
        \For {time step $t = 1,2,\dots,T$}
            \State {Set $r\gets r_t = t-k(D+1)$, where $r_t\in\set{0,\dots,D}$ and $k\in\mathbb{Z}_{\ge0}$. 
            }
            \State {Set $\W_t\gets \W_{r}$ as the prediction for the current time step.}
            \State {Receive the delayed feedback.}
            \State {Update $\text{BASE}_r$ with the feedback.}
            \State {$\W_r\gets$ the next prediction of $\text{BASE}_r$.}
        \EndFor
    \end{algorithmic}
\end{algorithm}
Here, we provide the detail of \cref{subsec:delay_bold}. 
The pseudocode of BOLD for fixed delay $D$ is \cref{ALG:bold}.
By using \cref{ALG:bold}, we can achive the following bound:
\begin{theorem}[Formal version of \cref{thm:delayed_regret_expectation_abstract_order_d}]\label{thm:order_d}
    BOLD with adaptive OGD achieves the surrogate regret of 
    \[
    \E\brk{\reg}\leq \frac{bB^2(D+1)}{2a}.
    \]
\end{theorem}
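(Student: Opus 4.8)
The plan is to exploit the defining structure of BOLD (\cref{ALG:bold}): the $D+1$ independent OGD copies split the horizon into $D+1$ interleaved subsequences, and on each subsequence the delay disappears, so the non-delayed surrogate-regret argument of \cref{subsec:ogd} applies verbatim and is then summed over the copies.

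\emph{No-delay reduction.} For each $r \in \set{0,\dots,D}$ let $T_r = \set{t \in \brk{T} : r_t = r}$ be the rounds served by $\text{BASE}_r$. Consecutive rounds in $T_r$ are spaced exactly $D+1$ apart, while the delay is $D$; hence the feedback generated by a play of $\text{BASE}_r$ at some round $s \in T_r$ arrives at round $s+D$, strictly before $\text{BASE}_r$'s next play at round $s+(D+1)$. Consequently each $\text{BASE}_r$ (adaptive OGD) receives all of its own past feedback in time and behaves exactly as in the non-delayed full-information setting. This is the standard guarantee of BOLD \citep{joulani13online}, and it is the only place where the delay enters the argument.

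\emph{Per-copy analysis.} Fix $r$. Applying \cref{lem:ogd} to the loss sequence $\set{S_t : t \in T_r}$ processed by $\text{BASE}_r$ and then the smoothness bound $\nrm{\G_t}_\F^2 \le b\,\sw$ from \eqref{eq:St_smooth} gives, pathwise,
\[
\sum_{t \in T_r}\prn{\sw - \su} \le \sqrt{2}B\sqrt{\sum_{t\in T_r}\nrm{\G_t}_\F^2} \le B\sqrt{2b\sum_{t\in T_r}\sw}.
\]
Taking expectations, applying Jensen's inequality to the concave map $x\mapsto\sqrt{x}$, and invoking \cref{asp:delayed_a} in the form $\E\brk*{\sum_{t\in T_r}L_t(\yht)} \le (1-a)\,\E\brk*{\sum_{t\in T_r}\sw}$, we obtain
\[
\E\brk*{\sum_{t\in T_r}\prn{L_t(\yht) - \su}} \le B\sqrt{2b\,\E\brk*{\sum_{t\in T_r}\sw}} - a\,\E\brk*{\sum_{t\in T_r}\sw} \le \frac{bB^2}{2a},
\]
where the final step is $c_1\sqrt{x} - c_2 x \le c_1^2/(4c_2)$ with $c_1 = B\sqrt{2b}$ and $c_2 = a$. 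Crucially, this per-copy bound is independent of $\abs{T_r}$, which is precisely what keeps the total bound proportional to $D+1$ rather than to $T$.

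\emph{Summation and the main obstacle.} Since $T_0,\dots,T_D$ partition $\brk{T}$, summing the last display over $r = 0,\dots,D$ yields $\E\brk{\reg} \le (D+1)\cdot\frac{bB^2}{2a}$, as claimed. The substantive step is the no-delay reduction: once one verifies the spacing-versus-delay inequality $D < D+1$ that makes each copy's feedback available before its next play, the remainder is just the non-delayed OGD surrogate-regret computation repeated $D+1$ times. The only bookkeeping care needed is that $\sw$ is random through the decoding history, so the OGD bound must be applied pathwise before taking expectations, and \cref{asp:delayed_a} (a conditional-expectation statement) must be lifted to a total-expectation inequality via the tower property before the per-copy terms are combined.
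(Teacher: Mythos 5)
Your proof is correct and follows essentially the same route as the paper: partition the rounds into the $D+1$ interleaved OGD copies, apply \cref{lem:ogd} together with the smoothness bound \eqref{eq:St_smooth} on each copy, and close via \cref{asp:delayed_a} and the inequality $c_1\sqrt{x}-c_2x\le c_1^2/(4c_2)$. The only (immaterial) difference is that you apply this quadratic inequality separately to each copy and sum the $D+1$ resulting constants, whereas the paper first merges the per-copy square-root terms via Cauchy--Schwarz and applies the inequality once globally; both orderings yield the identical bound $bB^2(D+1)/(2a)$.
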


\begin{proof}
Let $T_j$ be the set of rounds $t$ for which the remainder when dividing $t$ by $D+1$ is equal to $j-1$, \ie $T_j = \{t\mid r_t=j-1\}$. 
By partitioning $T$ into these disjoint sets, we have
\[\sum_{t=1}^T(S_t(\W_t)-S_t(\U))\leq\sum_{j=1}^{D+1}\prn*{\sum_{\tau\in T_j}(S_\tau(\W_\tau)-S_\tau(\U))}.\]
Applying OGD in \cref{subsec:ogd} with the learning rate of $\eta_t = B/\sqrt{2\sum_{i=1}^t\nrm{\G_t}_\F^2}$ to each independent block, we obtain 
\[\sum_{\tau\in T_j}(S_\tau(\W_\tau)-S_\tau(\U))\leq \sqrt{2}B\sqrt{\sum_{\tau\in T_j}\nrm{\G_\tau}_\F^2}.\]
Thus, it holds that
\begin{align*}
    \sum_{t=1}^T (S_t(\W_t)-S_t(\U))
    &\leq
    \sqrt{2}B\sum_{j=1}^{D+1}\sqrt{\sum_{\tau\in T_j}\nrm{\G_\tau}_\F^2}\\
    &\leq \sqrt{2B^2(D+1)\sum_{t=1}^T\nrm{\G_t}_\F^2}\leq\sqrt{2bB^2(D+1)\sum_{t=1}^T S_t(\W_t)},
\end{align*}

where the second inequality follows from the Cauchy--Schwarz inequality and the last inequality follows from \eqref{eq:St_smooth}.
Therefore, combining this and \cref{asp:delayed_a}, we have 
\begin{align*}
    \E\brk{\reg}&\leq\sum_{t=1}^T(S_t(\W_t)-S_t(\U))-a\sum_{t=1}^TS_t(\W_t)\\
    &\leq\sqrt{2bB^2(D+1)\sum_{t=1}^T S_t(\W_t)}-a\sum_{t=1}^TS_t(\W_t)\leq \frac{bB^2(D+1)}{2a},
\end{align*}
where we used $c_1\sqrt{x}-c_2x\le c_1^2/(4c_2)$ for $x\geq0$, $c_1\geq 0$, and $c_2>0$ in the last inequality.
\end{proof}

\label{app:d-copy_algorithm}

\subsection{Proof of \cref{thm:lower_bound_delay}}
\label{app:proof_lower_bound}
Here we provide the proof of \cref{thm:lower_bound_delay}.
\begin{proof}
    Assume for simplicity that $M=(B^2-\log^2(dT))/(\log(2d))^2$ is a positive integer. 
    We partition the time indices $t = 1,\dots,(D+1)M$ into $M$ blocks by grouping every consecutive $D+1$ rounds.
    In each block, we set an identical input vector and a true class.
    Specifically, we define the input vectors and the true classes as follows.
    
    For each $s=1,\dots,M+1$, sample a true class $i_s\in[d]$ uniformly at random.
    Set $\bm x_t=\e_s$ for $t=(D+1)(s-1)+1,\dots,(D+1)s$ for each $s=1,\dots,M$, and $\bm x_t=\e_{M+1}$ for $t>(D+1)M$.
    Define the offline estimator $\U^\prime \in \R^{d\times (M+1)}$ such that its $s$-th column $(s=1,\dots,M)$ is $\log(2d)\e_{i_s}$, and its $(M+1)$-th column is $\log(dT)\e_{i_{M+1}}$.
    Note that $\nrm{\U^\prime}_{\mathrm{F}}^2 = M (\log 2d)^2 + (\log(dT))^2 = B^2$ holds.
    
    We denote $\hat{\bmy}_s^i=\hat{\bmy}_{(D+1)(s-1)+i}$, ${\bmy}_s^i={\bmy}_
    {(D+1)(s-1)+i}$, and $S_s^i = S_{(D+1)(s-1)+i}$.
    Note that, within each block, the corresponding true class is not observed at the beginning of the $D+1$ rounds. 
    Thus, for each $s = 1, \dots, M + 1$ and $i=1,\dots,D+1$, we have $\E\brk{\ind\brk{\hat{\bmy}_s^i\neq \bmy_s^i}} \ge 1 - \frac1d$.
    By the same calculation as that of \citet[Theorem 13]{pmlr-v247-sakaue24a}, we also have $S_s^i(\U^\prime)\leq \frac{1}{2}\prn*{1-\frac{1}{d}}$. 
    Thus, for each $i=1,\dots,D+1$, we have
    \[\sum_{s=1}^{M}\E\brk{\ind\brk{\hat{\bmy}_s^i\neq \bmy_s^i}} - \sum_{s=1}^{M} S_s^i(\U^\prime) \ge \frac{M}{2}\prn*{1 - \frac1d} \ge \frac{M}{4} = \Omega\prn*{\frac{B^2}{(\log d)^2}}.\]
    Summing over $i=1,\dots,D+1$ yields 
    \[\sum_{t=1}^{(D+1)M}\E\brk{\ind[\hat{\bmy}_t\neq \bmy_t]} - \sum_{t=1}^{(D+1)M}S_t(\U^\prime)=\Omega\prn*{\frac{B^2(D+1)}{(\log d)^2}}.\] 
    The contribution of rounds $t>(D+1)M$ to the surrogate regret is non-negative.  
    In fact, by the definition of $\U^\prime$, we have $\sum_{t>(D+1)M}S_t(\U^\prime)\leq \frac{T-(D+1)M}{T}\prn*{1-\frac{1}{d}} \le 1-\frac{1}{d}$,
    and we also have $\sum_{t>(D+1)M}\E\brk{\ind\brk{\hat{\bmy}_t\neq \bmy_t}}\geq 1-\frac{1}{d}$ since $i_{M+1}$ is selected uniformly at random.

    Therefore, it holds that
    \[
        \mathbb{E}[\mathcal{R}_T]
        \geq
        \sum_{t=1}^{T}\E\brk{\ind[\hat{\bmy}_t\neq \bmy_t]} - \sum_{t=1}^{T}S_t(\U^\prime)=\Omega\prn*{\frac{B^2(D+1)}{ (\log d)^2}},
    \]
    which completes the proof.
\end{proof}
\section{{Details omitted from \cref{sec:bandit_and_delayed}}}\label{app:bandit_and_delayed}
This section provides the omitted proofs of the theorems in \cref{sec:bandit_and_delayed}.

\subsection{Common analysis}
We provide the analysis that is commonly used in the proofs of \cref{thm:delay_bandit_bound_general_abstract,thm:delay_bandit_bound_self_abstract}.
Although we use $\hat{\G}_t$ as a notation for the estimator for convenience in this subsection, the same argument applies equally to $\tilde{\G}_t$.
We use ODAFTRL with the AdaHedgeD update in \cref{app:sub_odaftrl} as $\alg$.
Here, we recall that $\E[\sumt{(\sw-\su)}]\leq \E\brk{\sumt{\inpr{\hat{\G}_t, \W_t-\U}}}$ from the convexity of $S_t$ and the unbiasness of $\hat{\G}_t$.
From \cref{thm:AdaHedgeD}, it holds that
\begin{equation}\label{eq:inpr_common}
    \sumt{\inpr{\hat{\G}_t,\W_t-\U}} 
    \leq 2\prn*{2\max_{t\in[T]}a_{t-D:t-1}+\sqrt{\sumt{a_t^2}+B^2 b_t}},
\end{equation}
where
\begin{equation*}
    a_t=B\min\set{\nrm{\hat{\G}_{t-D:t}}_\F,\nrm{\hat{\G}_t}_\F} \quad \mbox{and} \quad b_t\leq\min\set*{\frac{1}{2}\nrm{\hat{\G}_{t-D:t}}_\F^2, \nrm{\hat{\G}_{t-D:t}}_\F\nrm{\hat{\G}_t}_\F}.
\end{equation*}
By the definition of $a_t$, we have 
\begin{equation}\label{eq:bound_of_at}
    \E\brk*{\max_{t\in\brk{T}}a_{t-D:t-1}}\leq B\E\brk*{\max_{t\in\brk{T}} \sum_{s=t-D}^{t-1}\nrm{\hat{\G}_{s}}_\F },
\end{equation}
and thus
\begin{align}\label{eq:bandit_delayed_expect_inpr}
    \E\brk*{\sumt{\inpr{\hat{\G}_t,\W_t-\U}}}
    &\leq 2\prn*{2\E\brk*{\max_{t\in[T]}a_{t-D:t-1}}+\sqrt{\E\brk*{\sumt{a_t^2}}}+B\sqrt{ \E\brk*{\sumt{b_t}}}}\nonumber\\
    &\!\leq 2B\prn*{2 \E\brk*{\max_{t\in\brk{T}} \sum_{s=t-D}^{t-1}\nrm{\hat{\G}_{s}}_\F }+\sqrt{\E\brk*{\sumt{\nrm{\hat{\G}_t}_\F^2}}}+\sqrt{\E\brk*{\sumt{b_t}}}},
\end{align}
where we used the subadditivity of $x \mapsto \sqrt{x}$ for $x \geq 0$.
The last term in the last inequality is further bounded as
\begin{align}\label{eq:expect_bt}
    \E\brk*{\sumt{b_t}}&\leq \E\brk*{\sumt{\nrm{\hat{\G}_{t-D:t}}_\F\nrm{\hat{\G}_t}_\F}}
    \leq\E\brk*{\sumt{\nrm{\hat{\G}_t}_\F^2}} +\E\brk*{\sumt{\nrm{\hat{\G}_t}_\F\sum_{s=t-D}^{t-1}\nrm{\hat{\G}_s}_\F}}\nonumber\\
    &=\E\brk*{\sumt{\nrm{\hat{\G}_t}_\F^2}} + \E\brk*{\sumt
    {\E_t\brk*{\nrm{\hat{\G}_t}_\F}\sum_{s=t-D}^{t-1}\nrm{\hat{\G}_s}_\F}}, 
\end{align}
where the second inequality follows from the triangle inequality and the equality follows from the law of total expectation.

\subsection{Proof of \cref{thm:delay_bandit_bound_general_abstract}}\label{app:bandit_delayed_general}
We provide the complete version of \cref{thm:delay_bandit_bound_general_abstract}:
\begin{theorem}[Formal version of \cref{thm:delay_bandit_bound_general_abstract}]
    The algorithm in \cref{subsec:bandit_delay_general} achieves
\begin{equation*}
    \E\brk*{\reg}
    \leq 4B\dix\diy D + \prn*{\frac{4bB}{a}+1}\sqrt{KT}+2B \dix\diy\sqrt{D T}
    = O\prn{\sqrt{(K+D)T}}
    .
\end{equation*}
\end{theorem}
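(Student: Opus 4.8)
The plan is to specialize the common analysis of \cref{app:bandit_and_delayed} to the inverse-weighted estimator $\hat{\G}_t$ and then bound the three quantities appearing in \eqref{eq:bandit_delayed_expect_inpr}. First I would combine the unbiasedness $\expect{\hat{\G}_t}=\G_t$, the convexity of $S_t$, and \cref{asp:bandit_a} (which contributes the extra $q$ from uniform exploration) to obtain
\[
    \E\brk*{\reg} \leq \E\brk*{\sumt{\inpr{\hat{\G}_t,\W_t-\U}}} - a\,\E\brk*{\sumt{\sw}} + qT,
\]
and then substitute \eqref{eq:bandit_delayed_expect_inpr}. Writing $\Sigma\coloneqq\E\brk*{\sumt{\sw}}$, the task reduces to controlling, against the negative term $-a\Sigma$: (i) the maximum term $\E\brk*{\max_{t}\sum_{s=t-D}^{t-1}\nrm{\hat{\G}_s}_\F}$, (ii) the second-moment term $\E\brk*{\sumt{\nrm{\hat{\G}_t}_\F^2}}$, and (iii) the term $\E\brk*{\sumt{b_t}}$.

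For the two routine terms I would reuse the estimates underlying \cref{thm:bandit_regret_expectation_abstract}. Since $p_t(\yt)\geq q/K$ and \eqref{eq:St_smooth} give $\expect{\nrm{\hat{\G}_t}_\F^2}=\nrm{\G_t}_\F^2/p_t(\yt)\leq (K/q)\nrm{\G_t}_\F^2\leq (bK/q)\sw$, term (ii) is at most $(bK/q)\Sigma$. For (iii) I would start from \eqref{eq:expect_bt}: its first summand is again $(bK/q)\Sigma$, while for the cross term I would use the identity $\expect{\nrm{\hat{\G}_t}_\F}=\nrm{\G_t}_\F\leq \dix\diy$ (because $\expect{\ind\brk{\yht=\yt}/p_t(\yt)}=1$) together with the fact that each $\nrm{\hat{\G}_s}_\F$ falls in at most $D$ of the windows $[t-D,t-1]$, so
\[
    \E\brk*{\sumt{b_t}} \leq \frac{bK}{q}\,\Sigma + \dix\diy\,D\cdot\E\brk*{\sumt{\nrm{\hat{\G}_t}_\F}} \leq \frac{bK}{q}\,\Sigma + \dix^2\diy^2 D T.
\]
Subadditivity of $\sqrt{\cdot}$ then splits $2B\sqrt{\E\brk*{\sumt{b_t}}}$ into a $\sqrt{KT}$-type piece and exactly the term $2B\dix\diy\sqrt{DT}$.

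The \textbf{main obstacle} is term (i). In the full-information delayed analysis (\cref{thm:delayed_regret_expectation_abstract_detail}) the analogous quantity is bounded \emph{deterministically} by $\dix\diy D$ using $\nrm{\G_s}_\F\leq\dix\diy$. Under bandit feedback, however, $\nrm{\hat{\G}_t}_\F=\prn{\ind\brk{\yht=\yt}/p_t(\yt)}\nrm{\G_t}_\F$ carries the inverse-probability factor $1/p_t(\yt)$, which can be as large as $K/q$; hence the per-term bound is far too loose, and the genuine difficulty is that the maximum over windows does \emph{not} commute with the conditional expectation that would otherwise tame this factor. The plan is to prove $\E\brk*{\max_{t}\sum_{s=t-D}^{t-1}\nrm{\hat{\G}_s}_\F}\leq \dix\diy D$ by exploiting the window structure together with the unbiasedness $\expect{\nrm{\hat{\G}_s}_\F}=\nrm{\G_s}_\F\leq\dix\diy$; establishing this control of the inverse-weighted gradient norms inside the maximum is the delicate step I expect to require the most care.

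Finally, collecting (i)--(iii) yields
\[
    \E\brk*{\reg} \leq 4B\dix\diy D + 4B\sqrt{\tfrac{bK}{q}\,\Sigma} + 2B\dix\diy\sqrt{DT} - a\Sigma + qT.
\]
Applying $c_1\sqrt{x}-c_2 x\leq c_1^2/\prn{4c_2}$ with $x=\Sigma$, $c_1=4B\sqrt{bK/q}$, and $c_2=a$ removes the dependence on $\Sigma$ and produces $4B^2bK/(aq)$. Substituting $q=B\sqrt{K/T}$ gives $4B^2bK/(aq)=\tfrac{4bB}{a}\sqrt{KT}$ and $qT=B\sqrt{KT}$, which combine into $\prn*{\tfrac{4bB}{a}+1}\sqrt{KT}$, delivering the claimed bound $4B\dix\diy D+\prn*{\tfrac{4bB}{a}+1}\sqrt{KT}+2B\dix\diy\sqrt{DT}=O\prn{\sqrt{(K+D)T}}$, where the additive $O(D)$ term is absorbed into $O(\sqrt{DT})$ since $D\leq T$.
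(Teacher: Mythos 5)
Your proposal follows the paper's proof in \cref{app:bandit_delayed_general} step for step on everything routine: the regret decomposition via \cref{asp:bandit_a}, convexity, and unbiasedness of $\hat{\G}_t$; the second-moment bound $\E_t\brk{\nrm{\hat{\G}_t}_\F^2}\leq (bK/q)\sw$; the treatment of $\E\brk*{\sumt{b_t}}$ starting from \eqref{eq:expect_bt} (your window-counting reorganization of the cross term is equivalent to the paper's \eqref{eq:bound_of_bt_general}, and is valid there because the tower property applies to a sum); the subadditivity split producing $2B\dix\diy\sqrt{DT}$; and the final tuning via $c_1\sqrt{x}-c_2x\leq c_1^2/(4c_2)$ with $q=B\sqrt{K/T}$.

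The genuine gap is the one you flagged yourself: you never establish $\E\brk*{\max_{t\in\brk{T}}\sum_{s=t-D}^{t-1}\nrm{\hat{\G}_s}_\F}\leq \dix\diy D$, you only announce a plan to prove it, and without this the $4B\dix\diy D$ term in the statement has no justification. Worse, the inequality cannot follow from unbiasedness alone, for exactly the reason you give: the elementary inequality $\E\brk{\max(\cdot)}\geq\max\E\brk{\cdot}$ goes the wrong way, and under inverse weighting a single window can realize a value as large as $D(K/q)\dix\diy$, with even one ``hit'' contributing $(K/q)\dix\diy=\dix\diy\sqrt{KT}/B$; since hits at the exploration floor occur with probability about $q/K$ per round, over $T$ rounds at least one occurs with probability close to one, so the expected maximum can genuinely exceed $\dix\diy D$ whenever $\sqrt{KT}>BD$. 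You should also know that the paper does not supply the idea you are missing: its proof passes the conditional bound $\E_s\brk{\nrm{\hat{\G}_s}_\F}=\nrm{\G_s}_\F\leq\dix\diy$ inside the maximum when moving from \eqref{eq:bandit_delayed_expect_inpr} to the first display of the proof, i.e., it performs precisely the expectation--max interchange that you (rightly) declined to make. So your attempt is incomplete rather than wrong-headed, and it is incomplete at the very point where the paper's own argument is loose; an honest repair (for instance, Freedman's inequality plus a union bound over the $T$ windows) would replace $4B\dix\diy D$ by a term of order $B\dix\diy D+\dix\diy\sqrt{KT}\log T$, which still yields the advertised $O(\sqrt{(K+D)T})$ rate up to a logarithmic factor, but not the displayed constants.
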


\begin{proof}
First, we will upper bound $\E\brk*{\sumt{b_t}}$.
From \eqref{eq:expect_bt}, we have
\begin{align}\label{eq:bound_of_bt_general}
    \E\brk*{\sumt{b_t}}&\leq \E\brk*{\sumt{\nrm{\hat{\G}_t}_\F^2}} + \E\brk*{\sumt
    {\E_t\brk*{\nrm{\hat{\G}_t}_\F}\sum_{s=t-D}^{t-1}\nrm{\hat{\G}_s}_\F}}\nonumber\\
    &\leq \E\brk*{\sumt{\nrm{\hat{\G}_t}_\F^2}} + \dix\diy\E\brk*{\sumt{\sum_{s=t-D}^{t-1}\E_s\brk*{\nrm{\hat{\G}_s}_\F}}}\nonumber\\
    &\leq \E\brk*{\sumt{\nrm{\hat{\G}_t}_\F^2}} + \dix^2\diy^2 D T,
\end{align}
where the second and third inequality follow from the inequality $\expect{\nrm{\hat{\G}_t}_\F}=\nrm{\G_t}_\F\leq \dix\diy$.
Hence, from $\expect{\nrm{\hat{\G}_t}_\F}\leq \dix\diy$, \eqref{eq:bandit_delayed_expect_inpr} and \eqref{eq:bound_of_bt_general}, it holds that
\begin{align}
    \E\brk*{\sumt{\inpr{\hat{\G}_t,\W_t-\U}}}
    &\leq 2B\prn*{2\dix\diy D+\sqrt{\E\brk*{\sumt{\nrm{\hat{\G}_t}_\F^2}}}
    +
    \sqrt{{\E\brk*{\sumt{\nrm{\hat{\G}_t}_\F^2}} + \dix^2\diy^2 DT}}}\nonumber\\
    &\leq 4B\dix\diy D + 4B\sqrt{\E\brk*{\sumt{\nrm{\hat{\G}_t}_\F^2}}}+2 B\dix\diy\sqrt{ D T}\nonumber\\
    &\leq 4B\dix\diy D + 4B\sqrt{\frac{bK}{q}\E\brk*{\sumt{\sw}}}+2B\dix\diy\sqrt{ D T},
\end{align}
where in the second inequality we used the subadditivity of $x \mapsto \sqrt{x}$ for $x \geq 0$ and in the last inequality we used 
\begin{equation*}
\expect{\|\hat{\G}_t\|_{\mathrm{F}}^2}
=
\frac{\|\G_t\|_{\mathrm{F}}^2}{p_t(\yt)}
\leq
\frac{\K }{q}\|\G_t\|_{\mathrm{F}}^2
\leq
\frac{b\K }{q}\sw.
\end{equation*}
Therefore, combining all the above arguments yields 
\begin{align*}
    \E\brk*{\reg}&
    \leq \E\brk*{\sumt{(\sw-\su)}} - a\E\brk*{\sumt{\sw}} + q T\\
    &\leq \E\brk*{\sumt{\inpr{\hat{\G}_t,\W_t-\U}}} - a\E\brk*{\sumt{\sw}} + q T\\
    &\leq 4B\dix\diy D + 4B\sqrt{\frac{bK}{q}\E\brk*{\sumt{\sw}}}+2 B\dix\diy\sqrt{ D T}- a\E\brk*{\sumt{\sw}} + q T\\
    &\leq 4B\dix\diy D + \frac{4bB^2}{a}\frac{bK}{q}+2 B\dix\diy \sqrt{D T}+ q T,
\end{align*}
where the first inequality follows from \cref{asp:bandit_a},
the second inequality follows from the convexity of $S_t$ and the unbiasedness of $\hat{\G}_t$,
and the last inequality follows from $c_1\sqrt{x}-c_2x\leq{c_1^2}/\prn{4c_2}$ for $x \geq 0$, $c_1 \geq 0$, and $c_2 > 0$.
Finally, from $q=B\sqrt{K/T}$, we obtain
\begin{equation*}
    \E\brk*{\reg}
    \leq
    4B\dix\diy D + \prn*{\frac{4bB}{a}+1}\sqrt{KT}+2B \dix\diy\sqrt{D T},
\end{equation*}
which is the desired bound.
\end{proof}

\subsection{Proof of \cref{thm:delay_bandit_bound_self_abstract}}\label{app:bandit_delayed_self}
We provide the complete version of \cref{thm:delay_bandit_bound_self_abstract}:
\begin{theorem}[Formal version of \cref{thm:delay_bandit_bound_self_abstract}]
    The algorithm in \cref{subsec:bandit_delay_self} achieves
    \begin{equation*}
        \E\brk{\reg}
        =
        4B\dix\diy (2D+\sqrt{DT}) + 
        \frac{8bB^2}{a}
        + O\prn*{ \omega^{1/3} D^{1/3} T^{2/3}}. 
    \end{equation*}
\end{theorem}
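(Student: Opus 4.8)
The plan is to mirror the general-loss delayed-bandit proof of \cref{thm:delay_bandit_bound_general_abstract}, but to replace the inverse-weighted estimator $\hat{\G}_t$ by the pseudo-inverse matrix estimator $\gtil$ and to control its size through \cref{thm:evaluation of Gtilde} rather than through uniform exploration on $\yy$. First I would combine \cref{asp:bandit_a}, the convexity of $S_t$, and the unbiasedness $\expect{\gtil}=\G_t$ to obtain
\[
\E\brk{\reg}\le \E\brk*{\sumt{\inpr{\gtil,\wt-\U}}}-a\,\E\brk*{\sumt{\sw}}+qT,
\]
and then run the same ODAFTRL/AdaHedgeD argument as in the common analysis of \cref{app:bandit_and_delayed}, namely apply \eqref{eq:bandit_delayed_expect_inpr} and \eqref{eq:expect_bt} with $\gtil$ in place of $\hat{\G}_t$. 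This reduces the task to bounding three expectations: the cumulative second moment $\E\brk*{\sumt{\nrm{\gtil}_\F^2}}$, the delay cross-term in \eqref{eq:expect_bt}, and the windowed maximum $\E\brk*{\max_{t}\sum_{s=t-D}^{t-1}\nrm{\tilde{\G}_s}_\F}$.

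The second moment is immediate from \cref{thm:evaluation of Gtilde}, giving $\E\brk*{\sumt{\nrm{\gtil}_\F^2}}\le 2b\,\E\brk*{\sumt{\sw}}+2\dix^2\omega T/q$. The genuinely new step---needed because the identity $\expect{\nrm{\hat{\G}_t}_\F}=\nrm{\G_t}_\F\le\dix\diy$ used in the general-loss proof fails for $\gtil$---is a conditional first-moment bound. Using $\expect{\ytilde}=\yt$ and \cref{lem:bound of trace} I would get $\expect{\nrm{\yt-\ytilde}_2^2}\le\expect{\tr(\ytilde\ytilde^\top)}\le\omega/q$, hence by Jensen's inequality
\[
\expect{\nrm{\gtil}_\F}\le \nrm{\G_t}_\F+\dix\,\expect{\nrm{\yt-\ytilde}_2}\le \dix\diy+\dix\sqrt{\omega/q}.
\]
Inserting this into \eqref{eq:expect_bt} and charging each past gradient to the at most $D$ windows that contain it bounds the cross-term by $\dix^2\prn{\diy+\sqrt{\omega/q}}^2 DT$. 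By subadditivity of $\sqrt{\cdot}$, the term $2B\sqrt{\E\brk*{\sumt{b_t}}}$ then splits into a copy of $2B\sqrt{\E\brk*{\sumt{\nrm{\gtil}_\F^2}}}$, a $\diy$-dependent piece of order $B\dix\diy\sqrt{DT}$, and an $O\prn{B\dix\sqrt{\omega DT/q}}$ piece.

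Finally I would collect the terms and absorb every $\sqrt{\E[\sumt\sw]}$ contribution against $-a\,\E\brk*{\sumt{\sw}}$ via $c_1\sqrt{x}-c_2x\le c_1^2/(4c_2)$. Since the factor $\sqrt{2b\,\E[\sumt\sw]}$ occurs with total coefficient $4B\sqrt{2b}$ (once from the direct second-moment term and once from inside $b_t$), this absorption produces exactly the $8bB^2/a$ term. All remaining $q$-dependent contributions---$qT$, $B\sqrt{\omega T/q}$, and $B\dix\sqrt{\omega DT/q}$---become $O\prn{\omega^{1/3}D^{1/3}T^{2/3}}$ under the choice $q=\prn{\omega B^2\dix^2 D/T}^{1/3}$, while the $\diy$-dependent pieces assemble into the stated $4B\dix\diy\prn{2D+\sqrt{DT}}$. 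I expect the windowed-maximum term to be the main obstacle, because $\nrm{\tilde{\G}_s}_\F$ has no $O(\dix\diy)$ deterministic bound. Here I would bound it deterministically using $\bm{P}_s\succeq q\bm{Q}$ together with \cref{lem:pseudo_inverse_order} (so that $\|\bm{P}_s^+\|=O(1/q)$, giving a per-window contribution of order $D/q$), and then verify that $D/q\asymp D^{2/3}T^{1/3}=O\prn{D^{1/3}T^{2/3}}$ whenever $T\ge D$, so that this term too is swallowed into $O\prn{\omega^{1/3}D^{1/3}T^{2/3}}$.
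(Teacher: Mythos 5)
Your proposal reproduces the paper's own proof (\cref{app:bandit_delayed_self}) in every structural respect: the same regret decomposition via \cref{asp:bandit_a}, convexity, and unbiasedness of $\gtil$; the same invocation of the common ODAFTRL analysis \eqref{eq:bandit_delayed_expect_inpr} and \eqref{eq:expect_bt}; the same second-moment bound from \cref{thm:evaluation of Gtilde}; the same cross-term bound of order $C^2 DT$ with $C = \dix\prn{\diy + \sqrt{\omega/q}}$; the same absorption producing $8bB^2/a$; and the same choice of $q$. (Your phrase ``the factor $\sqrt{2b\,\E\brk{\sumt{\sw}}}$ occurs with total coefficient $4B\sqrt{2b}$'' double-counts the $\sqrt{2b}$, but the computation $(4B\sqrt{2b})^2/(4a)=8bB^2/a$ is exactly what both you and the paper carry out.) Your conditional first-moment bound $\E_t\brk{\nrm{\gtil}_\F}\le \dix\diy+\dix\sqrt{\omega/q}$, obtained from unbiasedness and Jensen, is even marginally tighter than the paper's \eqref{eq:nrm_gtilde}, which uses $\nrm{\yt-\ytilde}_2\le\nrm{\yt}_2+\nrm{\ytilde}_2$ and gets $2\dix\diy+\dix\sqrt{\omega/q}$.

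The one genuine divergence is the windowed-maximum term, and it cuts both ways. The paper bounds $\E\brk{\max_{t}\sum_{s=t-D}^{t-1}\nrm{\tilde{\G}_s}_\F}$ by $D\prn{2\dix\diy+\dix\sqrt{\omega/q}}$ by invoking the conditional-mean bound \eqref{eq:nrm_gtilde} inside the maximum; since $\E\brk{\max_t Y_t}\ge\max_t\E\brk{Y_t}$, a per-round conditional expectation bound does not by itself license this step, so your diagnosis that this term is ``the main obstacle'' is accurate. Your fix---the deterministic bound obtained from $\bm{P}_s\succeq q\bm{Q}$, \cref{lem:pseudo_inverse_order}, and condition (iv) of \cref{asp:self}, which gives $\nrm{\tilde{\G}_s}_\F\le\dix\diy\prn{1+\nrm{\inverse{\V}}_2/(q\lambda)}$ where $\nrm{\inverse{\V}}_2$ is the spectral norm and $\lambda$ the smallest positive eigenvalue of $\bm{Q}$, followed by $D/q\lesssim D^{2/3}T^{1/3}\le D^{1/3}T^{2/3}$ for $T\ge D$---is rigorous and preserves the stated rate. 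The only caveat is the constant: $\nrm{\inverse{\V}}_2\diy/\lambda$ is independent of $D$ and $T$ but is not in general dominated by a power of $\omega$ (it is when $\V\propto\I$, which covers all of the paper's examples), so under your route the $O\prn{\omega^{1/3}D^{1/3}T^{2/3}}$ term holds with a possibly different problem-dependent factor in place of $\omega^{1/3}$. In short: same skeleton as the paper, with a more careful---but constant-wise looser---treatment of precisely the step where the paper's own argument is thinnest.
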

\begin{proof}
First, we will derive an upper bound of $\E\brk*{\sumt{b_t}}$.
We first observe that
\begin{align}
    \label{eq:nrm_gtilde}
    \expect{\nrm{\gtil}_\F}&=\expect{\nrm{(\yho(\tht)-\ytilde)\xt^\top}_\F}\nonumber \\
    &\leq \expect{\nrm{\G_t}_\F + \dix \nrm{\yt-\ytilde}_2}
    \leq \nrm{\G_t}_\F + \dix \expect{\nrm{\yt}_2+\nrm{\ytilde}_2}\nonumber \\
    &\leq \nrm{\G_t}_\F + \dix \diy + \dix \expect{\sqrt{\tr\prn*{\ytilde\ytilde^\top}}}
    \leq \nrm{\G_t}_\F + \dix \diy + \dix {\sqrt{\expect{\tr\prn*{\ytilde\ytilde^\top}}}}\nonumber \\
    &\leq \nrm{\G_t}_\F + \dix \diy + \sqrt{\dix^2\omega / q}
    \leq 2\dix \diy + \sqrt{\dix^2\omega / q},
\end{align}
where the first inequality follows from $\dix \ge \nrm{\xt}_2$, the third inequality follows from $\diy\geq \nrm{\yt}_2$, the fourth inequality follows from Jensen's inequality, and the fifth inequality follows from \cref{lem:bound of trace}.
Thus, combining \eqref{eq:expect_bt} with the last inequality, we have
\begin{align}
    \E\brk*{\sumt{b_t}}&\leq \E\brk*{\sumt{\nrm{\gtil}_\F^2}} + \E\brk*{\sumt
    {\E_t\brk*{\nrm{\gtil}_\F}\sum_{s=t-D}^{t-1}\nrm{\tilde{\G}_s}_\F}} \nonumber \\
    &\leq \E\brk*{\sumt{\nrm{\gtil}_\F^2}}+ \prn*{2\dix \diy + \sqrt{\frac{\dix^2\omega}{q}} } \E\brk*{\sumt{\sum_{s=t-D}^{t-1}\E_s\brk*{\nrm{\tilde{\G}_s}_\F}}} \nonumber \\
    & \leq\E\brk*{\sumt{\nrm{\gtil}_\F^2}}+ D T \prn*{2\dix \diy + \sqrt{\frac{\dix^2\omega}{q}} }^2.
    \label{eq:expect_gtil_self}
\end{align}
Hence, from \eqref{eq:bandit_delayed_expect_inpr}, \eqref{eq:nrm_gtilde}, and \eqref{eq:expect_gtil_self}, we have
\begin{align}\label{eq:expext_inpr_self}
    &\E\brk*{\sumt{\inpr{\gtil,\W_t-\U}}}\nonumber\\
    &\leq 2B\prn*{2 \E\brk*{\max_{t\in\brk{T}} \sum_{s=t-D}^{t-1}\nrm{\hat{\G}_{s}}_\F }+\sqrt{\E\brk*{\sumt{\nrm{\hat{\G}_t}_\F^2}}}+\sqrt{\E\brk*{\sumt{b_t}}}}\nonumber \\
    &\leq 2B\prn*{4\dix \diy D + 2\dix D\sqrt{\omega / q}+\sqrt{\E\brk*{\sumt{\nrm{\gtil}_\F^2}}}+\sqrt{\E\brk*{\sumt{b_t}}}}\nonumber\\
    &\leq 8B\dix\diy D + 4B\dix D\sqrt{\omega / q} + 4B\sqrt{\E\brk*{\sumt{\nrm{\gtil}_\F^2}}} + 2B\prn*{2\dix \diy + \dix \sqrt{\omega / q } } \sqrt{DT}\nonumber\\
    &\leq 8B\dix\diy D + 4B\dix D\sqrt{\omega / q} \nonumber \\ 
    &\qquad+ 4B\sqrt{2\sumt{\prn*{b\sw+\frac{\dix^2\omega}{q}}}}
    + 2B\prn*{2\dix \diy + \dix\sqrt{\omega / q} } \sqrt{DT},
\end{align}
where the first inequality follows from \eqref{eq:bandit_delayed_expect_inpr}, the second inequality follows from \eqref{eq:nrm_gtilde},
the third inequality follows from \eqref{eq:expect_gtil_self} and the subadditivity of $x \mapsto \sqrt{x}$ for $x \geq 0$, 
and the last inequality follows from \cref{thm:evaluation of Gtilde}.
Therefore, combining all the above arguments yields 
\begin{align*}
    \E\brk*{\reg}
    &\leq \E\brk*{\sumt{(\sw-\su)}} - a\E\brk*{\sumt{\sw}} + q T\\
    &\leq \E\brk*{\sumt{\inpr{\gtil,\wt-\U}}} - a\E\brk*{\sumt{\sw}} + q T\\
    & \leq 8B\dix\diy D + 4B\prn*{\sqrt{2b\E\brk*{\sumt{\sw}}}+\dix \sqrt{ 2\omega T / q}
    }+ 4B\dix D\sqrt{\omega / q} 
    \nonumber \\
    &\qquad+ 2B\prn*{2\dix \diy +\dix \sqrt{\omega / q}  } \sqrt{DT} - a\E\brk*{\sumt{\sw}} + q T \\
    &\leq
    4B\dix\diy (2D+\sqrt{DT}) + \frac{8bB^2}{a} + 4B\dix D\sqrt{\omega / q} 
    +2B\dix\prn[\big]{\sqrt{D}+2\sqrt{2}}\sqrt{\omega T/ q} + q T,
\end{align*}
where the first inequality follows from \cref{asp:bandit_a}, 
the third inequality follows from \eqref{eq:expext_inpr_self} and the subadditivity of $x \mapsto \sqrt{x}$ for $x \geq 0$, 
and the last inequality follows from the definition of $\epsilon$ and $c_1\sqrt{x}-c_2x\leq{c_1^2}/\prn{4c_2}$ for $x \geq 0$, $c_1 \geq 0$, and $c_2 > 0$.
Finally, substituting $q=\prn*{\frac{\omega B^2\dix^2 D}{T}}^{1/3}$ gives the desired bound.
\end{proof}

\section{{Variable delay}}\label{app:variable_delay}
This section provides the algorithms and analyses under the variable-delay setting, which is a natural extension of the fixed-delay setting.
As the notation for the variable-delay setting, let $\tau_t$ denote the delay time of the feedback received at time $t$, and define $\tau_\ast = \max_t \tau_t$.
Under this setting, by leveraging the Single-instance Online Learning In Delayed environments (SOLID) \citep{joulani16delay}, we achieve a surrogate regret bound of $O(\sqrt{\tau_{1:T}} + \tau_\ast)$ in the full-information setting (\cref{thm:full_info_variable_delay}), and bounds of $O(\sqrt{KT} + \sqrt{\tau_{1:T}} + T^{2/3} + \tau_\ast)$ (\cref{thm:variabledelay_bandit_weighted_inverse}) and $O(T^{1/6} \sqrt{\tau_{1:T}} + \tau_\ast)$ (\cref{thm:bandit_variable_delay_matrix}) in the bandit setting.

\subsection{Single-instance Online Learning In Delayed environments (SOLID)}
\label{app:solid}
We provide a detail of SOLID algorithm used for updating $\W_t$ under the variable-delay setting.
\begin{algorithm}[t]
    \caption{Single-instance Online Learning In Delayed environments (SOLID)}
    \label{ALG:solid}
    \begin{algorithmic}[1]
        \Require {BASE, the first prediction $\W$ of BASE}
        \For {time step $t = 1,2,\dots,T$}
            \State {Set $\W_t\gets \W$ as the prediction for the current time step.}
            \State {Receive the feedbacks $H_t$ that arrive at the end of time step $t$.}
            \ForAll {feedback in $H_t$}
            \State {Update BASE with feedback.}
            \State {$\W\gets$ the next prediction of BASE.}
            \EndFor 
        \EndFor
    \end{algorithmic}
\end{algorithm}
Consider any deterministic non-delayed online learning algorithm (call it BASE).
SOLID is an algorithm that, regardless of the original arrival time of the feedback, provides the feedback to BASE in the order in which it is observed, and makes predictions based on the outputs of BASE (\cref{ALG:solid}).
Below, let $\rho(t)$ denote the time step of the $t$th feedback from SOLID to BASE for any $t \in [T]$, as in~\cite{joulani16delay}.
When we use OGD as BASE, SOLID achieves the following bound:
\begin{lemma}[{\cite[Theorem 5]{joulani16delay}}]
    \label{lem:solid}
    Let BASE OGD with learning rate 
    \begin{equation*}
        \tilde{\eta}_t = \sqrt{2}R\prn*{\sqrt{\sum_{s=1}^t\prn{\nrm{\G_{\rho(s)}}_\F^2+2\nrm{\G_{\rho(s)}}_\F\sum_{i=t-\tilde{\tau_t}}^{t-1}\nrm{\G_{\rho(i)}}_\F}}+\dix^2\diy^2\prn{\tau_\ast^2+\tau_\ast}}^{-1},
    \end{equation*}
    where $R>0$ satisfies  $\tilde{\eta}_T\sumt{\nrm{\U-\W_t}_\F^2}\leq 4R^2$.
    Then, SOLID achieves
    \begin{align*}
        &\sumt{(\sw-\su)}\\
        &\leq 2\sqrt{2}R\sqrt{\sum_{t=1}^T\nrm{\G_t}_\F^2+2\sum_{t=1}^T\nrm{\G_{\rho(t)}}_\F\sum_{s=t+1}^T\nrm{\G_{\rho(s)}}_\F\ind\crl{s-\tilde{\tau}_s\leq t}}+\dix\diy R\sqrt{2(\tau_\ast^2+\tau_\ast)}.
    \end{align*}
\end{lemma}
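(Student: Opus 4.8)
The plan is to adapt the single-instance delayed online learning analysis of \citet{joulani16delay}: first reduce the surrogate regret to a linear regret, then split that linear regret into the regret incurred by the BASE algorithm (here OGD) on the feedbacks in the order they are actually processed, plus a drift penalty that accounts for the staleness of the predictions. Concretely, I would begin by invoking convexity of each $S_t$ to obtain
\[
  \sumt{(\sw - \su)} \le \sumt{\inpr{\G_t, \W_t - \U}},
\]
so that it suffices to control the linearized quantity $\sumt{\inpr{\G_t, \W_t - \U}}$.

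The key bookkeeping step is that SOLID returns the \emph{current} BASE iterate $\W_t$ at round $t$, whereas the gradient $\G_t$ of that round is fed to BASE only later, as its $\rho^{-1}(t)$-th update. To align these, I would introduce the internal OGD iterate $\bm{Z}_k$ held by BASE just before it processes its $k$-th feedback $\G_{\rho(k)}$, and write
\[
  \sumt{\inpr{\G_t, \W_t - \U}}
  = \sum_{k=1}^T \inpr{\G_{\rho(k)}, \bm{Z}_k - \U}
  + \sumt{\inpr{\G_t, \W_t - \bm{Z}_{\rho^{-1}(t)}}}.
\]
The first sum is exactly the regret of OGD run on the permuted feedback sequence, and the second is the drift caused by the delay.

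Next I would bound the two pieces. For the OGD term, the adaptive analysis of OGD with the prescribed step sizes $\tilde{\eta}_t$ gives a bound of order $R\sqrt{\sumt{\nrm{\G_t}_\F^2}}$, where the self-bounding condition $\tilde{\eta}_T \sumt{\nrm{\U - \W_t}_\F^2} \le 4R^2$ removes the explicit dependence on the domain diameter. For the drift term, I would use that $\W_t$ and $\bm{Z}_{\rho^{-1}(t)}$ differ only by the OGD updates executed between the moment $\W_t$ is issued and the moment $\G_t$ is processed; since each such update displaces the iterate by at most $\tilde{\eta}$ times a gradient norm, Cauchy--Schwarz yields a per-round bound proportional to $\tilde{\eta}_t \nrm{\G_t}_\F$ times the sum of the norms of the feedbacks pending during that interval. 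Summed over $t$, this is precisely where the cross term $\sum_{t} \nrm{\G_{\rho(t)}}_\F \sum_{s>t} \nrm{\G_{\rho(s)}}_\F \ind\crl{s - \tilde{\tau}_s \le t}$ emerges, while the feedbacks still outstanding at the horizon contribute the boundary term, bounded via $\nrm{\G_t}_\F \le \dix\diy$ by $\dix\diy R\sqrt{2(\tau_\ast^2 + \tau_\ast)}$.

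The hard part will be the drift bookkeeping: for each $t$ one must identify exactly which future feedbacks are pending when $\W_t$ is output, and verify that aggregating the per-round drift bounds reproduces the indicator $\ind\crl{s - \tilde{\tau}_s \le t}$ rather than an overcount. Once both terms are assembled, substituting the stated $\tilde{\eta}_t$ and applying the inequality $c_1\sqrt{x} - c_2 x \le c_1^2/(4c_2)$ to collect the $\sqrt{\cdot}$ contributions yields the claimed bound.
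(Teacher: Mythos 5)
You should know that the paper contains no proof of this statement to compare against: the lemma is imported verbatim as \citet[Theorem~5]{joulani16delay} and used as a black box throughout \cref{app:variable_delay}. Your sketch reconstructs the strategy of that cited source --- convexity reduction, then the exact split of the linearized regret into the OGD regret over feedbacks in processing order plus the drift term $\sumt{\inpr{\G_t,\W_t-\bm{Z}_{\rho^{-1}(t)}}}$, with the drift controlled through the feedbacks pending between the issuing of $\W_t$ and the processing of $\G_t$ --- and this is indeed the right architecture. The one caution is that everything you label the ``hard part'' is where the entire content of the cited proof lives: the indicator $\ind\crl{s-\tilde{\tau}_s\leq t}$ and the additive $\dix\diy R\sqrt{2(\tau_\ast^2+\tau_\ast)}$ term both emerge from the interplay between that pending-feedback bookkeeping and the fact that $\tilde{\eta}_t$ can only be computed from already-processed feedback (the summand $\dix^2\diy^2\prn{\tau_\ast^2+\tau_\ast}$ in the learning-rate denominator exists precisely to compensate for the at most $\tau_\ast$ pending gradients, each of norm at most $\dix\diy$), so your proposal, while structurally faithful to the source, defers exactly the verification that makes the stated constants and indicator correct.
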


\subsection{Full-information}
\label{app:variable_delay_fullinfo}
Here, we provide the algorithm for the variable-delay full-information setting.
This subsection assumes \cref{asp:delayed_a}, as in \cref{sec:delay}.

\paragraph{Algorithm}
We use SOLID with OGD as $\alg$ for updating $\W_t$.

\paragraph{Regret bound and analysis}
The above algorithm achieves the following bound:
\begin{theorem}\label{thm:full_info_variable_delay}
    SOLID with OGD update in online structured prediction with a delay of $\tau_t$ achieves
    \begin{equation*}
        \E[\reg] \leq\frac{2bR^2}{a}+4\dix\diy R\sqrt{\tau_{1:T}}+\dix\diy R\sqrt{2(\tau_\ast^2+\tau_\ast)}= O\prn*{\sqrt{\tau_{1:T}}+\tau_\ast}.
    \end{equation*}
\end{theorem}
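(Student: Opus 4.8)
The plan is to feed the SOLID regret guarantee of \cref{lem:solid} into the surrogate-to-target reduction of \cref{asp:delayed_a} and then self-bound. A useful preliminary observation is that the feedback here is full-information, so the gradients $\G_t=\nabla\sw=(\yho(\tht)-\yt)\xt^\top$ depend only on the observed $\yt$ and the deterministically updated iterate $\W_t$; consequently each $\sw$ is non-random, and the only randomness comes from the decoded prediction $\yht$ entering $L_t(\yht)$. Thus \cref{lem:solid} controls $\sumt{(\sw-\su)}$ deterministically, and the sole probabilistic step is passing from $\sw$ to $\E_t[L_t(\yht)]$ through \cref{asp:delayed_a}.

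First I would apply \cref{lem:solid} and split the square root via subadditivity of $x\mapsto\sqrt{x}$, separating the diagonal term $\sumt{\normst^2}$ from the delay term $2\sum_{t=1}^T\nrm{\G_{\rho(t)}}_\F\sum_{s=t+1}^T\nrm{\G_{\rho(s)}}_\F\ind\crl{s-\tilde{\tau}_s\leq t}$. For the diagonal term I invoke $\normst^2\le b\sw$ from \eqref{eq:St_smooth}, producing a contribution $2\sqrt{2}R\sqrt{b\sumt{\sw}}$. The main obstacle is the delay term. Using $\normst\le\dix\diy$ (which holds since $\yho(\tht),\yt\in\conv(\yy)$ give $\nrm{\yho(\tht)-\yt}_2\le\diy$, and $\nrm{\xt}_2\le\dix$), each product $\nrm{\G_{\rho(t)}}_\F\nrm{\G_{\rho(s)}}_\F$ is at most $\dix^2\diy^2$, so the delay term is bounded by $2\dix^2\diy^2$ times the number of pairs $(t,s)$ with $t<s$ and $s-\tilde{\tau}_s\le t$. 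For each fixed $s$ the admissible $t$ are the $\tilde{\tau}_s$ integers in $[s-\tilde{\tau}_s,\,s-1]$, so this count is $\sum_{s=1}^T\tilde{\tau}_s$. The crux is then the counting identity $\sum_{s=1}^T\tilde{\tau}_s\le\tau_{1:T}$ — the total number of feedbacks pending across all service times cannot exceed the total incurred delay, which follows from a Fubini/double-counting argument on (round, blocking round) pairs as in \cite{joulani16delay}. This bounds the delay term by $2\dix^2\diy^2\tau_{1:T}$, and hence its square-root contribution by $2\sqrt{2}R\sqrt{2\dix^2\diy^2\tau_{1:T}}=4\dix\diy R\sqrt{\tau_{1:T}}$, exactly matching the target.

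Finally I assemble the pieces. Together with the additive term $\dix\diy R\sqrt{2(\tau_\ast^2+\tau_\ast)}$ from \cref{lem:solid}, the above yields the deterministic bound $\sumt{(\sw-\su)}\le 2\sqrt{2}R\sqrt{b\sumt{\sw}}+4\dix\diy R\sqrt{\tau_{1:T}}+\dix\diy R\sqrt{2(\tau_\ast^2+\tau_\ast)}$. Taking expectations and applying \cref{asp:delayed_a} (legitimate since each $\sw$ is deterministic, so $\E[L_t(\yht)]\le(1-a)\sw$) gives $\E[\reg]\le\sumt{(\sw-\su)}-a\sumt{\sw}$. I then apply the self-bounding inequality $c_1\sqrt{x}-c_2x\le c_1^2/(4c_2)$ with $c_1=2\sqrt{2}R\sqrt{b}$, $c_2=a$, and $x=\sumt{\sw}$, which converts the $2\sqrt{2}R\sqrt{b\sumt{\sw}}-a\sumt{\sw}$ part into $2bR^2/a$ and removes all dependence on $\sumt{\sw}$. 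This reproduces exactly the claimed bound $\frac{2bR^2}{a}+4\dix\diy R\sqrt{\tau_{1:T}}+\dix\diy R\sqrt{2(\tau_\ast^2+\tau_\ast)}=O\prn*{\sqrt{\tau_{1:T}}+\tau_\ast}$.
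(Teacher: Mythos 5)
Your proposal is correct and follows essentially the same route as the paper's proof: invoke \cref{lem:solid}, bound the diagonal term via $\nrm{\G_t}_\F^2\leq b\sw$ and the delay term via $\nrm{\G_t}_\F\leq\dix\diy$ together with the counting identity $\sum_{s}\tilde{\tau}_s=\tau_{1:T}$, then apply \cref{asp:delayed_a} and the self-bounding inequality $c_1\sqrt{x}-c_2x\leq c_1^2/(4c_2)$. The only (harmless) differences are that you use the inequality $\sum_s\tilde{\tau}_s\leq\tau_{1:T}$ where the paper uses the equality, and you make explicit the observation—left implicit in the paper—that under full-information feedback the iterates $\W_t$ and hence the $\sw$ are deterministic, which is what justifies applying \cref{asp:delayed_a} without expectations on the surrogate losses.
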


\begin{proof}
    Using SOLID with OGD (\cref{lem:solid}), we have 
    \begin{align}\label{eq:variabledelay_fullinfo_proof}
        &\sumt{(\sw-\su)}\nonumber\\
        &\leq 2\sqrt{2}R\sqrt{\sum_{t=1}^T\nrm{\G_t}_\F^2+2\sum_{t=1}^T\nrm{\G_{\rho(t)}}_\F\sum_{s=t+1}^T\nrm{\G_{\rho(s)}}_\F\ind\crl{s-\tilde{\tau}_s\leq t}}+\dix\diy R\sqrt{2(\tau_\ast^2+\tau_\ast)}\nonumber\\
        &\leq 2\sqrt{2}R\sqrt{b\sum_{t=1}^T\sw}+4\dix\diy R\sqrt{\tau_{1:T}}+\dix\diy R\sqrt{2(\tau_\ast^2+\tau_\ast)},
    \end{align}
    where we used $\nrm{\G_t}_\F^2\leq b\sw$, $\nrm{\G_t}_\F\leq \dix\diy$, $\sum_{s=t+1}^T\ind\crl{s-\tilde{\tau}_s\leq t}=\tilde{\tau}_s$, $\sumt{\tilde{\tau}_t}=\sumt{\tau_t}$, and the subadditivity of $x \mapsto \sqrt{x}$ for $x \geq 0$ in the last inequality.
    Therefore, from this inequality, it holds that
    \begin{align*}
        \E[\reg]&\leq \sumt{(\sw-\su)}-a\sumt{\sw}\\
        &\leq 2\sqrt{2}R\sqrt{b\sum_{t=1}^T\sw}+4\dix\diy R\sqrt{\tau_{1:T}}+\dix\diy R\sqrt{2(\tau_\ast^2+\tau_\ast)}-a\sumt{\sw}\\
        &\leq \frac{2bR^2}{a}+4\dix\diy R\sqrt{\tau_{1:T}}+\dix\diy R\sqrt{2(\tau_\ast^2+\tau_\ast)},
    \end{align*}
    where the first inequality follows from \cref{asp:delayed_a}, the second inequality follows from \eqref{eq:variabledelay_fullinfo_proof}, and the last inequality follows from $c_1\sqrt{x}-c_2x\leq{c_1^2}/\prn{4c_2}$ for $x \geq 0$, $c_1 \geq 0$, and $c_2 > 0$. This is the desired bound.
\end{proof}
This result is superior to the algorithm designed for the fixed-delay feedback in that it can handle variable-delay feedback. 
When $D \ge \tau_\ast$ is known and small, we may also use the algorithm developed for fixed-delay feedback to achieve the $O(D^2)$ bound.

\subsection{Bandit feedback}
\label{app:variable_delay_bandit}
We provide algorithms for the variable-delay bandit setting.
This subsection assumes \cref{asp:bandit_a}, as in \cref{sec:bandit}.
Then, by using the inverse-weighted gradient estimator and the pseudo-inverse matrix estimator as gradient estimators, we can achieve surrogate regret upper bounds of $O\prn{\sqrt{KT}+\sqrt{\tau_{1:T}}+\tau_\ast}$ and $O(T^{1/6}\sqrt{\tau_{1:T}} + T^{2/3} +\tau_\ast)$, respectively.
Below, we provide details of these results.

\subsubsection{Algorithm based on inverse-weighted gradient estimator with $O\prn{\sqrt{KT}+\sqrt{\tau_{1:T}}+\tau_\ast}$ regret}
Here, we introduce an algorithm with a surrogate regret upper bound of $O\prn{\sqrt{KT}+\sqrt{\tau_{1:T}}+\tau_\ast}$.

\paragraph{Algorithm}
We use RDUE with $q=R\sqrt{\K/T}$ for decoding (assuming $T\geq R^2\K$), the gradient estimator $\hat{\G}_t$ as in \cref{subsec:Bandit_Structured_Prediction_with_General_Losses}, and SOLID with OGD as $\alg$.

\paragraph{Regret bound and analysis}
The above algorithm achieves the following surrogate regret bound:
\begin{theorem}
    \label{thm:variabledelay_bandit_weighted_inverse}
    The above algorithm achieves 
    \begin{equation*}
        \E[\reg] \leq \prn*{\frac{2b}{a}+1}R\sqrt{KT} + 4\dix\diy R\sqrt{\tau_{1:T}} + \dix\diy R\sqrt{2(\tau_\ast^2+\tau_\ast)}
        =O\prn{\sqrt{KT}+\sqrt{\tau_{1:T}}+\tau_\ast}.
    \end{equation*}
\end{theorem}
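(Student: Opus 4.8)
The plan is to run the full-information variable-delay argument of \cref{thm:full_info_variable_delay} ``in expectation,'' substituting the unbiased inverse-weighted estimator $\hat{\G}_t = \frac{\ind[\yht=\yt]}{p_t(\yt)}\G_t$ for the exact gradient $\G_t$. Two structural facts about this estimator drive everything. First, because $\hat{\G}_t$ is a nonnegative random scalar times the deterministic matrix $\G_t$, both its conditional mean and its conditional norm-mean are clean: $\expect{\hat{\G}_t}=\G_t$ and $\expect{\nrm{\hat{\G}_t}_\F}=\nrm{\G_t}_\F\leq\dix\diy$. Second, the conditional second moment inflates only by the exploration factor, $\expect{\nrm{\hat{\G}_t}_\F^2}=\nrm{\G_t}_\F^2/p_t(\yt)\leq (bK/q)\sw$, where I use $p_t(\yt)\geq q/K$ together with the self-bounding property \eqref{eq:St_smooth}.

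First I would pass from the surrogate gap to the linearized, estimator-based regret using convexity of $S_t$ and unbiasedness, giving $\E\brk*{\sumt(\sw-\su)}\leq\E\brk*{\sumt\inpr{\hat{\G}_t,\W_t-\U}}$, and then apply the SOLID-with-OGD guarantee of \cref{lem:solid} to the sequence $\set{\hat{\G}_t}$. Moving the expectation inside the square root by Jensen's inequality, the dominant term becomes $2\sqrt{2}R\sqrt{\E\brk*{\sumt\nrm{\hat{\G}_t}_\F^2}}\leq 2\sqrt{2}R\sqrt{(bK/q)\E\brk*{\sumt\sw}}$, while the fixed additive term $\dix\diy R\sqrt{2(\tau_\ast^2+\tau_\ast)}$ is inherited from \cref{lem:solid} through the true-gradient bound $\dix\diy$.

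The step I expect to be the main obstacle is controlling the expected delay-coupling cross terms $\E\brk*{\sum_t\nrm{\hat{\G}_{\rho(t)}}_\F\sum_{s>t}\nrm{\hat{\G}_{\rho(s)}}_\F\ind\crl{s-\tilde{\tau}_s\leq t}}$, the variable-delay analogue of the fixed-delay reduction in \eqref{eq:expect_bt}--\eqref{eq:bound_of_bt_general}. The crucial point is that the $K/q$ blow-up of the estimator's worst-case norm must be confined to the second-moment term above and kept out of the delay terms. This is achieved by the law of total expectation: in each product of norms one conditions on the appropriate history so that the ``later'' factor collapses via $\expect{\nrm{\hat{\G}_t}_\F}=\nrm{\G_t}_\F\leq\dix\diy$, and then the remaining factor collapses the same way; the counting identity $\sum_t\tilde{\tau}_t=\tau_{1:T}$, already used in \eqref{eq:variabledelay_fullinfo_proof}, converts the number of coupled pairs into $\tau_{1:T}$. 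Hence the cross-term contribution is bounded by $2\dix^2\diy^2\tau_{1:T}$, which after the $2\sqrt{2}R\sqrt{\cdot}$ prefactor reproduces the $4\dix\diy R\sqrt{\tau_{1:T}}$ term exactly as in the full-information bound.

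Finally I would combine these estimates with \cref{asp:bandit_a}, which yields $\E\brk*{\reg}\leq\E\brk*{\sumt(\sw-\su)}-a\E\brk*{\sumt\sw}+qT$, and apply the elementary inequality $c_1\sqrt{x}-c_2x\leq c_1^2/(4c_2)$ (for $x\geq0$, $c_1\geq0$, $c_2>0$) to the pair $2\sqrt{2}R\sqrt{(bK/q)\E\brk*{\sumt\sw}}-a\E\brk*{\sumt\sw}$, producing a $2bR^2K/(aq)$ contribution. Substituting $q=R\sqrt{K/T}$ turns this into $\frac{2b}{a}R\sqrt{KT}$ and the exploration penalty $qT$ into $R\sqrt{KT}$, delivering the claimed $\prn*{\frac{2b}{a}+1}R\sqrt{KT}+4\dix\diy R\sqrt{\tau_{1:T}}+\dix\diy R\sqrt{2(\tau_\ast^2+\tau_\ast)}$ bound.
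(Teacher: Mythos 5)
Your proposal is correct and follows essentially the same route as the paper's proof: convexity plus unbiasedness to linearize, the SOLID-with-OGD guarantee (\cref{lem:solid}) applied to $\hat{\G}_t$ with Jensen's inequality, the tower-property collapse of the delay cross-terms to $\dix^2\diy^2\tau_{1:T}$ via $\expect{\nrm{\hat{\G}_t}_\F}=\nrm{\G_t}_\F\leq\dix\diy$, and finally \cref{asp:bandit_a} with the inequality $c_1\sqrt{x}-c_2x\leq c_1^2/(4c_2)$ and the choice $q=R\sqrt{K/T}$. All constants, including the $2bR^2K/(aq)$ term and the resulting $\prn*{\frac{2b}{a}+1}R\sqrt{KT}$, match the paper's argument.
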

This result is an extension of the bound under the fixed-delay setting.
In particular, if $\tau_t = D$ for any $t$, we obtain $\E\brk{\reg} = O\prn{\sqrt{(K+D)T}}$.

\begin{proof}
    First, we recall that $\E_t\brk{\nrm{\G_t}_\F}\leq \dix\diy$ and $\E_t\brk{\nrm{\G_t}_\F^2}\leq bK\sw/q$ from the proof of \cref{thm:delay_bandit_bound_general_abstract}.
    Using SOLID with OGD (\cref{lem:solid}) and the subadditivity of $x \mapsto \sqrt{x}$ for $x \geq 0$, we have
    \begin{align}\label{eq:variable_delay_bandit}
        &\E\brk*{\sumt{(\sw-\su)}}\nonumber\\
        &\leq2\sqrt{2}R\E\brk*{\sqrt{\sum_{t=1}^T\nrm{\hat{\G}_t}_\F^2+2\sum_{t=1}^T\sum_{s=t+1}^T\nrm{\hat{\G}_{\rho(t)}}_\F\nrm{\hat{\G}_{\rho(s)}}_\F\ind\crl{s-\hat{\tau}_s\leq t}}}\nonumber\\
        &\qquad+\dix\diy R\sqrt{2(\tau_\ast^2+\tau_\ast)}\nonumber\\
        &\leq  2\sqrt{2}R\E\brk*{\sqrt{\sum_{t=1}^T\nrm{\hat{\G}_t}_\F^2}}+4R\E\brk*{\sqrt{\sum_{t=1}^T\sum_{s=t+1}^T\nrm{\hat{\G}_{\rho(t)}}_\F\nrm{\hat{\G}_{\rho(s)}}_\F\ind\crl{s-\hat{\tau}_s\leq t}}}\nonumber\\
        &\qquad+\dix\diy R\sqrt{2(\tau_\ast^2+\tau_\ast)}.
    \end{align}
    The second term is bounded as 
    \begin{align}\label{eq:evaluation_variabledelay_bandit_weighted_inverse}
        &\E\brk*{\sum_{t=1}^T\sum_{s=t+1}^T\nrm{\hat{\G}_{\rho(t)}}_\F\nrm{\hat{\G}_{\rho(s)}}_\F\ind\crl{s-\hat{\tau}_s\leq t}}\nonumber\\
        &\leq \E\brk*{\sum_{t=1}^T\sum_{s=t+1}^T\E_{\rho_{\max}}\brk*{\nrm{\hat{\G}_{\rho_\max}}_\F}\nrm{\hat{\G}_{\rho_\min}}_\F\ind\crl{s-\hat{\tau}_s\leq t}}\nonumber\\
        &\leq \dix\diy \E\brk*{\sum_{t=1}^T\sum_{s=t+1}^T\E_{\rho_{\min}}\brk*{\nrm{\hat{\G}_{\rho_\min}}}_\F\ind\crl{s-\tilde{\tau}_s\leq t}}
        \leq \dix^2\diy ^2\sum_{t=1}^T\tau_t,
    \end{align}
    where we assumed $\rho_{\max}=\max\crl{\rho(t),\rho(s)}$ and $\rho_{\min}=\min\crl{\rho(t),\rho(s)}$, used the tower property in the first and second inequalities, and used $\E_t\brk{\nrm{\G_t}_\F}\leq \dix\diy $ in the second and last inequalities.
    Hence, it holds that 
    \begin{equation}\label{eq:sw-su_variabledelay_bandit_weighted_inverse}
        \E\brk*{\sumt{(\sw\!-\!\su)}} 
        \!\leq\!
        2\sqrt{2}R\sqrt{\frac{bK}{q}\E\brk*{\sumt{\sw}} \!}+4\dix\diy R\sqrt{\tau_{1:T}}+\dix\diy R\sqrt{2(\tau_\ast^2+\tau_\ast)},
    \end{equation}
    where the inequality follows from $\E_t\brk{\nrm{\G_t}_\F^2}\leq bK\sw/q$ and \eqref{eq:evaluation_variabledelay_bandit_weighted_inverse}.
    Therefore, combining all the above arguments yields 
    \allowdisplaybreaks
    \begin{align*}
        \E\brk{\reg}&\leq \E\brk*{\sumt{(\sw-\su)}}-a\E\brk*{\sumt{\sw}}+qT\\
        &\leq 2\sqrt{2}R\sqrt{\frac{bK}{q}\E\brk*{\sumt{\sw}}}+4\dix\diy R\sqrt{\tau_{1:T}}\\
        &\qquad +\dix\diy R\sqrt{2(\tau_\ast^2+\tau_\ast)} -a\E\brk*{\sumt{\sw}}+qT\\
        &\leq \frac{2bR^2K}{aq}+4\dix\diy R\sqrt{\tau_{1:T}}+\dix\diy R\sqrt{2(\tau_\ast^2+\tau_\ast)}+qT,
    \end{align*}
    where the first inequality follows from \cref{asp:bandit_a}, the second inequality follows from \eqref{eq:sw-su_variabledelay_bandit_weighted_inverse}, and the last inequality follows from $c_1\sqrt{x}-c_2x\leq{c_1^2}/\prn{4c_2}$ for $x \geq 0$, $c_1 \geq 0$, and $c_2 > 0$.
    Finally, choosing $q=R\sqrt{K/T}$ gives the desired bound.
\end{proof}

\subsubsection{Algorithm based on pseudo-inverse matrix estimator with $O(T^{1/6}\sqrt{\tau_{1:T}}+T^{2/3}+\tau_\ast)$ regret}
Here, we provide an algorithm that achieves a surrogate regret upper bound of $O(T^{1/6}\sqrt{\tau_{1:T}}+T^{2/3}+\tau_\ast)$.
This subsection assumes \cref{asp:self} in addition to \cref{asp:bandit_a}.

\paragraph{Algorithm}
We use RDUE with $q=\prn*{\frac{\omega R^2\dix^2}{T}}^{1/3}$ for decoding (assuming $T\geq \omega R^2\dix^2$), the gradient estimator $\tilde{\G}_t$ as in \cref{subsec:Bandit_Structured_Prediction_with_SELF}, and SOLID with OGD as $\alg$.

\paragraph{Regret bound and analysis}
The algorithm described above achieves the following surrogate regret bound:
\begin{theorem}
    \label{thm:bandit_variable_delay_matrix}
    The above algorithm achieves 
    \begin{align*}
        \E[\reg] &\leq \frac{4bR^2K}{a}+8\dix\diy R\sqrt{\tau_{1:T}}+\dix\diy R\sqrt{2(\tau_\ast^2+\tau_\ast)}+O\prn*{\omega^{1/3}R^{2/3}T^{1/6}\sqrt{\tau_{1:T}}+T^{2/3}} \\
        &= O(T^{1/6}\sqrt{\tau_{1:T}}+T^{2/3}+\tau_\ast).
    \end{align*}
\end{theorem}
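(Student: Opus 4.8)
The plan is to mirror the proof of the fixed-delay analogue \cref{thm:delay_bandit_bound_self_abstract} and the inverse-weighted variable-delay result \cref{thm:variabledelay_bandit_weighted_inverse}, but now feeding the pseudo-inverse matrix estimator $\tilde{\G}_t$ into the SOLID regret guarantee of \cref{lem:solid}. Since each $S_t$ is convex and $\tilde{\G}_t$ is unbiased, I start from $\E\brk*{\reg} \leq \E\brk*{\sumt{(\sw - \su)}} - a\E\brk*{\sumt{\sw}} + qT$ via \cref{asp:bandit_a}, and then bound $\E\brk*{\sumt{(\sw-\su)}}$ using \cref{lem:solid}. Applying subadditivity of $x \mapsto \sqrt{x}$ to the square root in the SOLID bound splits it into a diagonal term governed by $\sumt{\nrm{\tilde{\G}_t}_\F^2}$ and a cross (delay) term governed by $\sum_{t=1}^T\sum_{s=t+1}^T \nrm{\tilde{\G}_{\rho(t)}}_\F \nrm{\tilde{\G}_{\rho(s)}}_\F \ind\crl{s - \tilde{\tau}_s \leq t}$, plus the additive SOLID overhead $\dix\diy R\sqrt{2(\tau_\ast^2 + \tau_\ast)}$, which directly yields the $O(\tau_\ast)$ contribution.

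For the diagonal term I invoke \cref{thm:evaluation of Gtilde}, which gives $\expect{\nrm{\tilde{\G}_t}_\F^2} \leq 2b\sw + 2\dix^2\omega/q$; summing, taking the outer expectation, and applying Jensen together with subadditivity yields $\E\brk*{\sqrt{\sumt{\nrm{\tilde{\G}_t}_\F^2}}} \leq \sqrt{2b\,\E\brk*{\sumt{\sw}}} + \dix\sqrt{2\omega T/q}$. For the cross term, the key input is the first-moment bound $\expect{\nrm{\tilde{\G}_t}_\F} \leq 2\dix\diy + \sqrt{\dix^2\omega/q}$ established as \eqref{eq:nrm_gtilde} in the proof of \cref{thm:delay_bandit_bound_self_abstract}. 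Exactly as in \eqref{eq:evaluation_variabledelay_bandit_weighted_inverse}, I apply the tower property twice---first peeling off the later-indexed estimator and replacing it by its conditional expectation, then the earlier one---so that the double sum collapses to $\prn*{2\dix\diy + \sqrt{\dix^2\omega/q}}^2 \sumt{\tau_t}$, using $\sum_{s>t} \ind\crl{s-\tilde{\tau}_s \leq t} = \tilde{\tau}_s$ and $\sumt{\tilde{\tau}_t} = \tau_{1:T}$. Taking square roots and using subadditivity then contributes $2\dix\diy\sqrt{\tau_{1:T}} + \dix\sqrt{\omega/q}\,\sqrt{\tau_{1:T}}$.

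Collecting all pieces, combining the $\sqrt{2b\,\E\brk*{\sumt{\sw}}}$ term with $-a\,\E\brk*{\sumt{\sw}}$ via $c_1\sqrt{x} - c_2 x \leq c_1^2/(4c_2)$ to produce the $O(bR^2/a)$ constant, and finally substituting $q = \prn*{\omega R^2 \dix^2 / T}^{1/3}$ turns the $\dix\sqrt{\omega T/q}$ and $qT$ factors into $O(\omega^{1/3} T^{2/3})$ and the $\dix\sqrt{\omega/q}\,\sqrt{\tau_{1:T}}$ factor into $O(\omega^{1/3} T^{1/6}\sqrt{\tau_{1:T}})$, matching the claimed bound. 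The main obstacle is the cross-term estimate: unlike the inverse-weighted estimator, whose conditional first moment is the clean $\dix\diy$, the pseudo-inverse estimator carries the extra $\sqrt{\dix^2\omega/q}$ factor, so one must track how this $q$-dependence propagates through both tower-property steps and verify that, after the final choice of $q$, it lands at the $T^{1/6}\sqrt{\tau_{1:T}}$ rate rather than something worse. Care is also needed so that the conditioning in the repeated tower-property application respects the delay structure---which estimator is measurable with respect to which filtration---precisely as handled in \eqref{eq:evaluation_variabledelay_bandit_weighted_inverse}.
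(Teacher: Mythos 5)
Your proposal follows the paper's proof essentially step for step: the same regret decomposition via \cref{asp:bandit_a}, the SOLID guarantee of \cref{lem:solid} split by subadditivity into a diagonal term controlled by the second-moment bound of \cref{thm:evaluation of Gtilde} and a cross term controlled by the first-moment bound \eqref{eq:nrm_gtilde} pushed through the same double tower-property argument as \eqref{eq:evaluation_variabledelay_bandit_weighted_inverse}, followed by the self-bounding inequality $c_1\sqrt{x}-c_2x\leq c_1^2/(4c_2)$ and the identical choice $q=\prn{\omega R^2\dix^2/T}^{1/3}$. As a minor aside, your leading constant $O(bR^2/a)$ is the one that actually follows from \cref{thm:evaluation of Gtilde}; the factor $K$ in the paper's $4bR^2K/a$ appears to be a typo inherited from misquoting that lemma.
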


\begin{proof}
    First, we recall that $\E_t\brk{\nrm{\gtil}_\F}\leq 2\dix\diy+\sqrt{\dix^2\omega/q}$ and $\E_t\brk{\nrm{\gtil}_\F^2}\leq 2bK\sw+\frac{2\dix^2\omega}{q}$ hold from \eqref{eq:nrm_gtilde} and \cref{thm:evaluation of Gtilde}, respectively. 
    Following the same steps as in the proof of \cref{thm:variabledelay_bandit_weighted_inverse}, we obtain 
    \begin{align}\label{eq:evaluation_variabledelay_bandit_matrix}
        \E\brk*{\sum_{t=1}^T\sum_{s=t+1}^T\nrm{\tilde{\G}_{\rho(t)}}_\F\nrm{\tilde{\G}_{\rho(s)}}_\F\ind\crl{s-\tilde{\tau}_s\leq t}}\leq \prn*{2\dix\diy+\sqrt{\dix^2\omega/q}}^2\sumt{\tau_t}.
    \end{align}
    Therefore, by using these inequalities and \eqref{eq:variable_delay_bandit}, we get 
    \begin{align*}
        \E\brk{\reg}&\leq \E\brk*{\sumt{(\sw-\su)}}-a\E\brk*{\sumt{\sw}}+qT\\
        &\leq 4R\E\brk*{\sqrt{bK\E\brk*{\sum_{t=1}^T\sw}+\frac{\dix^2\omega}{q}}}
        +4R\prn*{2\dix\diy+\sqrt{\frac{\dix^2\omega}{q}}}\sqrt{\tau_{1:T}}\\
        &\qquad+\dix\diy R\sqrt{2(\tau_\ast^2+\tau_\ast)}-a\E\brk*{\sumt{\sw}}+qT\\
        &\leq \frac{4bR^2K}{a}+4R\sqrt{\frac{\dix^2\omega}{q}}+4R\prn*{2\dix\diy+\sqrt{\frac{\dix^2\omega}{q}}}\sqrt{\tau_{1:T}}+\dix\diy R\sqrt{2(\tau_\ast^2+\tau_\ast)}+qT,
    \end{align*}
    where the first inequality follows from \cref{asp:bandit_a}, the second inequality follows from \cref{thm:evaluation of Gtilde}, \eqref{eq:variable_delay_bandit}, and \eqref{eq:evaluation_variabledelay_bandit_matrix}, the last inequality follows from $c_1\sqrt{x}-c_2x\leq{c_1^2}/\prn{4c_2}$ for $x \geq 0$, $c_1 \geq 0$, and $c_2 > 0$.
    Finally, by substituting $q=\prn*{\frac{\omega R^2\dix^2}{T}}^{1/3}$, we can obtain the desired bound.
\end{proof}
\section{Numerical experiments}
\label{app: experiment}
This section presents the results of numerical experiments for online multiclass classification and multilabel classification under bandit feedback on MNIST and synthetic data.  
All experiments were run on a system with 16GB of RAM, Apple M3 CPU, and in Python 3.11.7 on a macOS Sonoma~14.6.1.
The code is provided in the supplementary material.

\begin{figure}[t]
    \begin{tabular}{cc}
      \begin{minipage}[t]{0.5\hsize}
        \centering
        \includegraphics[keepaspectratio, scale=0.23]{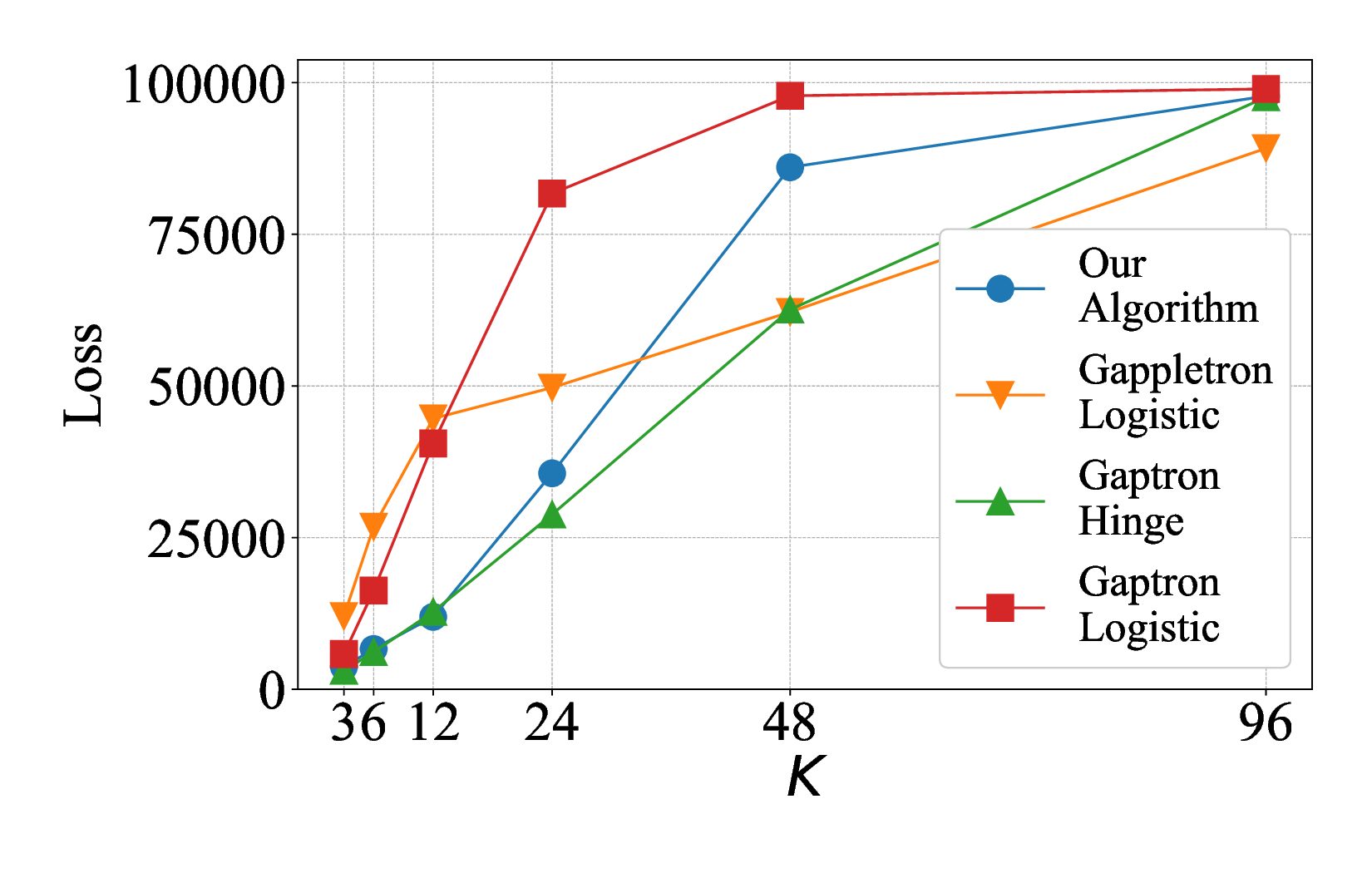}
        \subcaption{$n^{\prime}=2,r=0.0$}
        \label{fig:regret_compare_noise0d2}
      \end{minipage}

      \begin{minipage}[t]{0.5\hsize}
        \centering
        \includegraphics[keepaspectratio, scale=0.23]{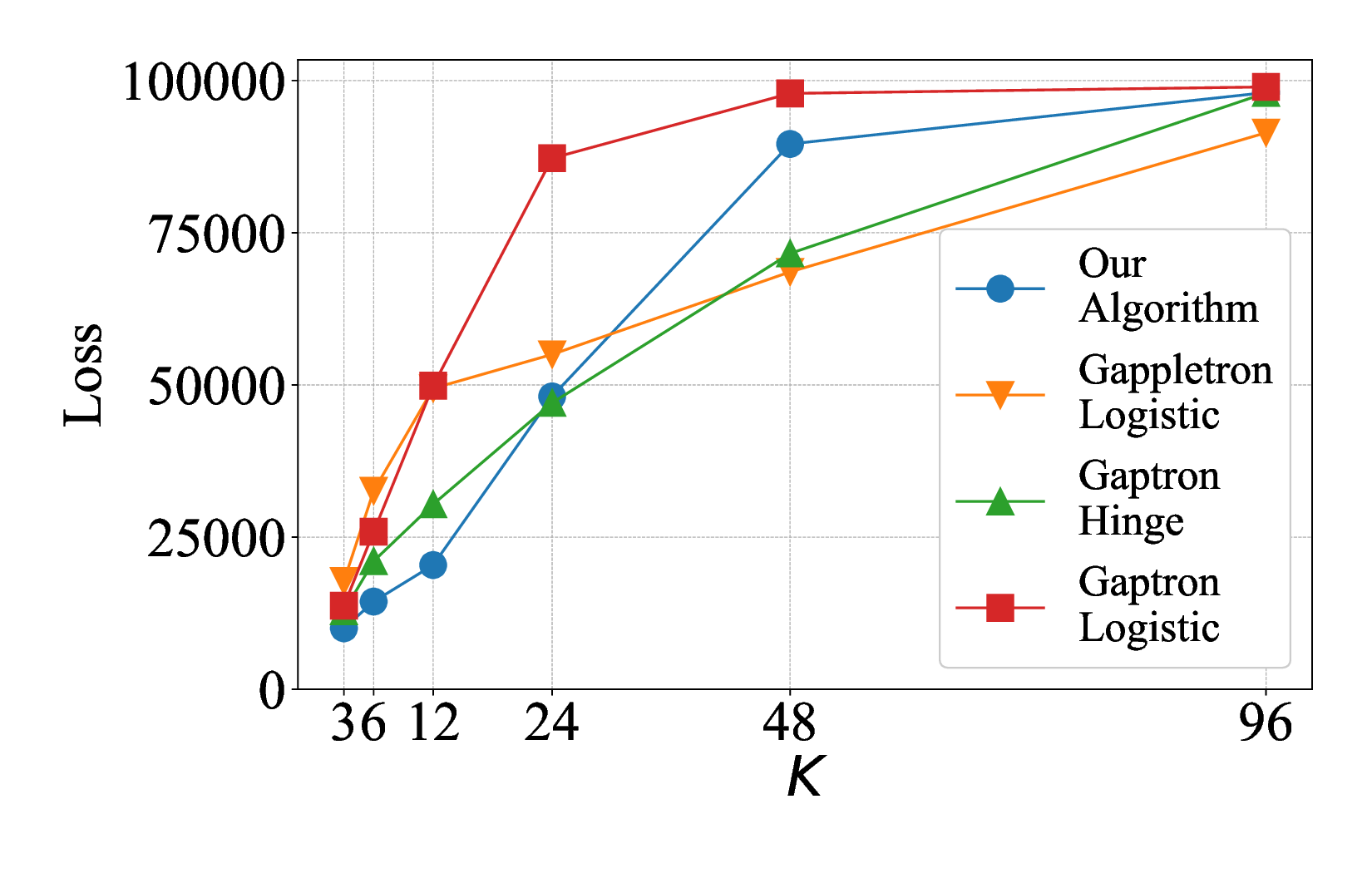}
        \subcaption{$n^{\prime}=2,r=0.1$}
        \label{fig:regret_compare_noise1d2}
      \end{minipage}\\
      
      \begin{minipage}[t]{0.5\hsize}
        \centering
        \includegraphics[keepaspectratio, scale=0.23]{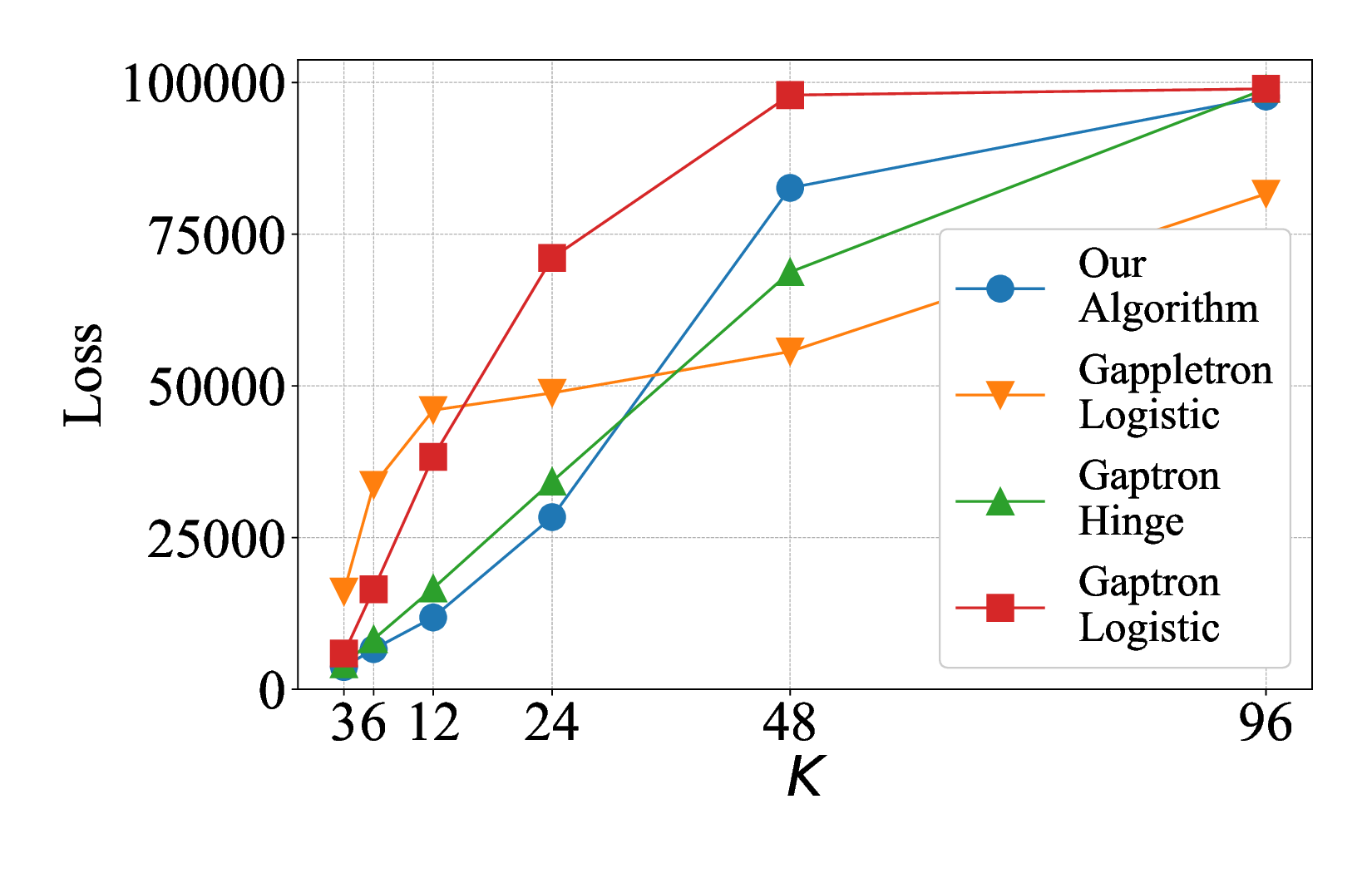}
        \subcaption{$n^{\prime}=4,r=0.0$}
        \label{fig:regret_compare_noise0d4}
      \end{minipage}
      
      
      \begin{minipage}[t]{0.5\hsize}
        \centering
        \includegraphics[keepaspectratio, scale=0.23]{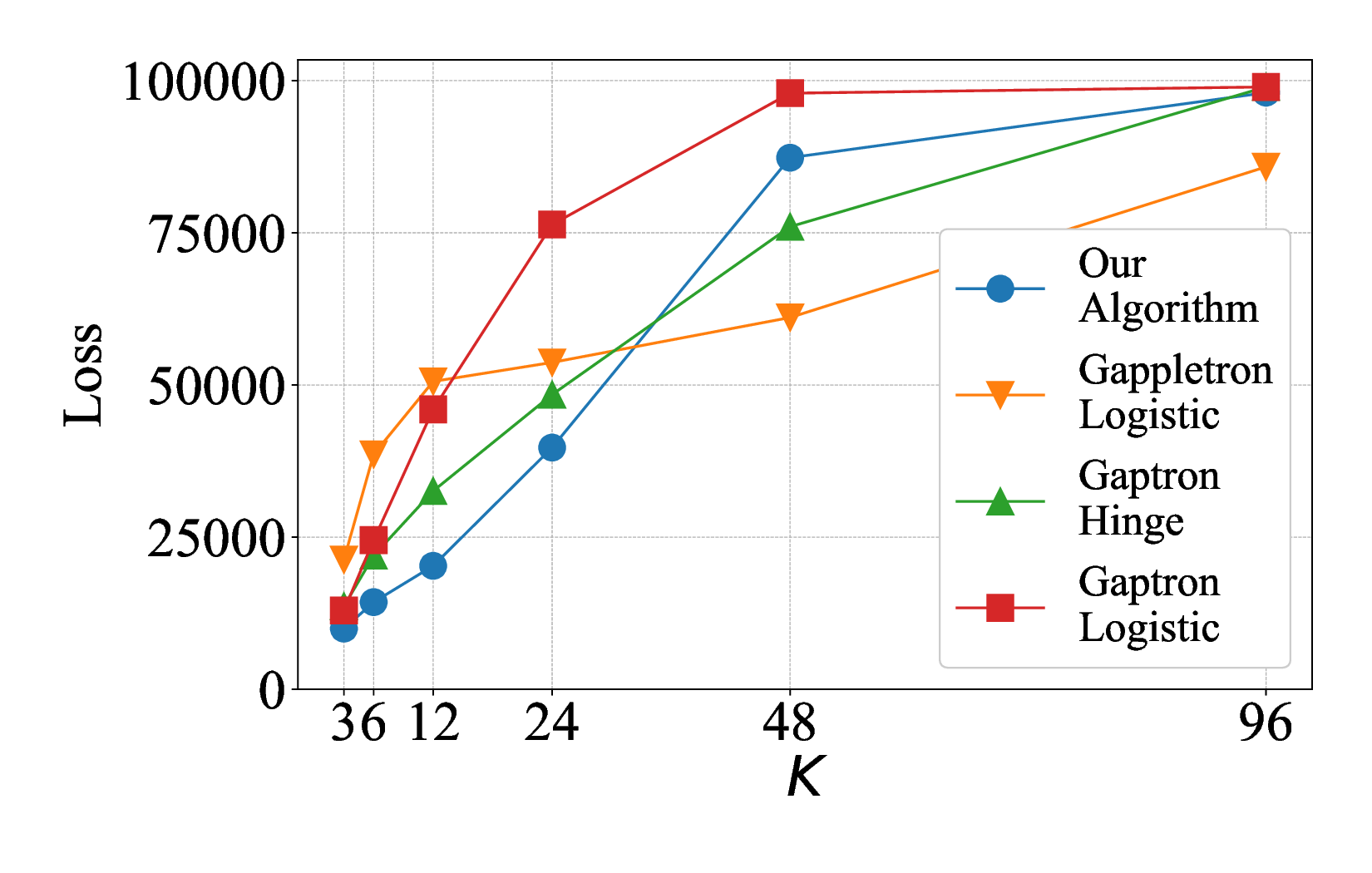}
        \subcaption{$n^{\prime}=4,r=0.1$}
        \label{fig:regret_compare_noise1d4}
      \end{minipage}
    \end{tabular}
    \caption{Results of the synthetic experiments in multiclass classification with bandit feedback.
    In all figures, the horizontal axis represents the number of classes $\K$, and the vertical axis represents the cumulative target loss.}
    \label{fig:regret_compare_app}
\end{figure}

\subsection{Multiclass classification}
\label{subsec:experiment_multiclass}
\paragraph{Setup}
We describe the experimental setup.
We compare four algorithms:  
Gaptron~\Citep{NEURIPS2020_Hoeven} with logistic loss,  
Gappletron~\Citep{NEURIPS2021_Hoeven} with logistic loss and hinge loss,  
and our algorithm in \cref{subsec:bandit_delay_general}. 
Theoretically, these methods have their own advantages: ours enjoys a surrogate regret bound of $O(\sqrt{KT})$, which is better than the $O(K\sqrt{T})$ bounds of the others; however, Gaptron/Gappletron can work with a broader class of surrogate losses.
This section aims to compare those methods from the empirical perspective.

\paragraph{Details of algorithms}
As the algorithm $\alg$ for updating the linear estimator, we employ the OGD in \cref{subsec:ogd}.
Following \citep{NEURIPS2021_Hoeven}, we use the learning rate of $\eta_t={B}/{\sqrt{2\prn{10^{-8}+\sum_{i=1}^{t}\nrm{\gtil}_{\mathrm{F}}^2}}}$ and no projection is performed in OGD.
Here, the addition of $10^{-8}$ to the denominator is to prevent division by zero. 
Although $B=\diam(\ww)$ is unknown, we fixed $B = 10$ in all experiments, regardless of whether this value represents the actual diameter.
All other parameters are set according to theoretical values.
Under these parameter settings, we repeat experiments $20$ times.

\subsubsection{Synthetic data}
We also run experiments on synthetic data to facilitate comparisons across different values of $\K$.
\paragraph{Data generation}
We describe the procedure for generating synthetic data.
The synthetic data were generated by using the same procedure as \Citet{NEURIPS2021_Hoeven}.
The input vector consists of a binary vector with entries of $0$ and $1$, and is composed of two parts.
The first part corresponds to a unique feature vector associated with the label, and the second part is randomly selected and unrelated to the label.
Specifically, the data is generated as follows.
We generate $\K \in \mathbb{N}$ unique feature vectors of length $10n^{\prime}$ as follows.
First, we randomly select an integer $s$ uniformly from the range $[n^{\prime}, 5n^{\prime}]$,  
then randomly choose $s$ elements from a zero vector of length $10n^{\prime}$ and set them to $1$.
The input vector is obtained by concatenating the feature vector of a randomly chosen class with a vector of length $30n^{\prime}$, in which exactly $5n^{\prime}$ elements are randomly set to $1$.
Additionally, with probability $r$, the corresponding class label is replaced with a randomly chosen label to introduce noise.
The resulting input vector thus has length $n = 40n^{\prime}$.
These input vectors are generated for $T$ rounds.    
Based on this procedure, we create datasets for $n^{\prime} \in \set{2, 4}$ and $r \in \set{0.0, 0.1}$.

\paragraph{Results}
The results on the synthetic data are shown in \cref{fig:regret_compare_app}.
Our algorithm achieves comparable or better performance than the existing algorithms for datasets with $\K \leq 24$.
In contrast, when $\K = 48$ or $96$, the cumulative losses of our algorithm are larger than those of Gaptron with the hinge loss and Gappletron with the logistic loss.
Note that these observations do not contradict the theoretical results: for large $\K$, the upper bound on the cumulative 0–1 loss of Gappletron can be tighter than ours because of differences in the surrogate loss functions (see \cref{app:Discussio_on_the_Difference_in_Surrogate_Losses} for details).
Nevertheless, our structured prediction method does not fully demonstrate its potential in this setting, as the setup favors algorithms specialized for multiclass classification.
It is also worth noting that by using the same decoding function as theirs, our approach can achieve the same order of the cumulative losses in online multiclass classification with bandit feedback.

\subsubsection{Real-world data}
We also evaluate the algorithms on the MNIST dataset~\citep{lecun2010mnist}, a widely used benchmark of handwritten digit images.
\paragraph{Result}
The box plot in \cref{fig:experiment mnist} summarizes the misclassification rates.
It shows that our method achieves the lowest misclassification rate,  
even though it is not specifically designed for multiclass classification, outperforming the existing algorithms on this real dataset with $K = 10$.

\begin{figure}[t]
  \centering
  \includegraphics[width=0.5\columnwidth]{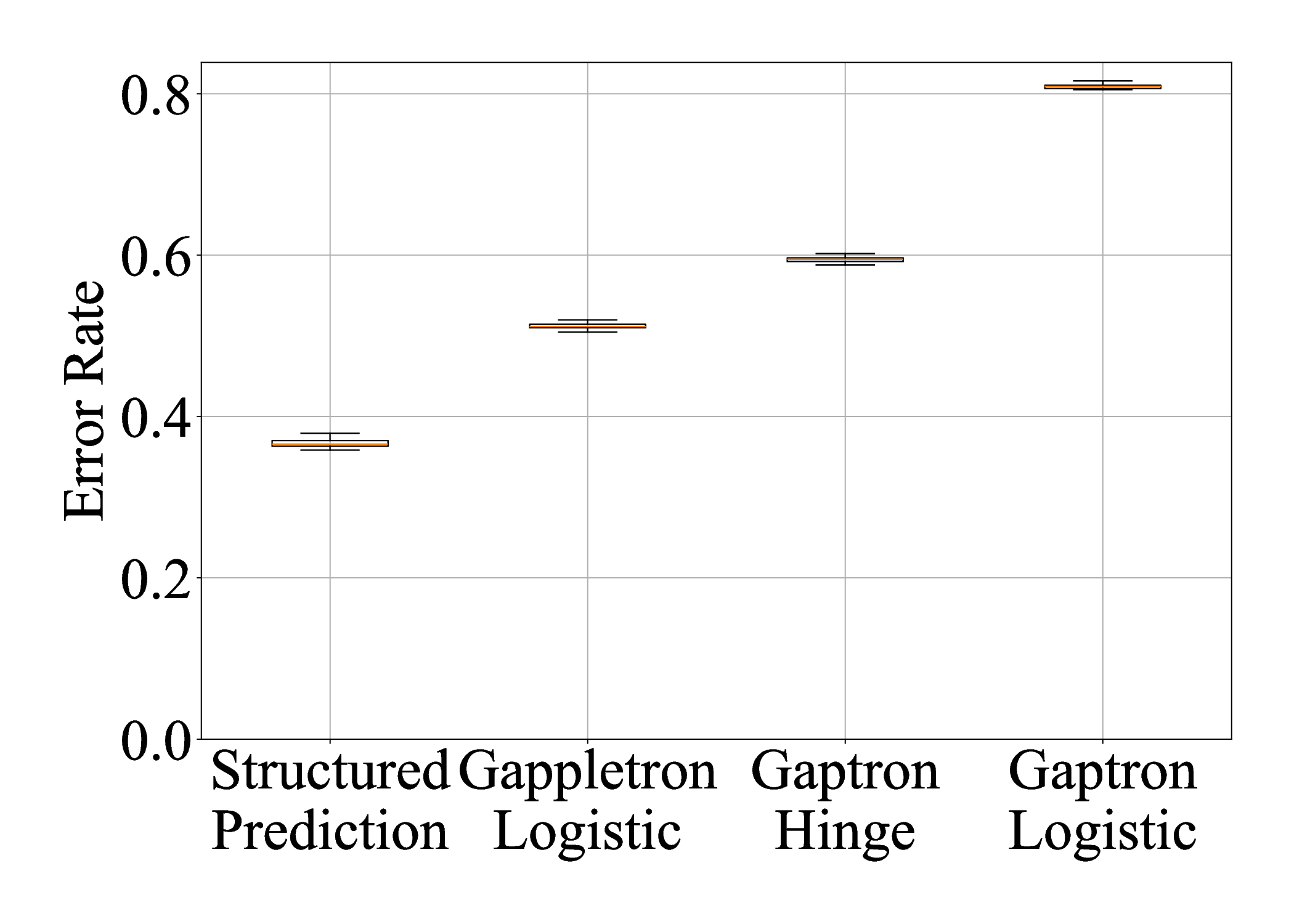}
  \caption{A box plot of error rates of the MNIST experiment for multiclass classification with bandit feedback.}
  \label{fig:experiment mnist}
\end{figure}

\begin{figure}[t]
    \centering
    \includegraphics[keepaspectratio, scale=0.3]{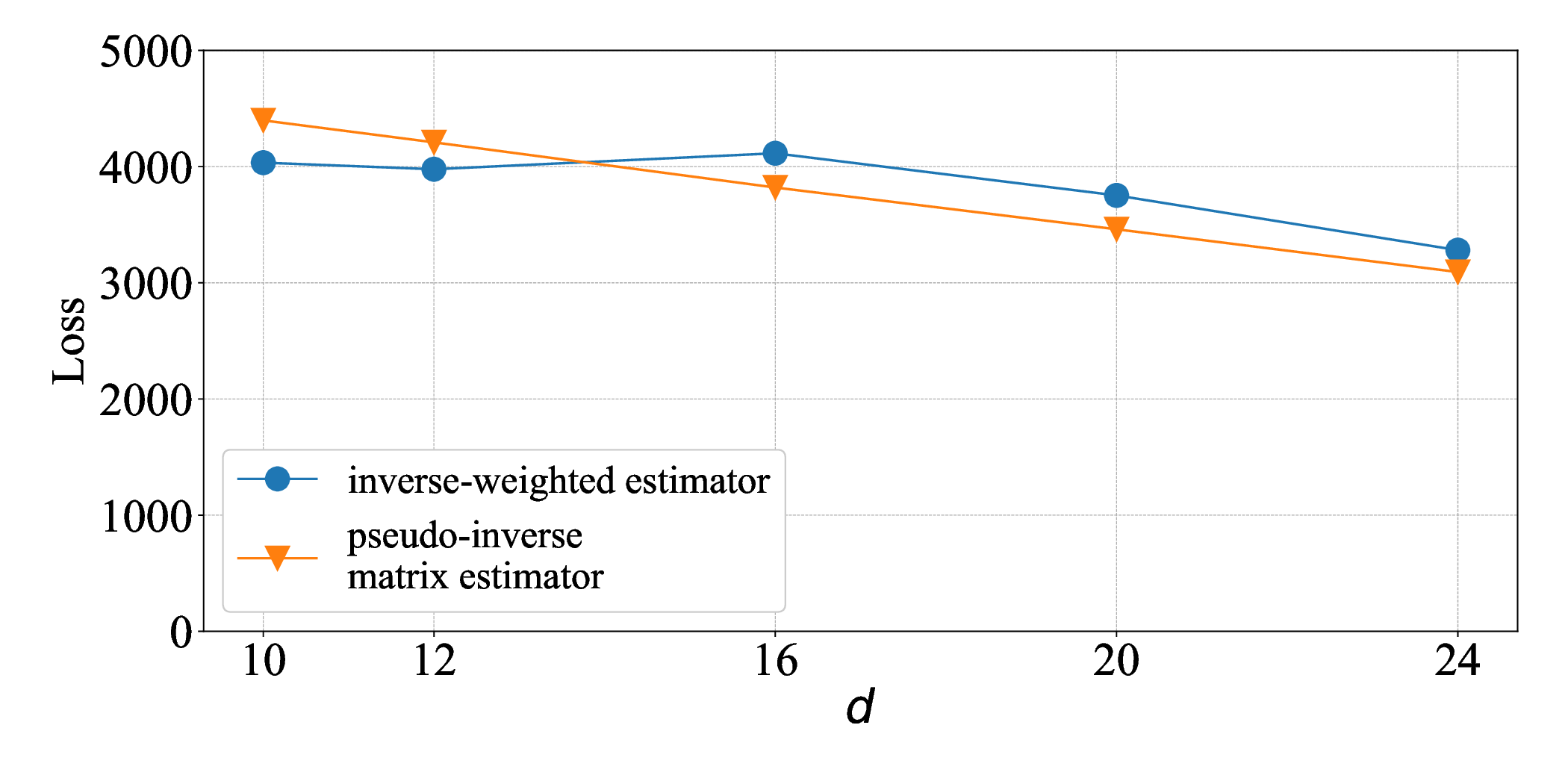}
    \caption{Results of the synsetic experiments in multilabel classification with bandit feedback. The horizontal axis shows the number of labels, and the vertical axis indicates the cumulative target loss.}
    \label{fig:multilabel}
\end{figure}

\subsection{{Multilabel classification}}
\label{subsec:experiment_multilabel}
In the results presented in \cref{sec:bandit}, the algorithm based on the pseudo-inverse matrix estimator achieves a tighter upper bound in its dependence on $\K$ compared to the one based on the inverse-weighted gradient estimator.
To examine whether this theoretical result can also be observed empirically, we conduct experiments on multilabel classification with a fixed number of correct labels.

\paragraph{Setup}
We compare two algorithms: the algorithm based on the inverse-weighted gradient estimator in \cref{subsec:Bandit_Structured_Prediction_with_General_Losses} and the one based on the pseudo-inverse matrix estimator in \cref{subsec:Bandit_Structured_Prediction_with_SELF}.

\paragraph{Data generation}
We generate synthetic data using the multilabel classification data-generation function in \emph{scikit-learn}~\citep{scikit-learn}.
Specifically, we employ the \texttt{make\_multilabel\_classification} method in scikit-learn to generate $T$ multilabel samples with feature dimension $n$, label dimension $d$, and an average of $m$ correct labels per sample.
We then extract only those samples that have exactly $m$ correct labels and repeat this process until we obtain $T = 10^4$ such samples.
Based on this procedure, we create datasets with $n = 50$, $d \in \crl{10, 12, 16, 20, 24}$, $m = 5$, and $T = 10^4$.

\paragraph{Details of algorithms}
As the algorithm $\alg$ for updating the linear estimator, we employ OGD as described in \cref{subsec:ogd} with learning rate 
$\eta_t = {B}/{\sqrt{2\prn{10^{-8} + \sum_{i=1}^{t}\nrm{\gtil}_{\mathrm{F}}^2}}}$ and orthogonal projection. 
The small constant $10^{-8}$ in the denominator prevents division by zero.
We fixed $B = 50$ for all experiments, and the other parameters were set according to their theoretical values.
Under these settings, each experiment was repeated $10$ times.

\paragraph{Results}
The results are shown in \cref{fig:multilabel}.
When $d$ is small, the algorithm based on the inverse-weighted gradient estimator incurs a smaller loss, whereas when $d$ is large, the algorithm based on the pseudo-inverse matrix estimator performs better.
The superiority of the inverse-weighted gradient estimator for small $d$ aligns with the theoretical result that it has a more favorable dependence on $T$.
Similarly, the better performance of the pseudo-inverse matrix estimator for large $d$ agrees with the theoretical result that it does not depend explicitly on $\K$.
These experimental results thus provide empirical support for our theoretical findings.

\newpage

\section*{NeurIPS Paper Checklist}

\begin{enumerate}

\item {\bf Claims}
    \item[] Question: Do the main claims made in the abstract and introduction accurately reflect the paper's contributions and scope?
    \item[] Answer: \answerYes{} 
    \item[] Justification: In the abstract and introduction, we claim that we study online structured prediction, present algorithms for bandit and/or delayed feedback, and analyze their surrogate regret bounds. Those are the contributions of this work.
    \item[] Guidelines:
    \begin{itemize}
        \item The answer NA means that the abstract and introduction do not include the claims made in the paper.
        \item The abstract and/or introduction should clearly state the claims made, including the contributions made in the paper and important assumptions and limitations. A No or NA answer to this question will not be perceived well by the reviewers. 
        \item The claims made should match theoretical and experimental results, and reflect how much the results can be expected to generalize to other settings. 
        \item It is fine to include aspirational goals as motivation as long as it is clear that these goals are not attained by the paper. 
    \end{itemize}

\item {\bf Limitations}
    \item[] Question: Does the paper discuss the limitations of the work performed by the authors?
    \item[] Answer: \answerYes{} 
    \item[] Justification: Limitations are discussed in \cref{sec:conclusion}.
    \item[] Guidelines:
    \begin{itemize}
        \item The answer NA means that the paper has no limitation while the answer No means that the paper has limitations, but those are not discussed in the paper. 
        \item The authors are encouraged to create a separate "Limitations" section in their paper.
        \item The paper should point out any strong assumptions and how robust the results are to violations of these assumptions (e.g., independence assumptions, noiseless settings, model well-specification, asymptotic approximations only holding locally). The authors should reflect on how these assumptions might be violated in practice and what the implications would be.
        \item The authors should reflect on the scope of the claims made, e.g., if the approach was only tested on a few datasets or with a few runs. In general, empirical results often depend on implicit assumptions, which should be articulated.
        \item The authors should reflect on the factors that influence the performance of the approach. For example, a facial recognition algorithm may perform poorly when image resolution is low or images are taken in low lighting. Or a speech-to-text system might not be used reliably to provide closed captions for online lectures because it fails to handle technical jargon.
        \item The authors should discuss the computational efficiency of the proposed algorithms and how they scale with dataset size.
        \item If applicable, the authors should discuss possible limitations of their approach to address problems of privacy and fairness.
        \item While the authors might fear that complete honesty about limitations might be used by reviewers as grounds for rejection, a worse outcome might be that reviewers discover limitations that aren't acknowledged in the paper. The authors should use their best judgment and recognize that individual actions in favor of transparency play an important role in developing norms that preserve the integrity of the community. Reviewers will be specifically instructed to not penalize honesty concerning limitations.
    \end{itemize}

\item {\bf Theory assumptions and proofs}
    \item[] Question: For each theoretical result, does the paper provide the full set of assumptions and a complete (and correct) proof?
    \item[] Answer: \answerYes{} 
    \item[] Justification: We present the assumptions in \cref{sec:preliminaries} and in the beginning of each relevant section. Theoretical results are followed by proofs, though some of them are deferred to the appendix due to space limitation.
    \item[] Guidelines:
    \begin{itemize}
        \item The answer NA means that the paper does not include theoretical results. 
        \item All the theorems, formulas, and proofs in the paper should be numbered and cross-referenced.
        \item All assumptions should be clearly stated or referenced in the statement of any theorems.
        \item The proofs can either appear in the main paper or the supplemental material, but if they appear in the supplemental material, the authors are encouraged to provide a short proof sketch to provide intuition. 
        \item Inversely, any informal proof provided in the core of the paper should be complemented by formal proofs provided in appendix or supplemental material.
        \item Theorems and Lemmas that the proof relies upon should be properly referenced. 
    \end{itemize}

    \item {\bf Experimental result reproducibility}
    \item[] Question: Does the paper fully disclose all the information needed to reproduce the main experimental results of the paper to the extent that it affects the main claims and/or conclusions of the paper (regardless of whether the code and data are provided or not)?
    \item[] Answer: \answerYes{} 
    \item[] Justification: We provide details of the experiments in \cref{app: experiment}.
    \item[] Guidelines:
    \begin{itemize}
        \item The answer NA means that the paper does not include experiments.
        \item If the paper includes experiments, a No answer to this question will not be perceived well by the reviewers: Making the paper reproducible is important, regardless of whether the code and data are provided or not.
        \item If the contribution is a dataset and/or model, the authors should describe the steps taken to make their results reproducible or verifiable. 
        \item Depending on the contribution, reproducibility can be accomplished in various ways. For example, if the contribution is a novel architecture, describing the architecture fully might suffice, or if the contribution is a specific model and empirical evaluation, it may be necessary to either make it possible for others to replicate the model with the same dataset, or provide access to the model. In general. releasing code and data is often one good way to accomplish this, but reproducibility can also be provided via detailed instructions for how to replicate the results, access to a hosted model (e.g., in the case of a large language model), releasing of a model checkpoint, or other means that are appropriate to the research performed.
        \item While NeurIPS does not require releasing code, the conference does require all submissions to provide some reasonable avenue for reproducibility, which may depend on the nature of the contribution. For example
        \begin{enumerate}
            \item If the contribution is primarily a new algorithm, the paper should make it clear how to reproduce that algorithm.
            \item If the contribution is primarily a new model architecture, the paper should describe the architecture clearly and fully.
            \item If the contribution is a new model (e.g., a large language model), then there should either be a way to access this model for reproducing the results or a way to reproduce the model (e.g., with an open-source dataset or instructions for how to construct the dataset).
            \item We recognize that reproducibility may be tricky in some cases, in which case authors are welcome to describe the particular way they provide for reproducibility. In the case of closed-source models, it may be that access to the model is limited in some way (e.g., to registered users), but it should be possible for other researchers to have some path to reproducing or verifying the results.
        \end{enumerate}
    \end{itemize}

\item {\bf Open access to data and code}
    \item[] Question: Does the paper provide open access to the data and code, with sufficient instructions to faithfully reproduce the main experimental results, as described in supplemental material?
    \item[] Answer: \answerYes{} 
    \item[] Justification: We provide the code and data used in the experiments as supplemental material. 
    \item[] Guidelines:
    \begin{itemize}
        \item The answer NA means that paper does not include experiments requiring code.
        \item Please see the NeurIPS code and data submission guidelines (\url{https://nips.cc/public/guides/CodeSubmissionPolicy}) for more details.
        \item While we encourage the release of code and data, we understand that this might not be possible, so “No” is an acceptable answer. Papers cannot be rejected simply for not including code, unless this is central to the contribution (e.g., for a new open-source benchmark).
        \item The instructions should contain the exact command and environment needed to run to reproduce the results. See the NeurIPS code and data submission guidelines (\url{https://nips.cc/public/guides/CodeSubmissionPolicy}) for more details.
        \item The authors should provide instructions on data access and preparation, including how to access the raw data, preprocessed data, intermediate data, and generated data, etc.
        \item The authors should provide scripts to reproduce all experimental results for the new proposed method and baselines. If only a subset of experiments are reproducible, they should state which ones are omitted from the script and why.
        \item At submission time, to preserve anonymity, the authors should release anonymized versions (if applicable).
        \item Providing as much information as possible in supplemental material (appended to the paper) is recommended, but including URLs to data and code is permitted.
    \end{itemize}

\item {\bf Experimental setting/details}
    \item[] Question: Does the paper specify all the training and test details (e.g., data splits, hyperparameters, how they were chosen, type of optimizer, etc.) necessary to understand the results?
    \item[] Answer: \answerYes{} 
    \item[] Justification: We provide the training/test details in \cref{app: experiment}.
    \item[] Guidelines:
    \begin{itemize}
        \item The answer NA means that the paper does not include experiments.
        \item The experimental setting should be presented in the core of the paper to a level of detail that is necessary to appreciate the results and make sense of them.
        \item The full details can be provided either with the code, in appendix, or as supplemental material.
    \end{itemize}

\item {\bf Experiment statistical significance}
    \item[] Question: Does the paper report error bars suitably and correctly defined or other appropriate information about the statistical significance of the experiments?
    \item[] Answer: \answerNo{} 
    \item[] Justification: The focus of this study is on theory, and the experiments are provided for supplementary purposes.
    \item[] Guidelines:
    \begin{itemize}
        \item The answer NA means that the paper does not include experiments.
        \item The authors should answer "Yes" if the results are accompanied by error bars, confidence intervals, or statistical significance tests, at least for the experiments that support the main claims of the paper.
        \item The factors of variability that the error bars are capturing should be clearly stated (for example, train/test split, initialization, random drawing of some parameter, or overall run with given experimental conditions).
        \item The method for calculating the error bars should be explained (closed form formula, call to a library function, bootstrap, etc.)
        \item The assumptions made should be given (e.g., Normally distributed errors).
        \item It should be clear whether the error bar is the standard deviation or the standard error of the mean.
        \item It is OK to report 1-sigma error bars, but one should state it. The authors should preferably report a 2-sigma error bar than state that they have a 96\% CI, if the hypothesis of Normality of errors is not verified.
        \item For asymmetric distributions, the authors should be careful not to show in tables or figures symmetric error bars that would yield results that are out of range (e.g. negative error rates).
        \item If error bars are reported in tables or plots, The authors should explain in the text how they were calculated and reference the corresponding figures or tables in the text.
    \end{itemize}

\item {\bf Experiments compute resources}
    \item[] Question: For each experiment, does the paper provide sufficient information on the computer resources (type of compute workers, memory, time of execution) needed to reproduce the experiments?
    \item[] Answer: \answerYes{} 
    \item[] Justification: We provide it in \cref{app: experiment}.
    \item[] Guidelines:
    \begin{itemize}
        \item The answer NA means that the paper does not include experiments.
        \item The paper should indicate the type of compute workers CPU or GPU, internal cluster, or cloud provider, including relevant memory and storage.
        \item The paper should provide the amount of compute required for each of the individual experimental runs as well as estimate the total compute. 
        \item The paper should disclose whether the full research project required more compute than the experiments reported in the paper (e.g., preliminary or failed experiments that didn't make it into the paper). 
    \end{itemize}
    
\item {\bf Code of ethics}
    \item[] Question: Does the research conducted in the paper conform, in every respect, with the NeurIPS Code of Ethics \url{https://neurips.cc/public/EthicsGuidelines}?
    \item[] Answer: \answerYes{} 
    \item[] Justification: The focus of this study is on theory, and the experiments are limited to simple synthetic data and the MNIST datasets. Thus, we do not violate the Neurips Code of Ethics.
    \item[] Guidelines:
    \begin{itemize}
        \item The answer NA means that the authors have not reviewed the NeurIPS Code of Ethics.
        \item If the authors answer No, they should explain the special circumstances that require a deviation from the Code of Ethics.
        \item The authors should make sure to preserve anonymity (e.g., if there is a special consideration due to laws or regulations in their jurisdiction).
    \end{itemize}

\item {\bf Broader impacts}
    \item[] Question: Does the paper discuss both potential positive societal impacts and negative societal impacts of the work performed?
    \item[] Answer: \answerNA{} 
    \item[] Justification: The focus of this study is on theory and does not have societal impacts.
    \item[] Guidelines:
    \begin{itemize}
        \item The answer NA means that there is no societal impact of the work performed.
        \item If the authors answer NA or No, they should explain why their work has no societal impact or why the paper does not address societal impact.
        \item Examples of negative societal impacts include potential malicious or unintended uses (e.g., disinformation, generating fake profiles, surveillance), fairness considerations (e.g., deployment of technologies that could make decisions that unfairly impact specific groups), privacy considerations, and security considerations.
        \item The conference expects that many papers will be foundational research and not tied to particular applications, let alone deployments. However, if there is a direct path to any negative applications, the authors should point it out. For example, it is legitimate to point out that an improvement in the quality of generative models could be used to generate deepfakes for disinformation. On the other hand, it is not needed to point out that a generic algorithm for optimizing neural networks could enable people to train models that generate Deepfakes faster.
        \item The authors should consider possible harms that could arise when the technology is being used as intended and functioning correctly, harms that could arise when the technology is being used as intended but gives incorrect results, and harms following from (intentional or unintentional) misuse of the technology.
        \item If there are negative societal impacts, the authors could also discuss possible mitigation strategies (e.g., gated release of models, providing defenses in addition to attacks, mechanisms for monitoring misuse, mechanisms to monitor how a system learns from feedback over time, improving the efficiency and accessibility of ML).
    \end{itemize}
    
\item {\bf Safeguards}
    \item[] Question: Does the paper describe safeguards that have been put in place for responsible release of data or models that have a high risk for misuse (e.g., pretrained language models, image generators, or scraped datasets)?
    \item[] Answer: \answerNA{} 
    \item[] Justification: Our experiments are for validation purpose, and do not involve any such risks.
    \item[] Guidelines:
    \begin{itemize}
        \item The answer NA means that the paper poses no such risks.
        \item Released models that have a high risk for misuse or dual-use should be released with necessary safeguards to allow for controlled use of the model, for example by requiring that users adhere to usage guidelines or restrictions to access the model or implementing safety filters. 
        \item Datasets that have been scraped from the Internet could pose safety risks. The authors should describe how they avoided releasing unsafe images.
        \item We recognize that providing effective safeguards is challenging, and many papers do not require this, but we encourage authors to take this into account and make a best faith effort.
    \end{itemize}

\item {\bf Licenses for existing assets}
    \item[] Question: Are the creators or original owners of assets (e.g., code, data, models), used in the paper, properly credited and are the license and terms of use explicitly mentioned and properly respected?
    \item[] Answer: \answerNA{} 
    \item[] Justification: We do not use existing assets.
    \item[] Guidelines:
    \begin{itemize}
        \item The answer NA means that the paper does not use existing assets.
        \item The authors should cite the original paper that produced the code package or dataset.
        \item The authors should state which version of the asset is used and, if possible, include a URL.
        \item The name of the license (e.g., CC-BY 4.0) should be included for each asset.
        \item For scraped data from a particular source (e.g., website), the copyright and terms of service of that source should be provided.
        \item If assets are released, the license, copyright information, and terms of use in the package should be provided. For popular datasets, \url{paperswithcode.com/datasets} has curated licenses for some datasets. Their licensing guide can help determine the license of a dataset.
        \item For existing datasets that are re-packaged, both the original license and the license of the derived asset (if it has changed) should be provided.
        \item If this information is not available online, the authors are encouraged to reach out to the asset's creators.
    \end{itemize}

\item {\bf New assets}
    \item[] Question: Are new assets introduced in the paper well documented and is the documentation provided alongside the assets?
    \item[] Answer: \answerNA{} 
    \item[] Justification: The focus of this study is on theory and does not introduce new assets.
    \item[] Guidelines:
    \begin{itemize}
        \item The answer NA means that the paper does not release new assets.
        \item Researchers should communicate the details of the dataset/code/model as part of their submissions via structured templates. This includes details about training, license, limitations, etc. 
        \item The paper should discuss whether and how consent was obtained from people whose asset is used.
        \item At submission time, remember to anonymize your assets (if applicable). You can either create an anonymized URL or include an anonymized zip file.
    \end{itemize}

\item {\bf Crowdsourcing and research with human subjects}
    \item[] Question: For crowdsourcing experiments and research with human subjects, does the paper include the full text of instructions given to participants and screenshots, if applicable, as well as details about compensation (if any)? 
    \item[] Answer: \answerNA{} 
    \item[] Justification: The focus of this study is on theory and does not involve crowdsourcing nor research with human subjects.
    \item[] Guidelines:
    \begin{itemize}
        \item The answer NA means that the paper does not involve crowdsourcing nor research with human subjects.
        \item Including this information in the supplemental material is fine, but if the main contribution of the paper involves human subjects, then as much detail as possible should be included in the main paper. 
        \item According to the NeurIPS Code of Ethics, workers involved in data collection, curation, or other labor should be paid at least the minimum wage in the country of the data collector. 
    \end{itemize}

\item {\bf Institutional review board (IRB) approvals or equivalent for research with human subjects}
    \item[] Question: Does the paper describe potential risks incurred by study participants, whether such risks were disclosed to the subjects, and whether Institutional Review Board (IRB) approvals (or an equivalent approval/review based on the requirements of your country or institution) were obtained?
    \item[] Answer: \answerNA{} 
    \item[] Justification: The focus of this study is on theory and does not involve crowdsourcing nor research with human subjects.
    \item[] Guidelines:
    \begin{itemize}
        \item The answer NA means that the paper does not involve crowdsourcing nor research with human subjects.
        \item Depending on the country in which research is conducted, IRB approval (or equivalent) may be required for any human subjects research. If you obtained IRB approval, you should clearly state this in the paper. 
        \item We recognize that the procedures for this may vary significantly between institutions and locations, and we expect authors to adhere to the NeurIPS Code of Ethics and the guidelines for their institution. 
        \item For initial submissions, do not include any information that would break anonymity (if applicable), such as the institution conducting the review.
    \end{itemize}

\item {\bf Declaration of LLM usage}
    \item[] Question: Does the paper describe the usage of LLMs if it is an important, original, or non-standard component of the core methods in this research? Note that if the LLM is used only for writing, editing, or formatting purposes and does not impact the core methodology, scientific rigorousness, or originality of the research, declaration is not required.
    \item[] Answer: \answerNA{} 
    \item[] Justification: The core method development in this research does not involve LLMs as any important, original, or non-standard components.
    \item[] Guidelines:
    \begin{itemize}
        \item The answer NA means that the core method development in this research does not involve LLMs as any important, original, or non-standard components.
        \item Please refer to our LLM policy (\url{https://neurips.cc/Conferences/2025/LLM}) for what should or should not be described.
    \end{itemize}

\end{enumerate}

\end{document}